\theoremstyle{plain}
\newtheorem{theorem}{Theorem}[section]
\newtheorem{lemma}[theorem]{Lemma}
\newtheorem{corollary}[theorem]{Corollary}
\theoremstyle{definition}
\newtheorem{definition}[theorem]{Definition}
\newtheorem{assumption}[theorem]{Assumption}
\theoremstyle{remark}
\def\reals{\bR}
\def\reals{\bR}
\def\stylefactor{\bF_\text{S}}
\def\contentfactor{\bF_\text{C}}
\def\method{\textsc{PISCO}}
\icmltitlerunning{Simple Disentanglement of Style and Content in Visual Representations}
\begin{document}


\twocolumn[
\icmltitle{Simple Disentanglement of Style and Content in Visual Representations}



\icmlsetsymbol{equal}{*}

\begin{icmlauthorlist}
\icmlauthor{Lilian Ngweta}{equal,yyy}
\icmlauthor{Subha Maity}{equal,xxx}
\icmlauthor{Alex Gittens}{yyy}
\icmlauthor{Yuekai Sun}{xxx}
\icmlauthor{Mikhail Yurochkin}{comp,lab}
\end{icmlauthorlist}

\icmlaffiliation{yyy}{Department of Computer Science, Rensselaer Polytechnic Institute, Troy, New York, United States}
\icmlaffiliation{xxx}{Department of Statistics, University of Michigan, Ann Arbor, Michigan, United States}
\icmlaffiliation{comp}{IBM Research, Cambridge, Massachusetts, United States}
\icmlaffiliation{lab}{MIT-IBM Watson AI Lab, Cambridge, Massachusetts, United States}

\icmlcorrespondingauthor{Lilian Ngweta}{ngwetl@rpi.edu}
\icmlcorrespondingauthor{Subha Maity}{smaity@umich.edu}
\icmlcorrespondingauthor{Mikhail Yurochkin}{mikhail.yurochkin@ibm.com}

\icmlkeywords{disentanglement, image recognition, distribution shifts}

\vskip 0.3in
]



\printAffiliationsAndNotice{\icmlEqualContribution} 

\begin{abstract}
Learning visual representations with interpretable features, i.e., disentangled representations, remains a challenging problem. Existing methods demonstrate some success but are hard to apply to large-scale vision datasets like ImageNet. In this work, we propose a simple post-processing framework to disentangle content and style in learned representations from pre-trained vision models. We model the pre-trained features probabilistically as linearly entangled combinations of the latent content and style factors and develop a simple disentanglement algorithm based on the probabilistic model. We show that the method provably disentangles content and style features and verify its efficacy empirically. Our post-processed features yield significant domain generalization performance improvements when the distribution shift occurs due to style changes or style-related spurious correlations. 
\end{abstract}

\section{Introduction}
\label{sec:intro}

\begin{figure}
    \centering
    \includegraphics[scale=0.18]{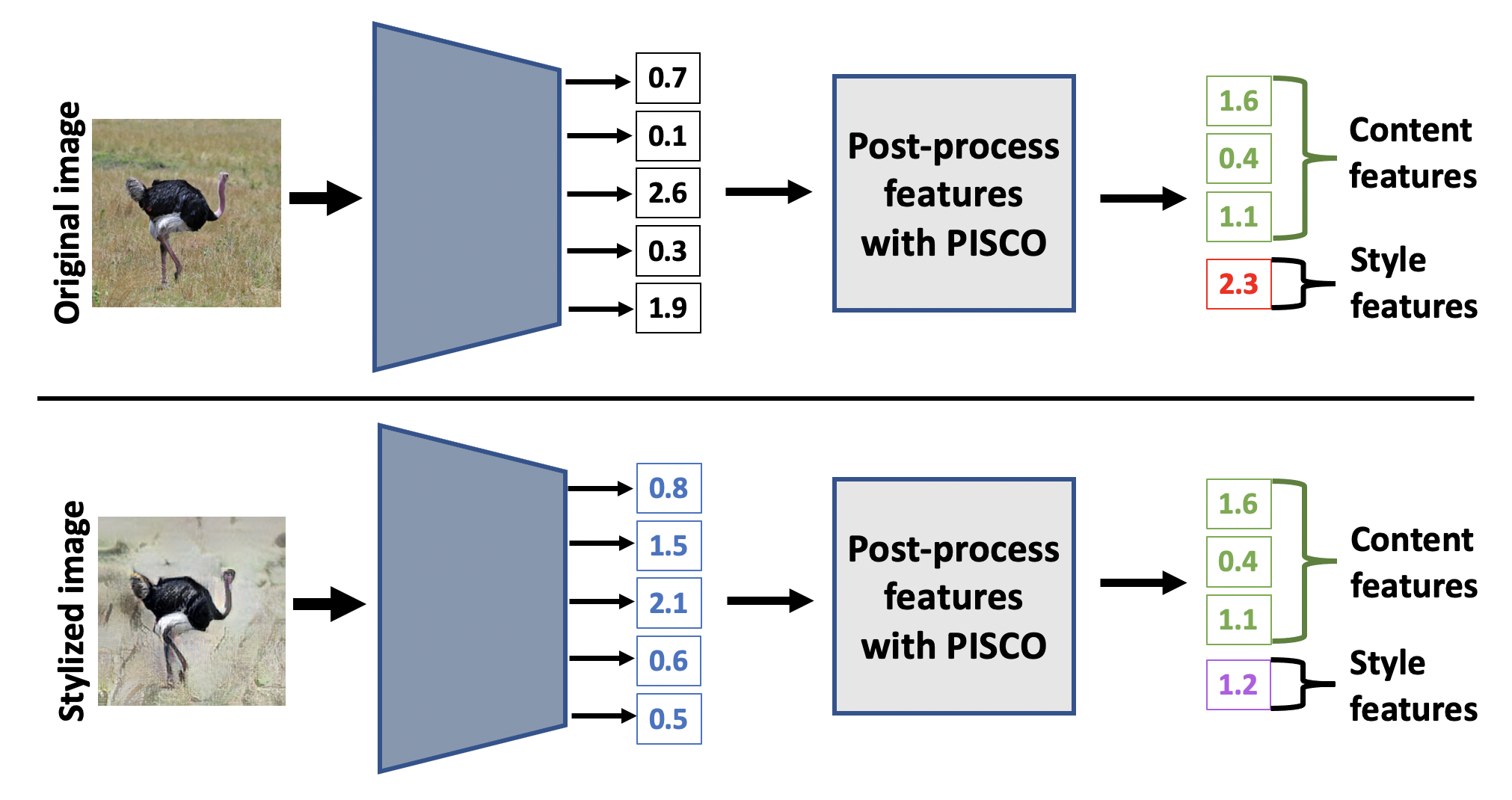}
    \caption{Illustration of the proposed method, PISCO. Features of an original image and features of a stylized image are different when extracted using a feature extractor such as ResNet-50 pre-trained in ImageNet. These features are entangled, thus changing the style affects all features. 
    When PISCO is used to disentangle these features, it isolates style features and content features, thus the content features of the two images are the same and only the style features are different. }
    \label{fig:pisco_diagram}
\end{figure}


Deep learning models produce data representations that are useful for many downstream tasks. Disentangled representations, i.e. representations where coordinates have meaningful interpretations, are harder to learn \citep{locatello2019challenging} but they come with many additional benefits, e.g., data-efficiency \citep{higgins2018towards} and use-cases in causality \citep{scholkopf2021toward}, fairness \citep{locatello2019fairness}, recommender systems \citep{ma2019learning}, and image \citep{lee2018diverse} and text \citep{john2018disentangled} processing.

In this paper, we consider the problem of isolating content from style in visual representations \citep{wu2019disentangling,nemeth2020adversarial,ren2021rethinking,kugelgen2021self}, a special case of learning disentangled representations. Here we use the term style to refer to features or factors that are not causally related to the outcome of interest. We also note prior works that, different from our work, study style in the context of image appearance \citep{garcia2018read, saleh2015large, ruta2022stylebabel, ruta2021aladin}. Our goal is to obtain representations where a pre-specified set of factors (coordinates) encodes image ``styles'' (e.g., rotation, color scheme, or style transfer \citep{huang2017arbitrary}), while the remaining factors encode content and are invariant to style changes (see Figure \ref{fig:pisco_diagram}).

An important application of such representations is out-of-distribution (OOD) generalization. Image recognition systems have been demonstrated to be susceptible to spurious correlations associated with style, e.g., due to background colors \citep{beery2018recognition,sagawa2019Distributionally}, and to various style-based distribution shifts, e.g., due to image corruptions \citep{hendrycks2018Benchmarking}, illumination, or camera angle differences \citep{koh2020WILDS}. Simply discarding style factors when training a prediction model on disentangled representations can aid OOD generalization. Disentangling content from style is also advantageous in many other applications, e.g., image retrieval \citep{wu2009Genomewide}, image-to-image translation \citep{ren2021rethinking}, and visually-aware recommender systems \citep{deldjoo2022leveraging}. 


While there is abundant literature on learning disentangled representations, most statistically principled methods fit sophisticated generative models \citep{bouchacourt2018multi,hosoya2018group,shu2019weakly,wu2019disentangling,locatello2020weakly}. These methods work well on synthetic and smaller datasets but are hard to train on larger datasets like ImageNet \citep{russakovsky2015imagenet}.
This is in stark contrast to representation learning practice; the most common representation learning methods only learn an encoder (\eg\ SimCLR \citep{chen2020simple}). That said, there are some recent works that consider how to learn disentangled encoders \citep{zimmermann2021contrastive,kugelgen2021self,wang2021self}. 

Specific to style and content, \citet{kugelgen2021self} show that contrastive learning, e.g., SimCLR \citep{chen2020simple}, \emph{theoretically} can isolate style and content, i.e. learn representations that are invariant to style. Contrastive learning methods are gaining popularity due to their ability to learn high-quality representations from large image datasets without labels via self-supervision \citep{doersch2015unsupervised,chen2020simple,chen2020weakly,grill2020bootstrap,chen2021exploring}. Unfortunately, style invariance of contrastive learning representations is rarely achieved in practice due to a variety of additional requirements that are hard to control for (see Section 5 and Appendix C.1 in \citet{kugelgen2021self}).


Most of the prior works are in-processing methods that train an end-to-end encoder from scratch. On the other hand, we focus on post-processing representations from a pre-trained deep model (which may not be disentangled) so that they become provably disentangled.
The post-processing setup is appealing as it allows re-using large pre-trained models, thus reducing the carbon footprint of training new large models \citep{strubell2019energy} and making deep learning more accessible to practitioners with limited computing budgets. Post-processing problem setups are prominent in the algorithmic fairness literature 
\citep{wei2019Optimized,petersen2021post}.

To develop our post-processing setup for learning disentangled representations, we assume that the pre-trained representations are simply an \emph{invertible} linear transformation of the style and content factors (\cf\ Assumption \ref{assmp:entangled-rep}). While the linear model assumption may appear too simple at a first glance, it is motivated by the result of \citet{zimmermann2021contrastive} showing that contrastive learning recovers true data-generating factors up to an \emph{orthogonal} transformation. The 
representation may not come from a contrastive learning model or assumptions of \citet{zimmermann2021contrastive} might be violated in practice, thus we consider a more general class of linear invertible transformations in our model which we justify theoretically and verify empirically. Our contributions are summarized below:

\begin{itemize}
\item We formulate a simple linear model of entanglement in pre-trained visual representations and a corresponding method for Post-processing to Isolate Style and COntent (\method).
\item We establish theoretical guarantees that \method\ learns disentangled style and content factors and recovers correlations among styles. Our theory is supported by a synthetic dataset study.
\item We verify the ability of \method\ to disentangle style and content on three image datasets of varying size and complexity via post-processing of various pre-trained deep visual feature extractors. In our experiments, discarding the learned style factors yields significant out-of-distribution performance improvements while preserving the in-distribution accuracy.
\end{itemize}


\section{Problem formulation}

In light of the recent success of contrastive learning techniques for obtaining self-supervised embeddings, \citet{zimmermann2021contrastive} have performed a theoretical investigation on the InfoNCE family \citep{gutmann2012noise,oord2018representation,chen2020simple} of contrastive losses. Under some distributional assumptions, their investigation reveals that  InfoNCE loss can invert the underlying generative model of the observed data. More specifically, given an observed data $\bx = g(\bz)$ where $\bz$ and $g$ are correspondingly the underlying latent factors and the generative model, \citet{zimmermann2021contrastive} showed that InfoNCE loss finds a representation model $f$ such that $f\circ g(\bz) = \bR\bz$ for some orthogonal matrix $\bR$. Though this is quite welcoming news in nonlinear independent component analysis (ICA) literature \citep{hyvarinen1999nonlinear,hyvarinen2016unsupervised,jutten2010nonlinear}, the representation model may not be good enough for learning a disentangled representation. In fact, \citet{zimmermann2021contrastive} show that only under a very specific generative modeling assumption a type of contrastive objective can achieve disentanglement, and that disentanglement is lost when the assumptions are violated. 

At a high level, disentanglement in representation learning means the style and content factors are not affected by each other. Looking back at the result \citep{zimmermann2021contrastive} that contrastive loss can recover the generative latent factors up to an unknown rotation, an implication is that disentanglement in the learned representation may not be achieved. However, all is not lost; we suggest a simple post-processing method for the learned representations and show that it achieves the desired disentanglement.

We now formally describe our post-processing setup. We denote the space of the latent factor as $\cZ\subset \reals^{d}$ and assume that the latent factor is being generated from a  probability distribution $\bbP_\bz$ on $\cZ$. Similar to \citet{zimmermann2021contrastive} we assume that there exists a one-to-one generative map $g$ such that the observed data is generated as $\cX \ni \bx = g(\bz), ~ \bz \sim \bbP_\bz$. The next assumption is crucial for our linear post-processing technique and is motivated by the finding in \citet{zimmermann2021contrastive}. 
\begin{assumption}
\label{assmp:entangled-rep}
There exists a representation map $f : \cX \to \cZ' \subset \reals^{d'}$ such that $f\circ g(\bz) = \bA\bz$ for some left invertible matrix $\bA \in \reals^{d'\times d}$. 
\end{assumption} 
An example of such $f$ could be the representation model learned from InfoNCE loss minimization, where \citet{zimmermann2021contrastive} showed that the assumption is true for $\cZ' = \cZ$ and $\bA$ is an orthogonal matrix. A consequence of left invertibility for $\bA$ is that $d' \ge d$, \ie, \ the dimension of learned representation could be potentially higher than that of the generating latent factors, which is often natural to assume in many applications.  

Throughout the paper, we denote $f\circ g(\bz)$ as $\bu$ and call it \emph{entangled representation}. With this setup, we're now ready to formally specify the disentanglement (also known as sparse recovery) in representation learning. 

\begin{definition}[Disentangled representation learning/sparse recovery]
\label{def:direp}
Let us denote $\stylefactor \subset [d] \triangleq \{1, 2, \dots, d\}$ as the set of \textbf{style factors} and it's cardinality as $m \triangleq |\stylefactor|$. We denote the remaining factors $\contentfactor \triangleq [d] - \stylefactor$ and call them \textbf{content factors}.   For a matrix $\bP \in \reals^{d \times d'}$ we say \textbf{the linear post-processing $\bu \mapsto \hat \bz \triangleq \bP \bu$ disentangles or sparsely recovers the style and content factors} if the following hold for the matrix $\bP \bA$:
\begin{enumerate}
    \item $[\bP \bA]_{\stylefactor, \stylefactor}$ is an $m \times m$ diagonal matrix. 
    \item $[\bP \bA]_{\contentfactor, \contentfactor}$ is a $(d - m) \times (d - m )$ invertible matrix. 
    \item $[\bP \bA]_{\stylefactor, \contentfactor}$ and $[\bP \bA]_{\contentfactor, \stylefactor}$ are $m \times (d - m)$ and $(d - m) \times m$ null matrices. 
\end{enumerate}
In other words, $\hat \bz = \bP_\cS \bu$ is disentangled or sparsely recovered in $\stylefactor$ if for any $j \in \stylefactor$ the coordinate $[\hat \bz]_{j}$ is a constant multiplication of $[\bz]_j$ and $[ \hat \bz]_{\contentfactor}$ is just a pre-multiplication of  $[\bz]_{\contentfactor}$ by an invertible matrix. 
\end{definition}

Without loss of generality we assume that $\stylefactor = [m]$. Next, we highlight a conclusion of the sparse recovery, which has a connection to independent component analysis (ICA). 

\begin{corollary}[Correlation recovery]
\label{cor:correlation-recovery}
One of the conclusions of sparse recovery is that the estimated style factors have the same correlation structure as the true style factors. Denoting $\operatorname{corr}(\bX)$ as the correlation matrix for a generic random vector $\bX$ the conclusion can be mathematically stated as
\begin{equation} \label{eq:corr-recovery}
    \operatorname{corr}([\hat \bz]_{\stylefactor}) = \operatorname{corr}([\bz]_{\stylefactor})\,.
\end{equation}
A proof of the statement is provided in \S\ref{sec:proof-sparse-recovery}.
In a special case connected to ICA, where  the true correlation distribution $\bbP_\bz$ has uncorrelated style factors, \ie, $\operatorname{corr}([\bz]_{\stylefactor}) = \bI_{m}$, then same is true for estimated style factors. 
\end{corollary}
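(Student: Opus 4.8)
The plan is to read off \eqref{eq:corr-recovery} from the block structure of $\bP\bA$ guaranteed by Definition~\ref{def:direp}, after which the only work is to check that the diagonal normalization built into a correlation matrix cancels the coordinatewise rescaling introduced by the post-processing. First I would unwind the definitions: by Assumption~\ref{assmp:entangled-rep} we have $\bu = f\circ g(\bz) = \bA\bz$, hence $\hat\bz = \bP\bu = (\bP\bA)\bz$. Restricting to the style coordinates $\stylefactor$ and invoking parts~1 and~3 of Definition~\ref{def:direp} --- that $[\bP\bA]_{\stylefactor,\stylefactor} =: \bD$ is diagonal and $[\bP\bA]_{\stylefactor,\contentfactor} = 0$ --- gives
\begin{equation*}
[\hat\bz]_{\stylefactor} = [\bP\bA]_{\stylefactor,\stylefactor}\,[\bz]_{\stylefactor} + [\bP\bA]_{\stylefactor,\contentfactor}\,[\bz]_{\contentfactor} = \bD\,[\bz]_{\stylefactor},
\end{equation*}
where $\bD = \operatorname{diag}(d_1,\dots,d_m)$ with nonzero entries $d_j$, so that each recovered style coordinate genuinely tracks the corresponding true one.

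Second, I would compute second moments. Assuming the style coordinates of $\bbP_\bz$ have finite and positive variances, bilinearity of covariance yields $\operatorname{cov}([\hat\bz]_{\stylefactor}) = \bD\,\operatorname{cov}([\bz]_{\stylefactor})\,\bD$, i.e.\ $\operatorname{cov}([\hat\bz]_i,[\hat\bz]_j) = d_i d_j\,\operatorname{cov}([\bz]_i,[\bz]_j)$ for $i,j\in\stylefactor$, and $\operatorname{var}([\hat\bz]_i) = d_i^2\,\operatorname{var}([\bz]_i)$. Substituting into $\operatorname{corr}(\bX)_{ij} = \operatorname{cov}(X_i,X_j)/\sqrt{\operatorname{var}(X_i)\operatorname{var}(X_j)}$, the factors $d_i^2$ and $d_j^2$ under the square root reduce to $|d_i|\,|d_j|$ and we obtain
\begin{equation*}
\operatorname{corr}([\hat\bz]_i,[\hat\bz]_j) = \frac{d_i d_j}{|d_i|\,|d_j|}\,\operatorname{corr}([\bz]_i,[\bz]_j),
\end{equation*}
equivalently $\operatorname{corr}([\hat\bz]_{\stylefactor}) = \bS\,\operatorname{corr}([\bz]_{\stylefactor})\,\bS$ with $\bS = \operatorname{diag}(\operatorname{sign} d_1,\dots,\operatorname{sign} d_m)$.

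Third, I would close the argument by eliminating the signs. The freedom in choosing $\bP$ in Definition~\ref{def:direp} (and, concretely, the normalization used inside \method) lets us rescale each style row of $\bP$ by $\operatorname{sign}(d_j)$ without affecting any of the three conditions there; this makes $\bD$ strictly positive, so $\bS = \bI_m$ and \eqref{eq:corr-recovery} follows. For the ICA special case no such convention is even needed: if $\operatorname{corr}([\bz]_{\stylefactor}) = \bI_m$ then $\operatorname{corr}([\hat\bz]_{\stylefactor}) = \bS\bI_m\bS = \bS^2 = \bI_m$ for any signs. The only genuine subtlety --- the ``hard part'' such as it is --- is precisely this sign bookkeeping: a correlation matrix is invariant under positive diagonal rescaling but its off-diagonal entries can flip under a mixed-sign rescaling, so the displayed identity implicitly presumes the recovered style factors are oriented consistently with the true ones, which is harmless and can be arranged by construction. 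Everything else is the routine fact that coordinatewise scaling of a random vector leaves its correlation matrix unchanged.
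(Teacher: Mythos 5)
Your argument follows the same route as the paper's proof in \S\ref{proof-correlation-recovery}: write $[\hat\bz]_{\stylefactor} = \bD[\bz]_{\stylefactor}$ with $\bD := [\bP\bA]_{\stylefactor,\stylefactor}$ diagonal, compute $\operatorname{cov}([\hat\bz]_{\stylefactor}) = \bD\,\operatorname{cov}([\bz]_{\stylefactor})\,\bD$, and show the diagonal normalization in the correlation cancels $\bD$. The one place you diverge — and actually improve on the paper — is the sign issue. The paper simplifies $\{\Delta\bD^2\}^{-1/2}$ to $\Delta^{-1/2}\bD^{-1}$, which silently assumes the diagonal entries of $\bD$ are positive; in general $\{\bD^2\}^{-1/2} = |\bD|^{-1}$, and since the entries of $\bD$ converge to the annotation slopes $\beta_j$ (Theorem~\ref{th:style-factors}) whose signs are arbitrary, the identity \eqref{eq:corr-recovery} really only holds up to a conjugation by $\bS = \operatorname{diag}(\operatorname{sign} d_1,\dots,\operatorname{sign} d_m)$. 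You make this explicit and then resolve it either by a harmless sign convention on the rows of $\bP$ or by observing it vanishes in the ICA case. So: same decomposition and the same cancellation, but your treatment of the square root of $\bD^2$ is the careful version of a step the paper elides.
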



The rest of the paper describes the estimation of $\bP_{\cS}$ and investigates its quality in achieving disentanglement.  




\section{\method}
\label{sec:method}

To achieve sparse recovery, we assume that we can manipulate the samples in some specific ways, which we describe below.


\begin{assumption}
\label{assmp:sample-manipulation}
 We assume the following:
\begin{enumerate}
    \item \textbf{Sample manipulations:} For each sample $\bx = g(\bz)$ and style factor $j \in \stylefactor$ we have access to the sample $ \bx^{(j)}  \triangleq g(\bz^{(j)}) $ that has been created by modifying the $j$-th style factor of $\bx$ while keeping content factors unchanged, \ie, 
    \begin{equation}
    \label{eq:unchanged-content}
         [\bz^{(j)}]_i =   \begin{cases}
      \neq [ \bz]_i, & i = j, \\
      [ \bz]_i, & i \in \contentfactor\,.
    \end{cases}
    \end{equation}
    \item \textbf{Sample annotations:} There exist two numbers $\alpha_j, \beta_j \in \reals, ~~ \beta_j \neq 0$ which are associated to each $j$-th style factor and independent of the latent factors $\bz$ such that for each sample $\bx = g(\bz)$ and it's modified version $\bx^{(j)} = g(\bz^{(j)})$ we observe the sample annotations $\by_j = \alpha_j + \beta_j[\bz]_j + \epsilon^{(j)}$ and $\tilde \by_j = \alpha_j + \beta_j[\bz^{(j)}]_j + \tilde\epsilon^{(j)}$, where $(\epsilon^{(j)}, \tilde \epsilon^{(j)})$ pair has zero mean and is uncorrelated with $(\bz, \bz^{(j)})$. 

   
\end{enumerate}
\end{assumption}

Sample annotations formalize the notion of concept from interpretable ML \citep{kim2018Interpretability} and generalize the usual disentangled representation setting in which the latent factors are the concepts. By taking $\alpha_j = 0$ and $\beta_j = 1$, we have $\by_j = [\bz]_j$ and $\tilde{\by}_j= [\bz^{(j)}]_j$, which equates the annotations and the latent factors. We provide an illustration for a single style factor, \ie\  $\stylefactor = \{1\}$, in Figure \ref{fig:pisco_diagram}. Here, $\bx$ is the original image and we annotate it as $\alpha_1 + \beta_1[\bz]_1 = +1$. We stylize the image to obtain $\bx^{(1)}$ and assume that style transformation does not change any content factors of the image. We annotate the transformed image as $\alpha_1 + \beta_1[\bz^{(1)}]_1 = -1$. Examples of such sample manipulations are easily available in vision problems, \eg, image corruptions \citep{hendrycks2018Benchmarking} and style transfer \citep{huang2017arbitrary}. Combining style transfer and prompt-based image generation systems like DALL$\cdot$E 2 further enables using natural language to describe desired sample manipulations (Figure \ref{fig:pisco_diagram} illustrates such image manipulation - see \cref{supp:exp-features} for prompt and other details and Figure \ref{fig:stylized_images} for more examples). In \S\ref{sec:experiments} we use these examples for our experiments.




We denote the entangled representations (obtained from Assumption \ref{assmp:entangled-rep}) corresponding to the images $\bx$ and $\bx^{(j)}$ as $\bu$ and $\bu^{(j)}$.
With access to such sample manipulations, one can recover the $j$-th latent factor from a simple minimum norm least square regression problem: 
\begin{equation}
\label{eq:jth-direction}
    \begin{aligned}
\relax    [\hat \bz]_j \triangleq \hat\bp_j^\top \bu, ~~ \text{where} \\
    \hat \bp_j \triangleq \lim_{\mu \to 0+}\underset{a \in \reals, \bp\in \reals^{d'}}{\argmin}  \frac{1}{2n} \sum_{i = 1}^n \Big[(\by_i^{(j)} - a - \bp^\top \bu_i)^2\\
  + (\tilde \by_i^{(j)} - a - \bp^\top \bu_i^{(j)})^2\Big] + \frac\mu 2 \|\bp\|_2^2
    \end{aligned}
\end{equation} We resort to the minimum norm least square regression instead of the simple 
least square regression because the variance of the predictor $\text{var}(\bu) = \text{var}(\bA z) = \bA \text{var}(\bz) \bA^\top$ has rank $d \le d'$, which leads to non-invertible covariance for the design matrix whenever $d < d'$. 

Intuitively, $\hat \bp_j^\top \bu$ is the one dimensional linear function of $\bu$ which is most aligned to the coordinate $[\bz]_j$. As we shall see later, under our setup $\hat \bp_j^\top \bu$ is just a scalar multiple of $[\bz]_j$, and hence we successfully recover the $j$-th style factor. 
We stack $\hat \bp_j$ into the $j$-th row of $\bP$, \ie\ $[\bP]_{j, \cdot} = \hat\bp_j^\top$. 

To extract the content factors we first recall that they should exhibit minimal change corresponding to  any changes in the style factors $[\bz]_j, ~ j \in \stylefactor$. We enforce this  by leveraging our ability to manipulate styles of samples, as described in Assumption \ref{assmp:sample-manipulation}. We describe our method below: 

\paragraph{Estimation of content factors:} We recall $m = |\stylefactor|$ and let $\bU = [\bu_1, \dots, \bu_n, \bu_1^{(j)}, \dots, \bu_n^{(j)}; j \in \stylefactor]^\top \in \reals^{(m+1)n \times d'}$ be the matrix of entangled representations, and for each $j \in \cS$ let $\Delta_j = [\bu_1 - \bu_1^{(j)}, \dots, \bu_n - \bu_n^{(j)}]\in \reals^{n \times d'}$ be the matrix of representation differences. We estimate the content factors from the following optimization: 
\begin{equation}
\label{eq:loss}
\begin{aligned}
\relax [\hat{\bz}]_{\contentfactor} = \hat \bQ(\lambda) \bu, ~~ \text{where}\\
\textstyle \hat \bQ(\lambda) \triangleq \underset{\substack{\bQ\in \reals^{(d - |\stylefactor|)\times d'}\\ \bQ \bQ^\top = \bI}}{\argmin} \operatorname{tr} \left[\Big(\bI_{d'} - \bQ^\top \bQ\Big) \Big(\frac{\bU^\top \bU}{(m+1)n} \Big)\right]\\
+ \textstyle\frac\lambda{m}\sum_{j \in \stylefactor} \operatorname{tr} \left[\bQ^\top \bQ \big(\nicefrac{\Delta_j^\top \Delta_j}{n} \big)\right]\,.
\end{aligned}
\end{equation} Our objective has two parts: the first part is easily recognized by noticing its similarity to a principle component analysis objective. To understand the second part, we fix a style factor $j \in \stylefactor$ and observe that,
\begin{equation}
\label{eq:second-part-of-loss}
    \begin{aligned}
&\frac{1}{n}\sum_{i = 1}^n \|\bQ(\bu_i - \bu_i^{(j)})\|_2^2\\
&= \frac{1}{n}\sum_{i = 1}^n \operatorname{tr}\left[\bQ^\top \bQ(\bu_i - \bu_i^{(j)})(\bu_i - \bu_i^{(j)})^\top\right]\\
& = \operatorname{tr} \left[\bQ^\top \bQ \big(\nicefrac{\Delta_j^\top \Delta_j}{n} \big)\right]\,.
\end{aligned}
\end{equation}

Following the above, one can easily realize that the second part of the objective enforces that the content factors $[\hat{\bz}]_{\contentfactor}$ exhibit minimal change for any changes in the style factors $[\bz]_j, ~ j \in \stylefactor$. In a special case $\lambda = +\infty$, the  $[\hat{\bz}]_{\stylefactor}$ will be invariant to any changes in the style factors. 


From \eqref{eq:jth-direction} and \eqref{eq:loss} we obtain the linear post-processing matrix (as defined in \ref{def:direp}) as 
\begin{equation}
\label{eq:post-processing-matrix}
   \bP \equiv  \bP(\lambda) \triangleq \begin{cases}
    [\bP]_{j, \cdot} = \hat \bp_j^\top, & j \in \stylefactor\\
    [\bP]_{\contentfactor, \cdot} = \hat \bQ(\lambda)\,.
    \end{cases}
\end{equation} We summarize our method in Algorithm \ref{alg:method} which is a combination of simple regressions (per style factors) and an eigen-decomposition. In the next section, we show that for large values of $\lambda$ the post-processing matrix  $\bP(\lambda)$ achieves sparse recovery with high probability.





 
 


\begin{algorithm}[tb]
   \caption{\method}
   \label{alg:method}
\begin{algorithmic}
   \STATE {\bfseries Input:} \textbf{Dataset and styles:} (1) entangled representations $\{\bu_i\}_{i = 1}^n\subset \reals^{d'}$ of $n$ images, and (2) $m$ styles. 
\textbf{Hyperparameters:}  (1) regularization strength for disentanglement between style and content factors $\lambda > 0$, and (2) number of content factors $k$.

   \COMMENT{representations of } 
   \FOR{$j=1$ {\bfseries to} $m$}
        \FOR{$i=1$ {\bfseries to} $n$}
            \STATE    $\bu^{(j)}_i\gets$  entangled feature  of $i$-th image after changing it's $j$-th style.
           \STATE  $\delta_{i}^{(j)} \gets \bu_i^{(j)} -\bu_i$.
        \ENDFOR
        \STATE $\hat\bp_j\gets$ coefficient from regression \eqref{eq:jth-direction} on $\{(\bu_i, -1)\}_{i = 1}^n \cup \{(\bu_i^{(j)}, +1)\}_{i = 1}^n$.
   \ENDFOR
   \STATE  $\bU \gets [\bu_1, \bu_1^{(1)}, \dots, \bu_1^{(m)}, \dots, \bu_n, \bu_n^{(1)}, \dots, \bu_n^{(m)}]^\top\in \reals^{n(m+1)\times d'}$ 
 \STATE $\Delta \gets [ \delta_1^{(1)}, \dots, \delta_1^{(m)}, \dots,  \delta_n^{(1)}, \dots, \delta_n^{(m)}]^\top\in \reals^{mn\times d'}$
 
 \STATE $\hat \bQ(\lambda)\gets$ top $k$ eigenvectors of  $\frac{\bU^\top\bU}{n(m+1)} - \lambda \frac{\Delta^\top \Delta}{mn}$
 
  \STATE\textbf{Return:} Post-processing  matrix $\bP(\lambda)$ as in \eqref{eq:post-processing-matrix}.
\end{algorithmic}
\end{algorithm}
\section{Theory}
\label{section:theory}

In this section, we theoretically establish that our post-processing approach \method\ guarantees sparse recovery. We divide the proof into two parts: the first part analyzes the asymptotic quality of the estimated style factors and the second part analyzes the quality of the estimated content factors. Our first result follows: 

\begin{theorem}
\label{th:style-factors}
Let $\Sigma_\bz\triangleq \operatorname{var}(\bz)$ be invertible. Then for any $j \in \stylefactor$ it holds: 
\begin{equation}
    \bA^\top \hat \bp_j \to \beta_j e_j
\end{equation} almost surely as $n \to \infty$, where $\{e_j\}_{j = 1}^{d}$ is the canonical basis vector for $\reals^d$. Subsequently, the following hold at almost sure limit: (1) $[\bP \bA]_{\stylefactor, \stylefactor}$ converges to a diagonal matrix, and (2) $[\bP \bA]_{\stylefactor, \contentfactor} \to \mathbf{0}$.
\end{theorem}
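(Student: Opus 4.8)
The plan is to compute the almost-sure limit of $\hat\bp_j$ after transporting it to the latent coordinates, i.e.\ to track the $d$-vector $\bA^\top\hat\bp_j$ rather than the (possibly over-parametrised) $d'$-vector $\hat\bp_j$ itself; this is exactly what renders the rank deficiency of the design harmless. First I would simplify \eqref{eq:jth-direction}: for fixed $\bp$ the optimal intercept is the pooled residual mean, so after profiling it out and letting $\mu\to0+$, $\hat\bp_j$ is the minimum-norm minimiser of the centred least-squares objective on the $2n$ pairs $\{(\bu_i,\by_i^{(j)})\}_{i\le n}\cup\{(\bu_i^{(j)},\tilde\by_i^{(j)})\}_{i\le n}$; in particular it lies in $\operatorname{col}(\bA)$ (being a linear combination of the $\bu_i,\bu_i^{(j)}$) and it solves the normal equations $\widehat\Sigma_n\,\hat\bp_j=\widehat\rho_n$, where $\widehat\Sigma_n$ is the pooled sample covariance of the $2n$ feature vectors and $\widehat\rho_n$ their pooled sample cross-covariance with the annotations. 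Substituting $\bu_i=\bA\bz_i$, $\bu_i^{(j)}=\bA\bz_i^{(j)}$ gives $\widehat\Sigma_n=\bA\widehat\Lambda_n\bA^\top$ and $\widehat\rho_n=\bA\widehat\gamma_n$, where $\widehat\Lambda_n$ and $\widehat\gamma_n$ are the corresponding pooled sample covariance of the latents $\{\bz_i,\bz_i^{(j)}\}$ and their pooled sample cross-covariance with the annotations; since $\bA$ is injective this forces the exact identity $\widehat\Lambda_n\,(\bA^\top\hat\bp_j)=\widehat\gamma_n$.

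Next I would take $n\to\infty$. By the strong law of large numbers $\widehat\Lambda_n\to\Lambda_\star$ and $\widehat\gamma_n\to\gamma_\star$ almost surely, where $\Lambda_\star$ is the covariance and $\gamma_\star$ the latent--annotation cross-covariance of the $50/50$ mixture of the laws of $(\bz,\by_j)$ and $(\bz^{(j)},\tilde\by_j)$. The law of total variance yields $\Lambda_\star=\tfrac12\operatorname{var}(\bz)+\tfrac12\operatorname{var}(\bz^{(j)})+\tfrac14(\bE\bz-\bE\bz^{(j)})(\bE\bz-\bE\bz^{(j)})^\top$, which is positive definite (the first summand is, since $\operatorname{var}(\bz)=\Sigma_\bz$ is invertible, and the other two are positive semidefinite), so $\widehat\Lambda_n$ is invertible for all large $n$ almost surely. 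For $\gamma_\star$ the key point is that, by Assumption~\ref{assmp:sample-manipulation}, conditionally on the mixture component the annotation equals $\alpha_j+\beta_j\times(j\text{-th latent coordinate})$ plus a mean-zero term uncorrelated with the latents, and the constants $\alpha_j,\beta_j$ are the same in both components; hence the same representation holds unconditionally for the mixture, and the law of total covariance gives $\gamma_\star=\beta_j\,\Lambda_\star e_j$, the $j$-th column of $\Lambda_\star$ scaled by $\beta_j$. Combining, $\bA^\top\hat\bp_j=\widehat\Lambda_n^{-1}\widehat\gamma_n\to\Lambda_\star^{-1}\gamma_\star=\beta_j e_j$ almost surely, which is the first claim.

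The remaining two assertions then follow immediately from \eqref{eq:post-processing-matrix}: since $[\bP]_{j,\cdot}=\hat\bp_j^\top$ for $j\in\stylefactor$, the $j$-th row of $\bP\bA$ is $(\bA^\top\hat\bp_j)^\top\to\beta_j e_j^\top$, so with $\stylefactor=[m]$ the block $[\bP\bA]_{\stylefactor,\stylefactor}$ converges to $\operatorname{diag}(\beta_1,\dots,\beta_m)$, a diagonal matrix, while $[\bP\bA]_{\stylefactor,\contentfactor}\to\mathbf{0}$ because $e_j$ has no mass on the content coordinates when $j\le m$. I expect the only genuine obstacle to be making the first paragraph airtight: justifying that the $\mu\to0+$ limit of the ridge fit exists, coincides with the minimum-norm least-squares solution, and satisfies the normal equations even though $\widehat\Sigma_n$ is rank-deficient (so that pseudoinverses cannot simply be passed to the limit). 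Once the problem has been pushed into the $d$-dimensional latent space the matrices become full rank and the rest is routine; the most delicate remaining bookkeeping is the total-covariance computation $\gamma_\star=\beta_j\Lambda_\star e_j$, where one must verify that the pooled annotation noise remains uncorrelated with the pooled latent.
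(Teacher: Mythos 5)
Your proposal is correct and shares the core insight of the paper's own proof — transporting the estimator to the $d$-dimensional latent space via $\bA^\top\hat\bp_j$ is precisely what makes the rank deficiency of $\widehat\Sigma_{\bu}$ harmless — but the algebraic route differs. The paper starts from the explicit closed form $\hat\bp_j = \widehat\Sigma_{\bu}^\dagger\,\widehat\rho_n$, decomposes $\widehat\rho_n$ into a signal piece $\beta_j\bA\widehat\Sigma_{\bz}e_j$ and a noise piece $\bA\widehat\Sigma_{\bz,\epsilon}$, and then applies the Moore--Penrose identity $(\bG\bG^\top)^\dagger\bG=\bG(\bG^\top\bG)^{-1}$ with $\bG=\bA\widehat\Sigma_{\bz}^{1/2}$; the upshot is that the signal contribution equals $\beta_j\bA(\bA^\top\bA)^{-1}e_j$ \emph{exactly at finite $n\ge d+1$}, so asymptotics are needed only for the noise cross-covariance $\widehat\Sigma_{\bz,\epsilon}\to\mathbf 0$. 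You instead pass to the normal equations, substitute $\bu=\bA\bz$ to get $\bA\widehat\Lambda_n\bA^\top\hat\bp_j=\bA\widehat\gamma_n$, cancel $\bA$ by left-invertibility, and then take both $\widehat\Lambda_n$ and $\widehat\gamma_n$ to their SLLN limits before inverting — which requires the extra total-variance/total-covariance bookkeeping for the $50/50$ mixture. Both buy the same conclusion. Your route is a bit more elementary (no pseudoinverse algebra at all, just injectivity of $\bA$); the paper's buys a finite-sample exact identity for the signal term. Also, the item you flag as the potential weak spot — that the $\mu\to0+$ ridge limit is the minimum-norm solution, that this solution satisfies the normal equations, and that those equations are consistent here — is standard: the limit lies in $\operatorname{row}(\bU)=\operatorname{col}(\bA)$ and $\widehat\rho_n=\bA\widehat\gamma_n\in\operatorname{col}(\bA)=\operatorname{col}(\widehat\Sigma_{\bu})$ once $\widehat\Lambda_n$ is invertible (a.s.\ for $n\ge d+1$), so there is no genuine gap there.
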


To establish theoretical guarantee for the content factors we require the following technical assumption.

\begin{assumption}[Linear independence in style factors]
\label{assmp:linear-independence}
Over the  distribution of latent factors, the style factors are not linearly dependent with each other. Mathematically speaking, the following event positive probability
\begin{equation}
\label{eq:invertibility-perturbation}
    \bE = \big\{ \bz: [\bz - \bz^{(1)}, \dots, \bz - \bz^{(m)}]_{\stylefactor, \cdot} ~~ \text{is invertible} \big\}\,.
\end{equation}  
\end{assumption}

The assumption is related to cases when the style factors are dependent with each other. One such example is the blurring and contrasting of images: we assume that one doesn't \emph{completely} determine the other.  To see how the assumption is violated under linear dependence of style factors let the first two of them completely determine one another, \ie, one is just a constant multiplication of the other. In that case, we point out that for any $\bz$ the first two rows of the matrix $[\bz - \bz^{(1)}, \dots, \bz - \bz^{(m)}]_{\stylefactor, \cdot}$ are just constant multiplication of one another and the matrix is singular with probability one.

With the setups provided by Assumptions \ref{assmp:entangled-rep}, \ref{assmp:sample-manipulation} and \ref{assmp:linear-independence} we're now ready to state our result about sparse recovery for our post-processing technique. 

\begin{theorem}[Sparse recovery for \method]
\label{th:sparse-recovery}
Let $n \ge d + 1$ and that the Assumptions \ref{assmp:entangled-rep}, \ref{assmp:sample-manipulation} and \ref{assmp:linear-independence} hold.  Define $\kappa \triangleq  1- \bbP_\bz(\bE) < 1$ where $\bbP_\bz$ is the distribution of latent factors and $\bE$ is defined in \eqref{eq:invertibility-perturbation}. With probability at least $1 - \kappa^n$ the post-processing matrix $\bP \equiv  \bP(+\infty) \triangleq\underset{\lambda\to \infty}{\lim}\bP(\lambda)$ satisfies the following: (1) $[\bP \bA]_{\stylefactor, \contentfactor} = \mathbf{0}$, and (2) $[\bP \bA]_{\contentfactor, \contentfactor}$ is invertible. 
\end{theorem}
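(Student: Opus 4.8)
The plan is to reduce the statement to two structural facts about the two Gram matrices in \eqref{eq:loss}. Write $\bA_{\stylefactor}\in\reals^{d'\times m}$ and $\bA_{\contentfactor}\in\reals^{d'\times(d-m)}$ for the submatrices of $\bA$ with columns indexed by $\stylefactor$ and $\contentfactor$; left‑invertibility of $\bA$ makes both injective and $\operatorname{range}(\bA_{\stylefactor})\cap\operatorname{range}(\bA_{\contentfactor})=\{0\}$. Since $\bu_i-\bu_i^{(j)}=\bA(\bz_i-\bz_i^{(j)})=\bA\bw_i^{(j)}$ and, by the unchanged‑content clause of Assumption \ref{assmp:sample-manipulation}, $\bw_i^{(j)}$ is supported on $\stylefactor$, we get $\Delta^\top\Delta=\bA W\bA^\top$ with $W:=\sum_{i,j}\bw_i^{(j)}\bw_i^{(j)\top}\succeq0$ supported on the $\stylefactor\times\stylefactor$ block, and $\bU^\top\bU=\bA Z\bA^\top$ with $Z\succeq0$. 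By the computation \eqref{eq:second-part-of-loss}, $\hat\bQ(\lambda)$ is the stack of the top $k$ eigenvectors of $\frac{\bU^\top\bU}{(m+1)n}-\frac{\lambda}{mn}\Delta^\top\Delta$, a matrix of the form $M-\lambda B$ with $M,B\succeq0$; the rows $\hat\bp_j$ play no role here, so only $\hat\bQ(+\infty)$ matters.

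\emph{Fact 1 (on a $(1-\kappa^n)$-event, $\ker(\Delta^\top\Delta)=\operatorname{range}(\bA_{\stylefactor})^\perp$).} Let $\bE_n$ be the event that at least one of the $n$ i.i.d.\ latent draws lies in the set $\bE$ of \eqref{eq:invertibility-perturbation}; by independence $\bbP(\bE_n)\ge1-\kappa^n$. On $\bE_n$ the vectors $\bw_{i_0}^{(1)},\dots,\bw_{i_0}^{(m)}$ of a good sample $i_0$ are linearly independent (invertible $\stylefactor$‑block, vanishing $\contentfactor$‑block), so $\operatorname{rank}W=m$, $\ker W$ is the $\contentfactor$‑coordinate subspace and $\operatorname{range}W$ the $\stylefactor$‑coordinate subspace. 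As $\bA$ is injective, $\ker(\Delta^\top\Delta)=\{v:\bA^\top v\in\ker W\}=\{v:\bA_{\stylefactor}^\top v=0\}=\operatorname{range}(\bA_{\stylefactor})^\perp$, and by symmetry $\operatorname{range}(\Delta^\top\Delta)=\operatorname{range}(\bA_{\stylefactor})$; in particular $\dim\ker(\Delta^\top\Delta)=d'-m\ge d-m=k$.

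\emph{Fact 2 (the $\lambda\to\infty$ limit is constrained PCA and selects $S:=\Pi\operatorname{range}(\bA_{\contentfactor})$), where $\Pi$ is the orthogonal projection onto $\operatorname{range}(\bA_{\stylefactor})^\perp$.} A Courant--Fischer estimate — in the equivalent maximization form of \eqref{eq:loss}, a $k$‑frame carrying squared mass $t>0$ outside $\ker(\Delta^\top\Delta)$ scores at most $O(1)-\Omega(\lambda t)$, whereas $k$‑frames inside $\ker(\Delta^\top\Delta)$ (which exist since $\dim\ker(\Delta^\top\Delta)\ge k$) score a fixed positive amount — forces the row space of $\hat\bQ(\lambda)$ into $\ker(\Delta^\top\Delta)$ as $\lambda\to\infty$, with limit the top‑$k$ eigenspace of $\Pi\frac{\bU^\top\bU}{(m+1)n}\Pi$ inside $\ker(\Delta^\top\Delta)$. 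For every $i$ and every $j\in\stylefactor$, $\Pi\bu_i^{(j)}=\Pi\bA_{\contentfactor}[\bz_i^{(j)}]_{\contentfactor}=\Pi\bA_{\contentfactor}[\bz_i]_{\contentfactor}$ (the $\bA_{\stylefactor}$ part is killed by $\Pi$; content coordinates are unchanged), and likewise $\Pi\bu_i=\Pi\bA_{\contentfactor}[\bz_i]_{\contentfactor}=:\bv_i$, so $\Pi\frac{\bU^\top\bU}{(m+1)n}\Pi=\frac1n\sum_i\bv_i\bv_i^\top$. Since $\Pi$ is injective on $\operatorname{range}(\bA_{\contentfactor})$ (if $\Pi\bA_{\contentfactor}x=0$ then $\bA_{\contentfactor}x\in\operatorname{range}(\bA_{\stylefactor})\cap\operatorname{range}(\bA_{\contentfactor})=\{0\}$), and $\{[\bz_i]_{\contentfactor}\}_{i=1}^n$ spans $\reals^{d-m}$ (invertibility of $\operatorname{var}(\bz)$, hence of its $\contentfactor$‑block, together with $n\ge d+1$), this matrix has rank exactly $k$ with range $S=\Pi\operatorname{range}(\bA_{\contentfactor})$. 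Hence it has a nonzero gap after its $k$‑th eigenvalue, the limiting eigenspace is well defined, $\bP(+\infty)$ exists, and the rows of $\hat\bQ(+\infty)$ form an orthonormal basis of $S\subseteq\operatorname{range}(\bA_{\stylefactor})^\perp$.

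\emph{Conclusion.} (1) Each row of $\hat\bQ(+\infty)$ is orthogonal to every column of $\bA_{\stylefactor}$, so $[\bP\bA]_{\contentfactor,\stylefactor}=\hat\bQ(+\infty)\bA_{\stylefactor}=\mathbf0$; combined with Theorem \ref{th:style-factors} (which gives $[\bP\bA]_{\stylefactor,\contentfactor}\to\mathbf0$) the entire style/content cross block of $\bP\bA$ vanishes. (2) For $x\in\reals^{d-m}$ write $\bA_{\contentfactor}x=\Pi\bA_{\contentfactor}x+(I-\Pi)\bA_{\contentfactor}x$ with $(I-\Pi)\bA_{\contentfactor}x\in\operatorname{range}(\bA_{\stylefactor})\perp S$; since the rows of $\hat\bQ(+\infty)$ span $S$, $[\bP\bA]_{\contentfactor,\contentfactor}x=\hat\bQ(+\infty)\bA_{\contentfactor}x=\hat\bQ(+\infty)\Pi\bA_{\contentfactor}x$, and as $x$ varies $\Pi\bA_{\contentfactor}x$ ranges bijectively over $S$ while $\hat\bQ(+\infty)$ restricts to an isometric isomorphism $S\to\reals^k$, so $[\bP\bA]_{\contentfactor,\contentfactor}$ is invertible. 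The main obstacle is Fact 2: making the $\lambda\to+\infty$ limit rigorous — convergence of the top‑$k$ eigenspace, which rests on the exact‑rank/spectral‑gap claim for $\Pi\bU^\top\bU\Pi$, i.e.\ on non‑degeneracy of the content Gram matrix $\sum_i[\bz_i]_{\contentfactor}[\bz_i]_{\contentfactor}^\top$ — and identifying the limit with the constrained‑PCA solution; everything else is linear algebra powered by Facts 1--2, with Assumption \ref{assmp:linear-independence} entering only through $\operatorname{rank}W=m$.
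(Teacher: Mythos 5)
Your proof is correct and follows the paper's two‑part strategy (first characterize $\hat\bQ(+\infty)$'s row space via the vanishing of the penalty term on the good event, then show the content block of $\bP\bA$ is invertible), but the second part takes a genuinely different route. The paper proves invertibility of $[\bP\bA]_{\contentfactor,\contentfactor}$ by passing through a QR factorization $\bA=\tilde\bA\bU$ ($\tilde\bA$ orthonormal columns, $\bU$ upper triangular) chosen so that the first $m$ columns of $\tilde\bA$ span $\operatorname{range}(\bA_{\stylefactor})$; it then writes $\bH^\perp\frac{\bU^\top\bU}{(m+1)n}\bH^\perp=[\tilde\bA]_{\cdot,\contentfactor}[\bM]_{\contentfactor,\contentfactor}[\tilde\bA]_{\cdot,\contentfactor}^\top$ and concludes with the explicit formula $[\bP\bA]_{\contentfactor,\contentfactor}=\bW^\top\bU_{\contentfactor,\contentfactor}$. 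You avoid the QR decomposition altogether: you rewrite the projected Gram matrix as $\frac1n\sum_i\bv_i\bv_i^\top$ with $\bv_i=\Pi\bA_{\contentfactor}[\bz_i]_{\contentfactor}$, deduce that its range is exactly $S=\Pi\operatorname{range}(\bA_{\contentfactor})$ from injectivity of $\Pi$ on $\operatorname{range}(\bA_{\contentfactor})$ (which in turn is a one‑line consequence of left‑invertibility of $\bA$), and finish with the observation that $\hat\bQ(+\infty)$ restricted to $S$ is an isometric isomorphism onto $\reals^k$. Your route is more coordinate‑free and makes the geometry transparent; the paper's buys an explicit closed form for $[\bP\bA]_{\contentfactor,\contentfactor}$. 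Two small remarks. First, you prove $[\bP\bA]_{\contentfactor,\stylefactor}=\mathbf 0$ and import $[\bP\bA]_{\stylefactor,\contentfactor}\to\mathbf 0$ from Theorem~\ref{th:style-factors}; this is in fact what the paper's own proof establishes (its conclusion~(1) as stated appears to be a typo for $[\bP\bA]_{\contentfactor,\stylefactor}=\mathbf 0$), so you are aligned with the intended content. Second, your flagged gap — rigorous convergence of the top‑$k$ eigenspace as $\lambda\to\infty$, justified by the rank‑$k$/spectral‑gap property of $\Pi\frac{\bU^\top\bU}{(m+1)n}\Pi$ — is in fact treated even more informally by the paper (it asserts the form of $\hat\bQ(+\infty)$ directly in the lemma statements); your Courant–Fischer sketch is a step beyond what the paper offers on this point, not a defect relative to it.
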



Proofs of Theorems \ref{th:style-factors} and \ref{th:sparse-recovery} are provided in \S \ref{sec:proof-style-factor} and \S\ref{sec:proof-sparse-recovery}. We combine the conclusions of the two theorems in the following corollary. 
\begin{corollary}
\label{cor:sparse-recovery}
Let the Assumptions \ref{assmp:entangled-rep}, \ref{assmp:sample-manipulation} and \ref{assmp:linear-independence} hold. Then at the limit $n \to \infty$ the the post-processing matrix $\bP \equiv  \bP(+\infty) $ almost surely satisfies sparse recovery conditions in Definition \ref{def:direp}. 
\end{corollary}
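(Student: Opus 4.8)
The plan is to derive Corollary~\ref{cor:sparse-recovery} by combining Theorems~\ref{th:style-factors} and~\ref{th:sparse-recovery}, after two bookkeeping observations. First, the style rows $\hat\bp_1^\top,\dots,\hat\bp_m^\top$ of $\bP(\lambda)$ from \eqref{eq:jth-direction} do not depend on $\lambda$, so the limit $\lambda\to\infty$ only affects the content rows $\hat\bQ(\lambda)$; consequently the $\stylefactor$-rows of $\bP(+\infty)\bA$ are controlled directly by Theorem~\ref{th:style-factors} and the $\contentfactor$-rows by Theorem~\ref{th:sparse-recovery}. Second, Theorem~\ref{th:sparse-recovery} is a finite-$n$ high-probability statement, so upgrading it to an almost-sure conclusion requires a Borel--Cantelli step along $n\to\infty$.

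I would then argue in three steps. \textbf{(i) Style rows.} For $j\in\stylefactor=[m]$ we have $[\bP\bA]_{j,\cdot}=\hat\bp_j^\top\bA=(\bA^\top\hat\bp_j)^\top$, and Theorem~\ref{th:style-factors} gives $\bA^\top\hat\bp_j\to\beta_j e_j$ almost surely; since $j\le m$ this limit is supported on $\stylefactor$. Stacking the $m$ rows, on a probability-one event we obtain $[\bP\bA]_{\stylefactor,\stylefactor}\to\operatorname{diag}(\beta_1,\dots,\beta_m)$ and $[\bP\bA]_{\stylefactor,\contentfactor}\to\mathbf{0}$, and the limiting diagonal matrix is nonsingular because each $\beta_j\neq0$ by Assumption~\ref{assmp:sample-manipulation}. \textbf{(ii) Content rows.} Let $B_n$ be the complement of the event of Theorem~\ref{th:sparse-recovery} at sample size $n\ge d+1$, so $\bbP(B_n)\le\kappa^n$ with $\kappa=1-\bbP_\bz(\bE)<1$. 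Since $\sum_n\kappa^n<\infty$, Borel--Cantelli gives $\bbP(\limsup_n B_n)=0$, so on a probability-one event there is $N$ such that for all $n\ge N$ the $\contentfactor$-rows of $\bP(+\infty)\bA$ have a null block over the style columns and an invertible block over the content columns. \textbf{(iii) Assemble.} Intersecting the two probability-one events (still of probability one), all three conditions of Definition~\ref{def:direp} are met by $\bP(+\infty)\bA$ as $n\to\infty$: the style--style block tends to a nonsingular diagonal matrix, the two off-diagonal blocks tend to (resp.\ are, for $n$ large) zero, and the content--content block is invertible for all large $n$.

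The only genuine subtlety --- hence the ``hard part'' of what is otherwise an exercise in citation --- is reconciling the two inputs: Theorem~\ref{th:style-factors} is an asymptotic a.s.\ limit about the style rows whereas Theorem~\ref{th:sparse-recovery} is a finite-sample high-probability statement about the content rows, so one must verify that they refer to the same matrix $\bP(+\infty)$ and combine them on a common probability-one event, which is precisely where Borel--Cantelli enters. One should also carry over the invertibility of $\Sigma_\bz$ required by Theorem~\ref{th:style-factors} into the hypotheses of the corollary.
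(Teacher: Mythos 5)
Your proof is correct and takes essentially the same approach as the paper: Theorem~\ref{th:style-factors} handles the style rows of $\bP(+\infty)\bA$ almost surely, Borel--Cantelli upgrades the finite-$n$ high-probability bound of Theorem~\ref{th:sparse-recovery} on the content rows to an almost-sure statement, and the two probability-one events are intersected to conclude all conditions of Definition~\ref{def:direp}. Your side remark that the corollary's hypotheses should also inherit the invertibility of $\Sigma_\bz$ required by Theorem~\ref{th:style-factors} is a fair observation about the statement's completeness but does not change the argument.
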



\subsection{Synthetic data study}
\label{sec:sim}
We complement our theoretical study with an experiment in a synthetic setup. Below we describe the data generation, sample manipulations, and their annotations, and provide their detailed descriptions in \S\ref{sec:sim-supp}.

We generate \textbf{the latent variables} ($\bz$) from a 10-dimensional centered normal random variable, where the coordinates have unit variance, the first two coordinates are correlated with correlation coefficient $\rho$ and all the other cross-coordinate correlations are zero.  We consider the first five coordinates of $\bz$ as the style factors, \ie\ $\stylefactor = \{1, 2, \dots, 5\}$, and the rest of them as content factors. 

\textbf{The entangled representations}   are  $d' = 10$ dimensional vectors and which we obtain as $\bu = \bA \bz = \bL \bU \bz$, where $\bL$ is a $10 \times 10$ lower triangular matrix whose diagonal entries are one and off-diagonal entries are $0.9$ and $\bU$ is a randomly generated $d\times d$ orthogonal matrix. 

\textbf{Sample manipulations and annotations:} For $j$-th style coordinates we obtain two  {manipulated samples} per latent factor $\bz$, which (denoted as $\bz^{(j), +}$ and $\bz^{(j), -}$ ) set the $j$-th coordinate to it's positive (resp. negative) absolute value, \ie\ $[\bz^{(j), +}]_j = |[\bz]_j|$ (resp. $[\bz^{(j), -}]_j = -|[\bz]_j|$), and annotate it as $+1$ (resp. $-1$). 
Since the first two coordinates have correlation coefficient $\rho$, if either of them is changed by the value $\delta$ then the other one must be changed by $\rho \delta$. 
Note that one of $\bz^{(j), +}$ and $\bz^{(j), -}$ is exactly equal to $\bz$. The corresponding entangled representations to the manipulated latent factors are used for recovering the style factors (as in \eqref{eq:jth-direction} and \eqref{eq:post-processing-matrix}), and content factors (as in \eqref{eq:loss}). 

\begin{figure}
    \centering
    \includegraphics[scale = 0.5]{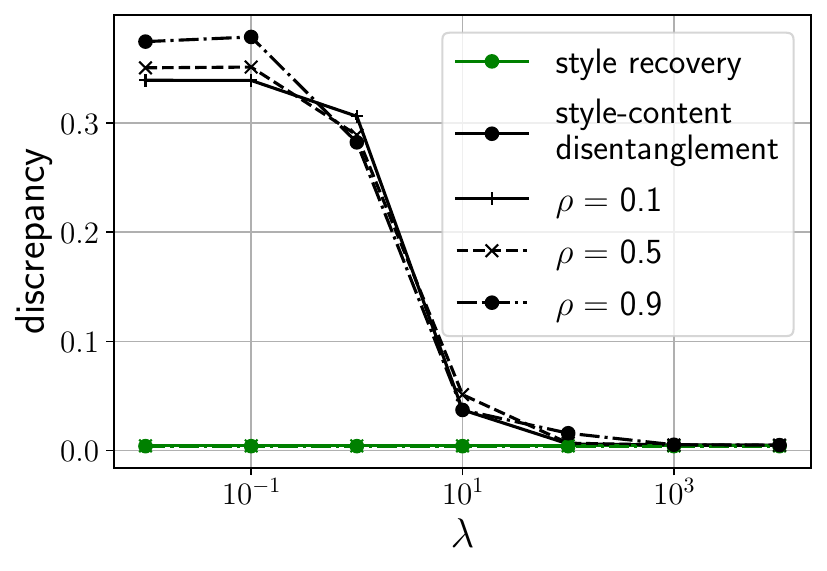}
    \caption{Plots (with error-bars over 50 repetitions) for discrepancies in style recovery ($\|\operatorname{corr}([\bz]_{\stylefactor}, [\hat\bz]_{\stylefactor}) - \operatorname{corr}([\bz]_{\stylefactor})\|_{\operatorname{F}}$) and style-content disentanglement ($\|\operatorname{corr}([\hat \bz]_{\contentfactor}, [\bz]_{\stylefactor})\|_{\operatorname{F}}$) for estimated factors, where $\|\cdot\|_{\operatorname{F}}$ is the normalized Frobenius norm (see \cref{normalized-frobenius}). Here, $\rho$ is the correlation between the first two coordinates in true factors. }
    \label{fig:sim-disentanglement}
\end{figure}

\textbf{Style recovery:} In our synthetic experiments we validate the quality of sparse recovery for estimated latent factors on two fronts: (1) recovery in the style factors, and (2) disentanglement between style and content factors. To verify recovery in style factors we recall  Theorem \ref{th:style-factors} that the estimated style factors ($[\hat \bz]_{\stylefactor}$) approximate the true style factors ($[ \bz]_{\stylefactor}$) up to  constant multiplications. This implies that the cross-correlation between estimated and true style factors ($\operatorname{corr}([\bz]_{\stylefactor}, [\hat\bz]_{\stylefactor})$) should be approximately identical to the correlation of the true style factors ($\operatorname{corr}([\bz]_{\stylefactor})$). In Figure \ref{fig:sim-disentanglement} we verify this by calculating $\|\operatorname{corr}([\bz]_{\stylefactor}, [\hat\bz]_{\stylefactor}) - \operatorname{corr}([\bz]_{\stylefactor})\|_{\operatorname{F}}$ where $\|\cdot\|_{\operatorname{F}}$ is the normalized Frobenius norm of a matrix.\footnote{\label{normalized-frobenius}The normalized Frobenius norm of a matrix $\bA\in \reals^{m\times n}$  is denoted as $\|\bA\|_{\operatorname{F}}$ and defined as $\|\bA\|_{\operatorname{F}} \triangleq \sqrt{\frac{\sum_{i,j}[\bA]_{i,j}^2}{mn}}$.} We refer to it as the \emph{discrepancy in style recovery} and observe that it is small and not affected by $\rho$. Even for $\rho$ as large as $0.9$ the recovery of style factors has small discrepancy, which matches with the conclusion of Theorem \ref{th:style-factors}.  Additionally, we observe that the discrepancies are the same for different values of $\lambda$ (that appears in \eqref{eq:loss}) since the estimation of the style factors doesn't involve $\lambda$. 

\textbf{Style and content disentanglement:} Note that the style and content factors are uncorrelated with each other, \ie\ $\operatorname{corr}([\bz]_{\stylefactor}, [\bz]_{\contentfactor}) = \mathbf{0}$. If the content factors and style factors are truly disentangled then the cross-correlation between estimated content factors $[\hat \bz]_{\contentfactor}$ and true style factors $[\bz]_{\stylefactor}$ should be approximately equal to zero. In Figure \ref{fig:sim-disentanglement} we verify this by plotting $\|\operatorname{corr}([\hat \bz]_{\contentfactor}, [\bz]_{\stylefactor})\|_{\operatorname{F}}$, which we refer to as the \emph{discrepancy in style-content disentanglement} (SCD) and notice that for large enough values of the parameter $\lambda$ (\ie\ $\lambda > 100$) the discrepancy is quite small. Though $\rho$ has a mild effect on disentanglement between style and content factors for smaller values of $\lambda$, the effect is indistinguishable for large $\lambda$ ($\lambda > 10^3$).

\section{Experiments}
\label{sec:experiments}

We verify the ability of \method\ (Algorithm \ref{alg:method}) to isolate content and style in pre-trained visual representations and the utility of the learned representations for OOD generalization when (i) train data is spuriously correlated with the style and the correlation is reversed in the test data; (ii) test data is modified with various image transformations, i.e., domain generalization with style-based distribution shifts. We consider nine transformations in our experiments: four types of image corruptions (rotation, contrast, blur, and saturation) on CIFAR-10 \citep{krizhevsky2009learning}, similar to ImageNet-C \citep{hendrycks2018Benchmarking}, four transformations based on style transfer \citep{huang2017arbitrary} on ImageNet \citep{russakovsky2015imagenet}, similar to Stylized ImageNet \citep{geirhos2018imagenet}, and a color transformation on MNIST, similar to Colored MNIST \citep{arjovsky2019Invariant} (see \S\ref{supp:mnist_results} for Colored MNIST experiment). The experiments code is available on GitHub.\footnote{Code: \href {https://github.com/lilianngweta/PISCO}{github.com/lilianngweta/PISCO}.}

\subsection{Transformed CIFAR}

In this set of experiments, our goal is to disentangle four styles ($m=4$) corresponding to image corruptions (rotation, contrast, blur, and saturation) from content.
For feature extraction we consider a ResNet-18 \citep{he2016Deep} pre-trained on ImageNet \citep{russakovsky2015imagenet} (\texttt{Supervised}) and a SimCLR \citep{chen2020simple} trained on CIFAR-10 via self-supervision with the same architecture (\texttt{SimCLR}). For each feature extractor, we learn a \emph{single} \method\ post-processing feature transformation matrix $\bP(\lambda)$ as in Algorithm \ref{alg:method} to jointly disentangle all considered styles from content. We report results for $\lambda \in \{1, 10, 50\}$.\footnote{In all experiments we set the number of content factors to $k=\eta d' - m$, where $d'$ is the representation dimension. We set $\eta = 0.95$ for all experiments in the main paper and report results for other values of $\eta$ in \S\ref{supp:results}. As long as $\eta$ is close to 1, baselines and PISCO in-distribution results are similar. For smaller values of $\eta$, PISCO in-distribution accuracy naturally deteriorates.}

\paragraph{Baselines} Our main baseline is the vanilla \texttt{SimCLR} representations due to \citet{kugelgen2021self} who argued that it is sufficient for style and content disentanglement under some assumptions. Thus we study whether we can further improve style-content disentanglement in \texttt{SimCLR} in a real data setting in addition to experiments with features obtained via supervised pretraining on ImageNet. We also compare PISCO's style-content disentanglement with IP-IRM \citep{wang2021self}, which is an \emph{in-processing}
method combining self-supervised learning and invariant risk minimization \citep{arjovsky2019Invariant} to learn disentangled representations. We use IP-IRM model trained on CIFAR-100 provided by the authors.

We note that there are many other methods for learning disentangled representations (\citet{wu2019disentangling,nemeth2020adversarial,ren2021rethinking,kugelgen2021self}, to name a few), however, they all require training an encoder-decoder model from scratch and can not take advantage of powerful feature extractors pre-trained on large datasets as in our setting. In comparison to these works, the simplicity and scalability of our method (as well as of using vanilla SimCLR features) come at a cost, i.e., we forego the ability to visualize disentanglement via controlled image generation due to the absence of a generator/decoder. Instead, we demonstrated disentanglement theoretically (\S\ref{section:theory}) and verify it empirically via correlation analysis of learned style and content factors, similar to prior works that studied disentanglement in settings without a generator/decoder \citep{zimmermann2021contrastive,kugelgen2021self}.

\paragraph{Disentanglement} In Table \ref{tb:correlations} we summarize the disentanglement metrics for the smallest considered $\lambda=1$. In the style correlation columns (Style Corr.), we report the correlation between the corresponding style value (encoded as $-1$ for the original images and $+1$ for the transformed ones) and the factor corresponding to style in the learned representations. None of the baselines explicitly identify style factors, thus we use the coordinate maximally correlated with the corresponding style as the style factor.

We notice that the blur style is the hardest to learn for both supervised and unsupervised representations. As we will see later, both representations are fairly invariant to this style. Comparing \method\ on \texttt{Supervised} and \texttt{SimCLR}, the style recovery is better on \texttt{Supervised} since \texttt{SimCLR} representations are more robust to style changes \citep{kugelgen2021self}. 

In the style-content disentanglement (SCD) columns, we report the disentanglement of style from content features as in the synthetic experiment in Figure \ref{fig:sim-disentanglement}. Here \texttt{SimCLR} representations appear slightly harder to disentangle using PISCO than \texttt{Supervised} representations. In the SCD of original representations for both \texttt{Supervised} and \texttt{SimCLR}, as expected, we observe that these representations are more entangled with the styles, especially the \texttt{Supervised} representations. Comparing the SCD for PISCO with that of IP-IRM, we see that PISCO can post-process popular pre-trained representations to achieve comparable or better disentanglement without re-training (i.e., in-processing). Overall we conclude that \method\ is successful in isolating style and content.



\begin{figure}
\captionsetup[subfigure]{justification=centering}
     \centering
     \begin{subfigure}[b]{0.22\textwidth}
         \centering
         \includegraphics[width=\textwidth]{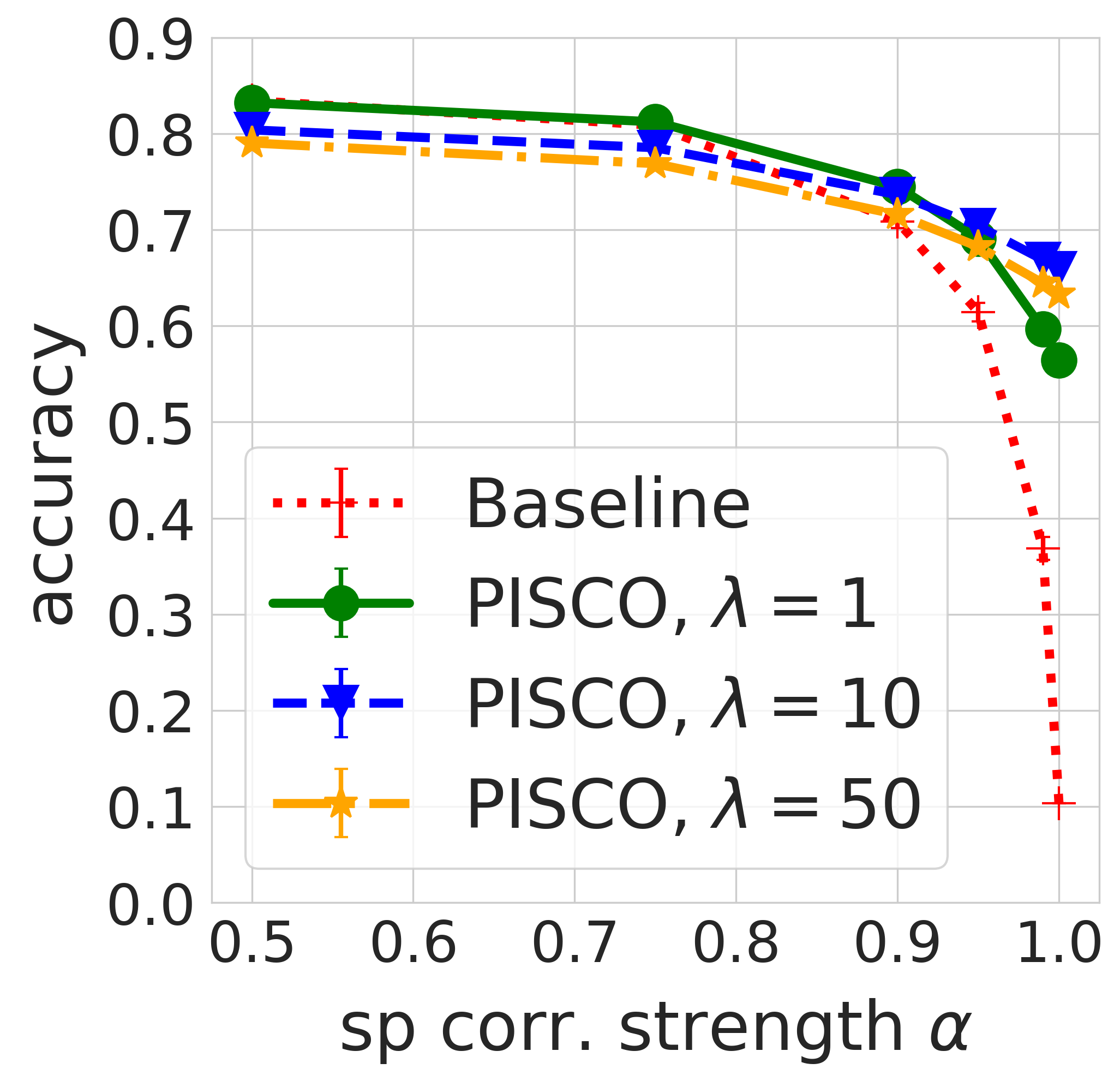}
         \caption{Rotation - \texttt{Supervised}}
         \label{fig:rotat-resnet}
     \end{subfigure}
     \hfill
     \begin{subfigure}[b]{0.22\textwidth}
         \centering
         \includegraphics[width=\textwidth]{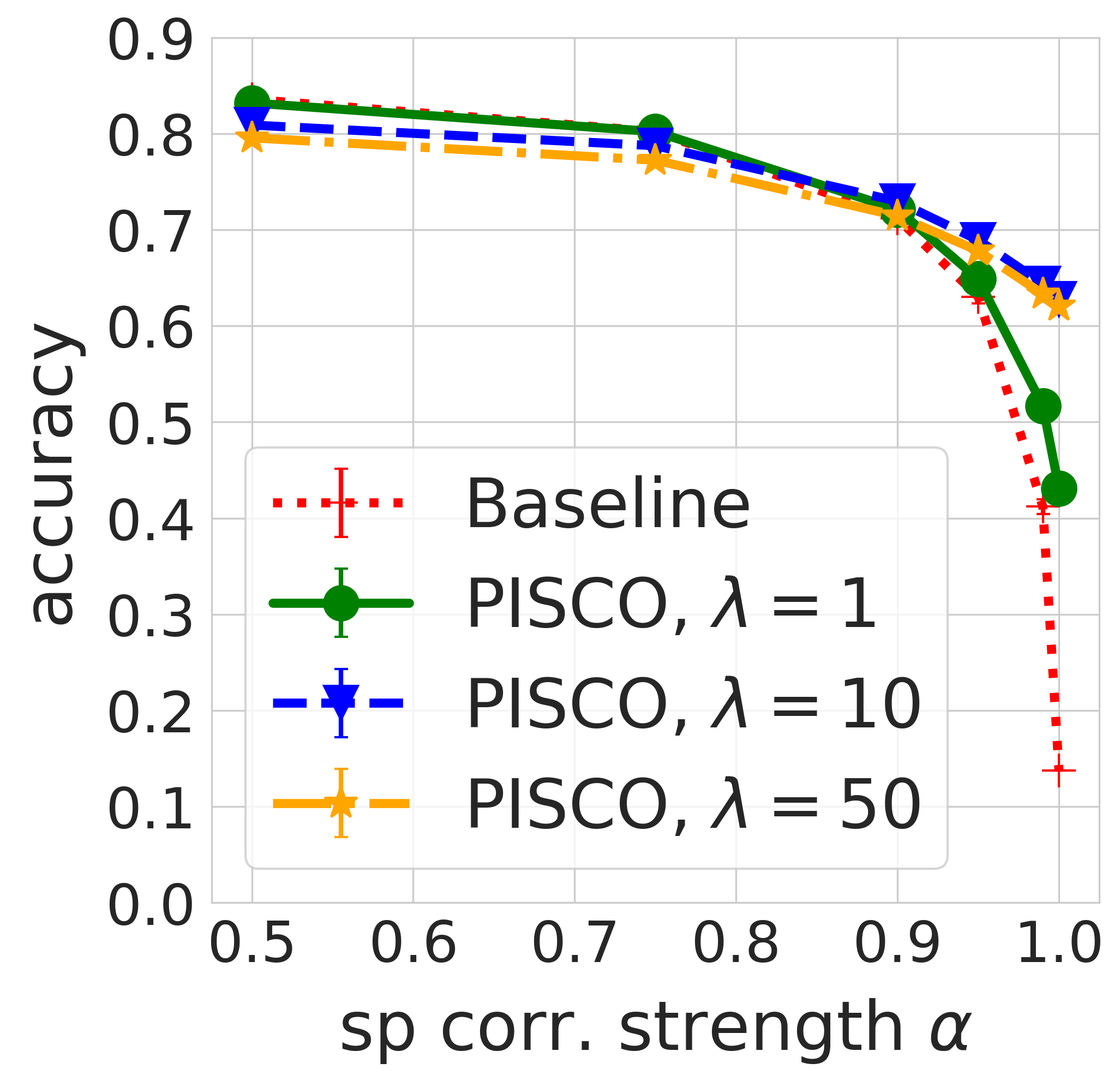}
         \caption{Contrast - \texttt{Supervised}}
         \label{fig:contr-resnet}
     \end{subfigure}
     \hfill
     \begin{subfigure}[b]{0.22\textwidth}
         \centering
         \includegraphics[width=\textwidth]{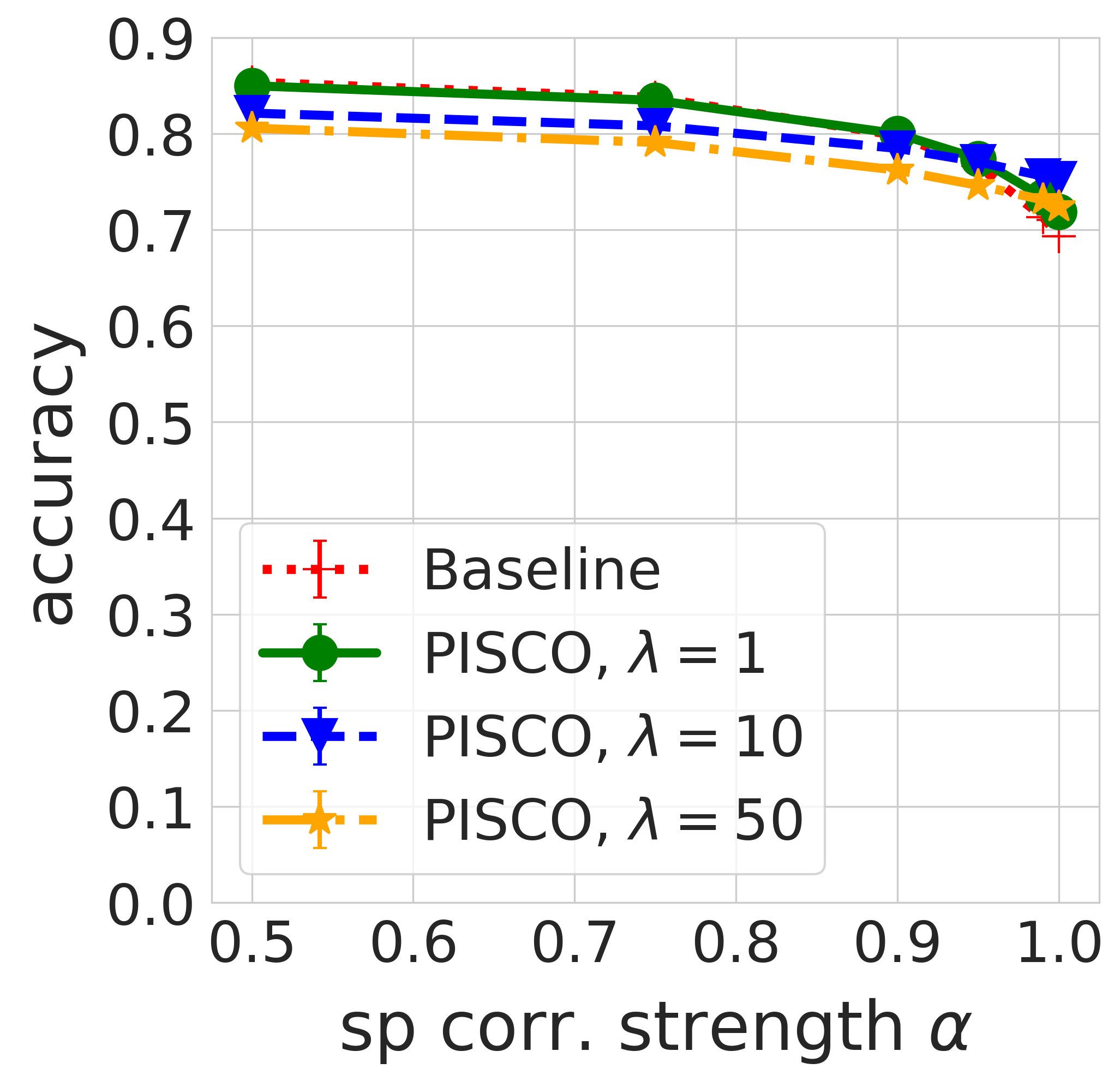}
         \caption{Blur - \texttt{Supervised}}
         \label{fig:blur-resnet}
     \end{subfigure}
     \hfill
     \begin{subfigure}[b]{0.22\textwidth}
         \centering
         \includegraphics[width=\textwidth]{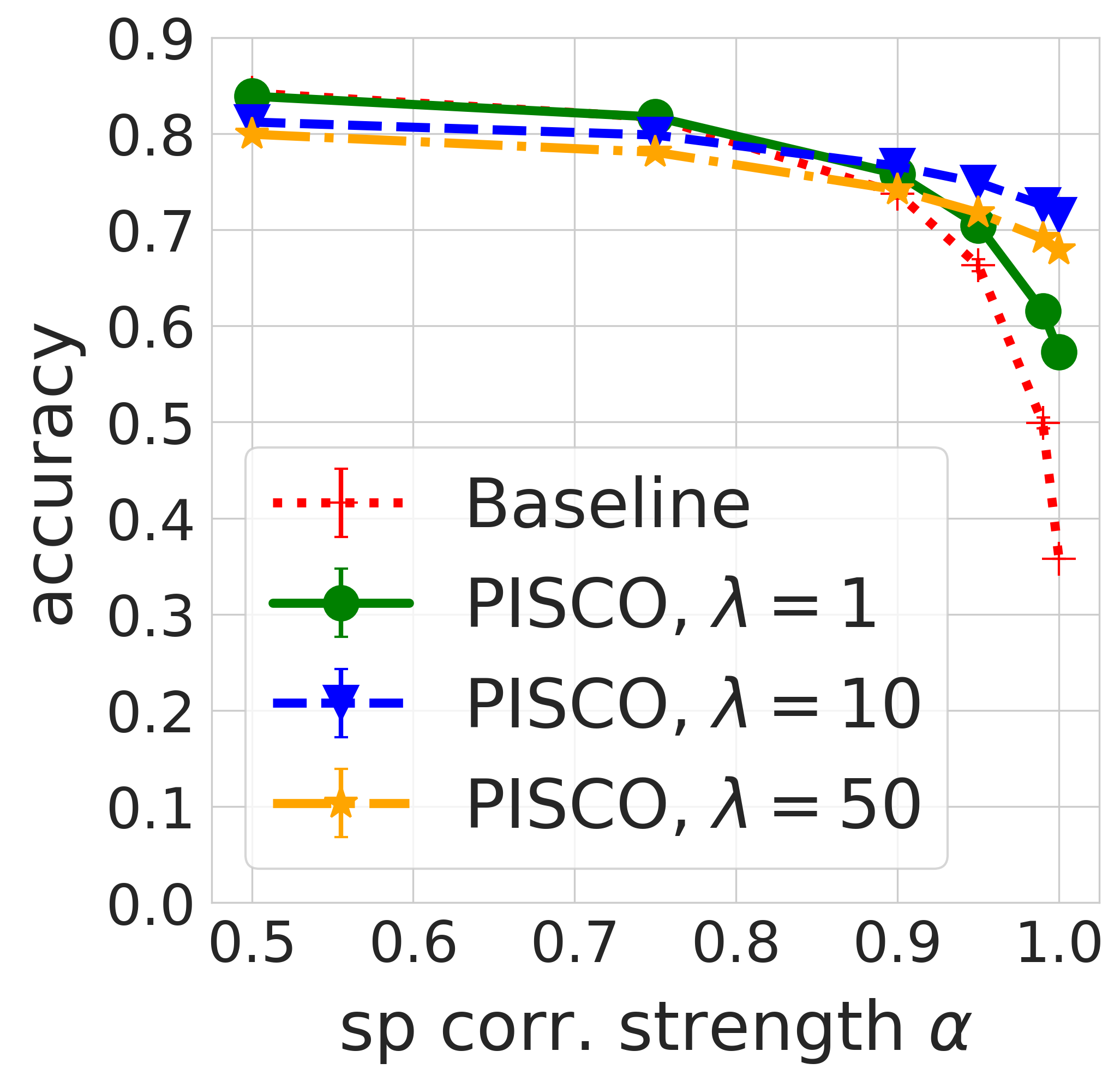}
         \caption{Satur. - \texttt{Supervised}}
         \label{fig:sat-resnet}
     \end{subfigure}
         \centering
         \caption{OOD accuracy of \texttt{Supervised} representations on CIFAR-10 where the label is spuriously correlated with the corresponding transformation. \method\ significantly improves OOD accuracy, especially in the case of rotation. Both $\lambda=1$ and $\lambda=10$ preserve in-distribution accuracy, while larger $\lambda=50$ may degrade it as per \eqref{eq:loss}.
         }
         \label{fig:resnet_results}

\end{figure}

\begin{figure}
\captionsetup[subfigure]{justification=centering}
     \centering
     \begin{subfigure}[b]{0.22\textwidth}
         \centering
         \includegraphics[width=\textwidth]{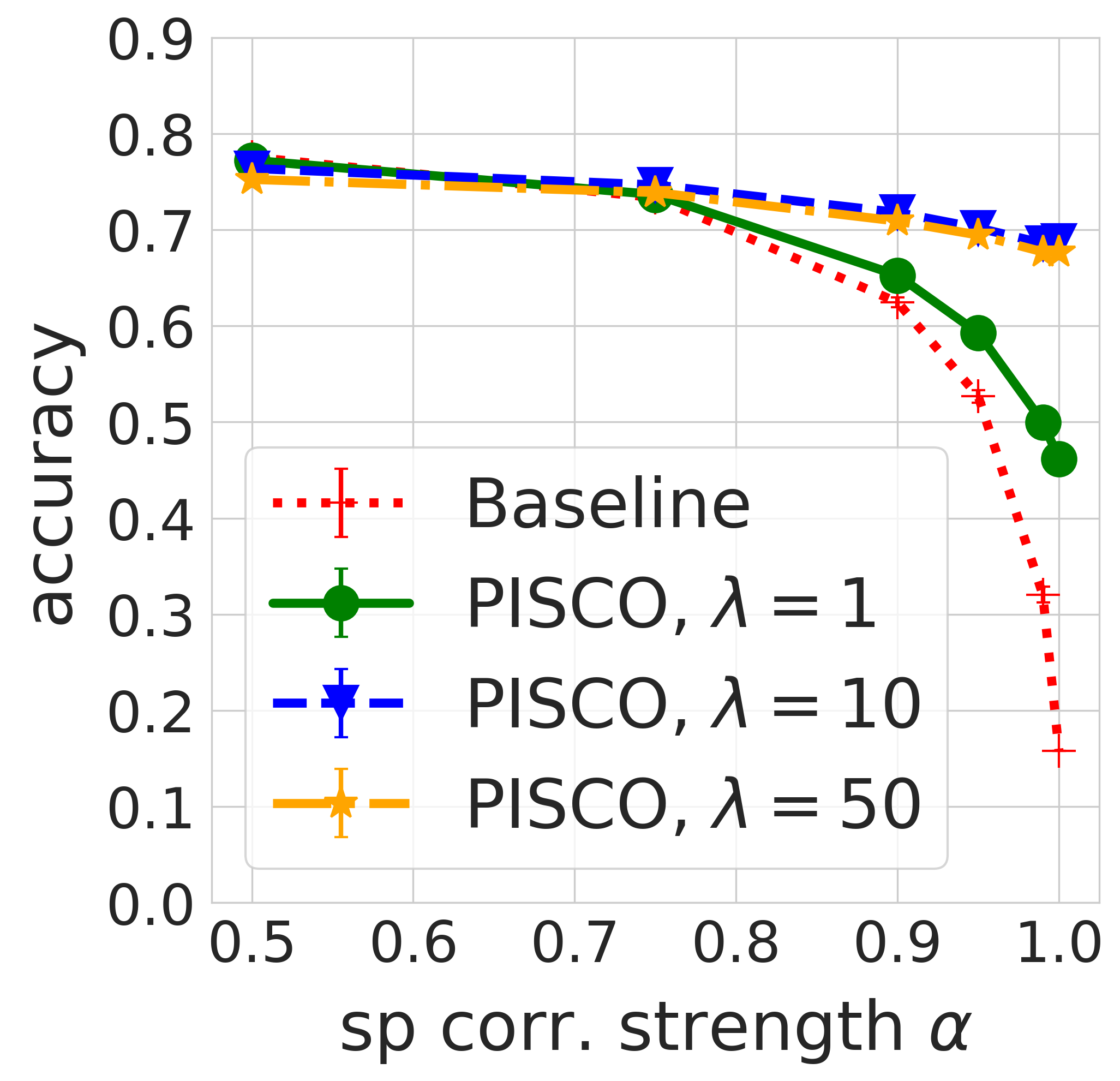}
         \caption{Rotation - \texttt{SimCLR}}
         \label{fig:rotat-simclr}
     \end{subfigure}
     \hfill
     \begin{subfigure}[b]{0.22\textwidth}
         \centering
         \includegraphics[width=\textwidth]{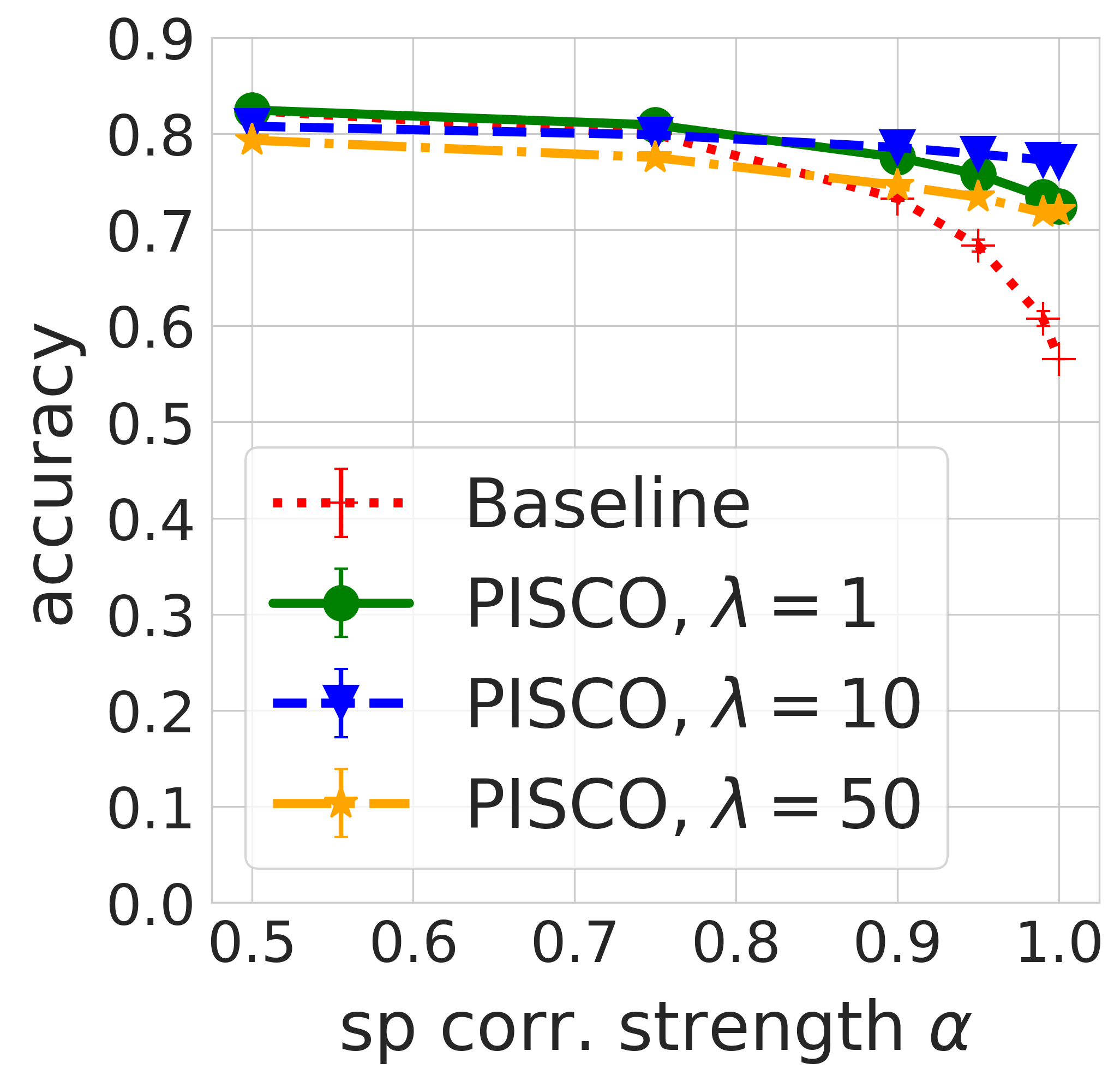}
         \caption{Contrast - \texttt{SimCLR}}
         \label{fig:contr-simclr}
     \end{subfigure}
     \hfill
     \begin{subfigure}[b]{0.22\textwidth}
         \centering
         \includegraphics[width=\textwidth]{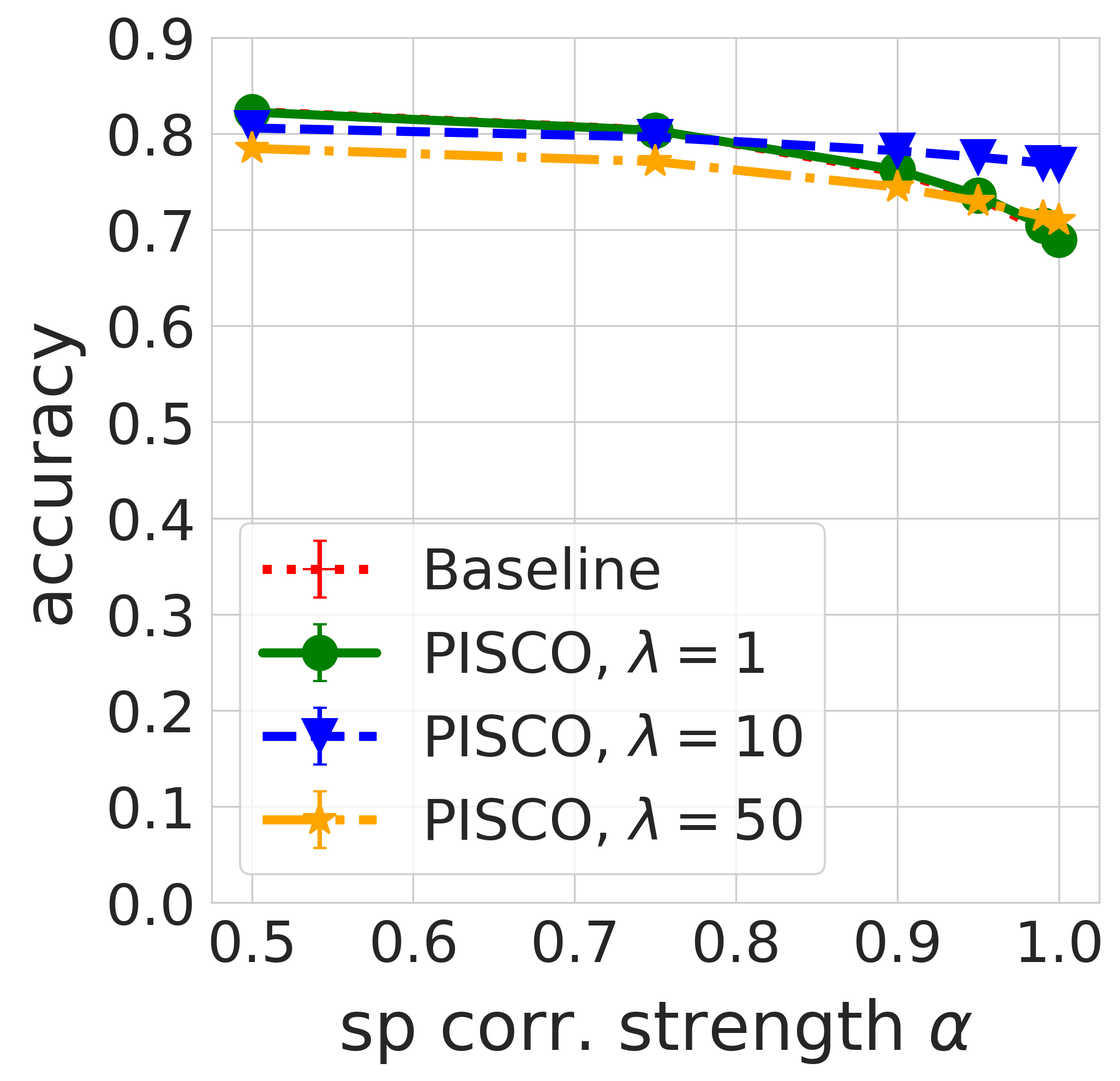}
         \caption{Blur - \texttt{SimCLR}}
         \label{fig:blur-simclr}
     \end{subfigure}
     \hfill
     \begin{subfigure}[b]{0.22\textwidth}
         \centering
         \includegraphics[width=\textwidth]{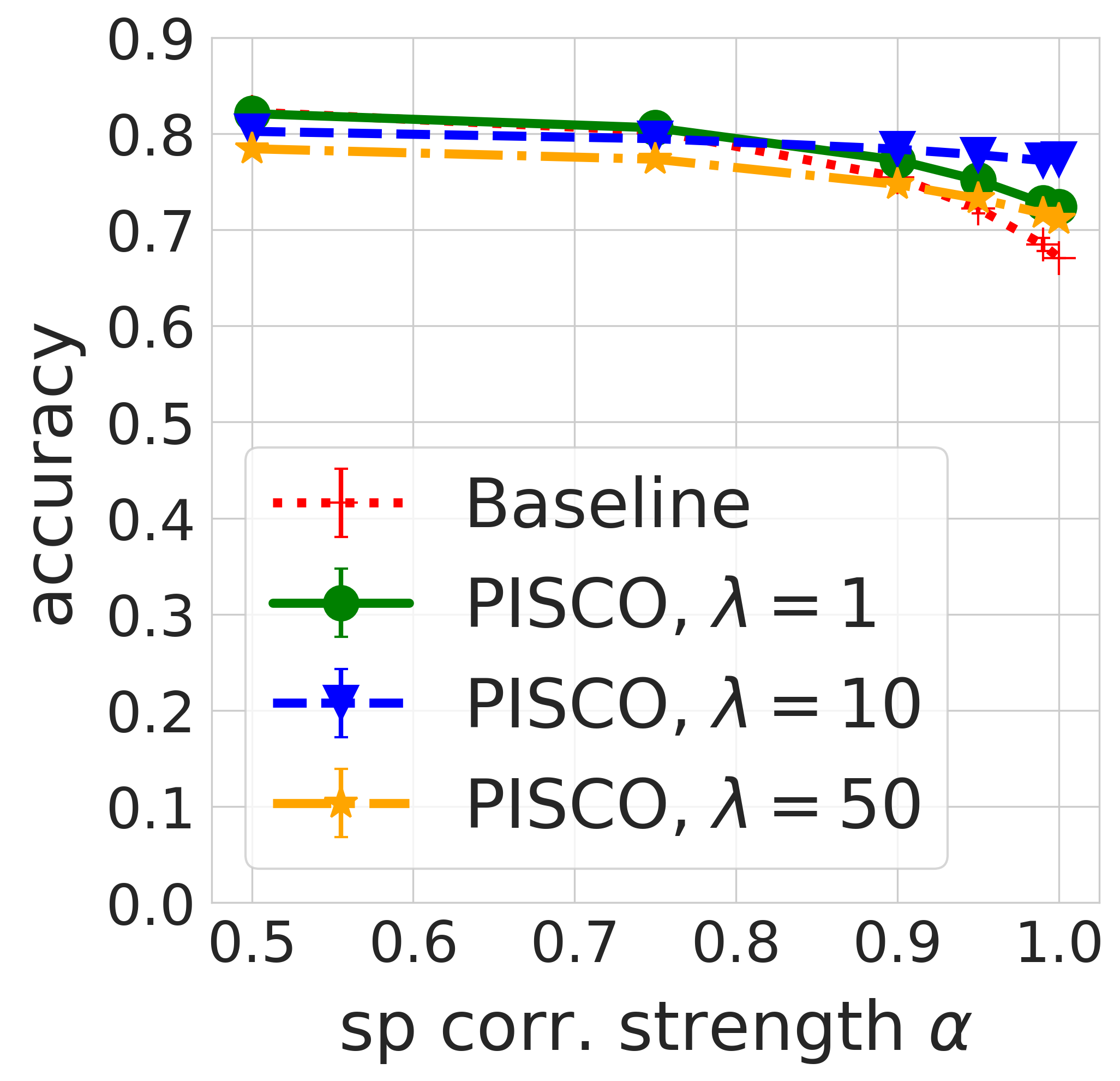}
         \caption{Saturation - \texttt{SimCLR}}
         \label{fig:sat-simclr}
     \end{subfigure}
         \centering
         \caption{OOD accuracy of \texttt{SimCLR} representations on CIFAR-10 where the label is spuriously correlated with the corresponding transformation. Results are analogous to Figure \ref{fig:resnet_results}. The \texttt{SimCLR} baseline representations are less sensitive to contrast and saturation but remain sensitive to rotation.
         }
         \label{fig:simclr_results}
        
\end{figure}

\begin{table*}
 \caption{Content and style disentanglement of \texttt{Supervised} and \texttt{SimCLR} representations on CIFAR-10 with \method. The style correlation columns (style Corr.) show correlations between styles in the data and representations corresponding to style. The isolation of style from content is measured with style-content disentanglement (SCD, see Figure \ref{fig:sim-disentanglement}). In the last column we compare to representations learned by IP-IRM.
 Bold denotes best results.
 }
 \label{tb:correlations}
 \centering
\begin{tabular}{l|cc|cc||cc|cc|cc} 
\noalign{\hrule height 1.5pt}

\multirow{ 3}{*}{\makecell{Style}} & \multicolumn{4}{c||}{\makecell{Supervised}} & \multicolumn{6}{l}{\makecell{Unsupervised}}   \\ \cline{2-5} \cline{6-11}\cline{2-5} \cline{6-11}
{} & \multicolumn{2}{c|}{\makecell{\texttt{Supervised}}}& \multicolumn{2}{c||}{\makecell{\method}} & \multicolumn{2}{c|}{\makecell{\texttt{SimCLR}}}& \multicolumn{2}{c|}{\makecell{\method}} & \multicolumn{2}{c}{\makecell{IP-IRM}} \\\cline{2-3} \cline{4-5} \cline{6-7} \cline{8-11}

{} & \makecell{Style\\Corr.} &  \makecell{SCD} & \makecell{Style\\Corr.} & \makecell{SCD} & \makecell{Style\\Corr.} &  \makecell{SCD} & \makecell{Style\\Corr.} &  \makecell{SCD} & \makecell{Style\\Corr.} &  \makecell{SCD}  \\
\noalign{\hrule height 0.5pt}
blur                &                0.319     &          0.090       &            \textbf{0.716}      &    \textbf{0.051}        &            0.304    &    0.096   &        \textbf{0.719}   &     0.060  &        0.032   &  \textbf{0.022}  \\
contrast           &                0.490    &          0.243       &            \textbf{0.927}       &     \textbf{0.055}       &            0.094     &    0.076   &        \textbf{0.897}    &     \textbf{0.049}  &        0.419   &     0.188 \\
rotation              &                0.746    &         0.212       &            \textbf{0.936}     &      \textbf{0.029}       &            0.368    &   0.182    &        \textbf{0.945}    &     \textbf{0.056}   &        0.617   &     0.114 \\
saturation              &                0.641   &          0.204      &             \textbf{0.882}      &      \textbf{0.048}       &            0.120  &   0.071    &         \textbf{0.738}    &     0.060   &        0.219   &     \textbf{0.044} \\

\noalign{\hrule height 0.5pt}
\end{tabular}
\end{table*}

\paragraph{Spurious correlations}
Next, we create four variations of CIFAR-10 where labels are spuriously correlated with one of the four styles (image corruptions). Specifically, in the training dataset, we corrupt images from the first half of the classes with probability $\alpha$ and from the second half of the classes with probability $1-\alpha$. In test data the correlation is reversed, i.e., images from the first half of the classes are corrupted with probability $1-\alpha$ and images from the second half with probability $\alpha$ (see \S\ref{supp:exp-details} for details). Thus, for $\alpha=0.5$ train and test data have the same distribution where each image is randomly transformed with the corresponding image corruption type, and $\alpha=1$ corresponds to the extreme spurious correlation setting.

For each $\alpha$ we train and test a linear model on the original representations and on \method\ representations (in this and subsequent experiments all learned style factors are discarded for downstream tasks; see \S\ref{supp:exp-details} for additional details) for varying $\lambda$. Recall that here we use the same \method\ transformation matrices learned previously without knowledge of the specific corruption type and spurious correlation value $\alpha$ of a given dataset.
We summarize results for \texttt{Supervised} features in Figure \ref{fig:resnet_results} and for \texttt{SimCLR} features in Figure \ref{fig:simclr_results}.
\method\ improves upon both original representations and across all transformations. For $\lambda=1$, \method\ always preserves the in-distribution accuracy, i.e. when $\alpha=0.5$, and improves upon the baselines in the presence of spurious correlations. Larger $\lambda=50$ can degrade in-distribution accuracy in some cases (recall that $\lambda$ controls the tradeoff between the reconstruction of the original features with the content factors and style-content disentanglement per \eqref{eq:loss}), while $\lambda=10$ provides a favorable tradeoff with a small reduction of in-distribution accuracy and large improvements when spurious correlations are present.

Comparing results across the representations, we notice that \texttt{SimCLR} features are less sensitive to image transformations as discussed previously. However, for both representations, spurious correlation with rotation causes a significant accuracy drop without \method\ post-processing.


\begin{table}
 \caption{Accuracy of \texttt{Supervised} representations on CIFAR-10 test set in-distribution, i.e., no transformation (referred to as ``none''; last row), and OOD when modified with the corresponding transformation. \method\ with $\lambda=1$ provides significant improvements for rotation, contrast, and saturation while preserving in-distribution accuracy.
 }
 \label{tb:ood_gen_resnet}
 \centering
 
\begin{tabular}{lccccc}
\toprule
Style &  \makecell{Baseline\\\scriptsize{(\texttt{Supervised})}} &  \makecell{\method\\(\small{$\lambda = 1$})} &  \makecell{\method\\(\small{$\lambda = 10$})} &  \makecell{\method\\(\small{$\lambda = 50$})} \\
\midrule
      rotation &                0.678 &                \textbf{0.737} &                 0.733 &                 0.710 \\
    contrast &                0.625 &                0.683 &                 \textbf{0.744} &                 0.726 \\
      saturation &                0.699 &                \textbf{0.758} &                 0.745 &                 0.721 \\
      blur &                \textbf{0.817} &                \textbf{0.817} &                 0.793 &                 0.775 \\
     
     none &  \textbf{0.873}      & 0.870  &  0.844   & 0.826      \\
\bottomrule
\end{tabular}
\vspace{-0.2cm}
\end{table}

\paragraph{Domain generalization} To evaluate the domain generalization performance, we train a logistic regression classifier on the corresponding representation of the clean CIFAR-10 dataset and compute accuracy on the test set with every image transformed with one of the four corruptions, as well as the original test set to verify the in-distribution accuracy. Results are presented in Table \ref{tb:ood_gen_resnet} for \texttt{Supervised} features and in Table \ref{tb:ood_gen_simclr} for \texttt{SimCLR} features. We observe significant OOD accuracy gains when applying \method\ post-processing on the \texttt{Supervised} features while preserving the in-distribution accuracy for $\lambda=1$. In this experiment, we see that \texttt{SimCLR} features are sufficiently robust and perform as well as \method\ post-processing with $\lambda=1$. Overall we have observed that applying our method with smaller $\lambda=1$ never hurts the performance, while it yields significant OOD accuracy gains in many settings.






\begin{table}
 \caption{
 Accuracy of \texttt{SimCLR} representations on CIFAR-10 test set in-distribution, i.e., no transformation (referred to as ``none''; last row), and OOD when modified with the corresponding transformation. \texttt{SimCLR} features are robust to the considered transformations and perform similarly to \method\ with $\lambda=1$.
 }
 \label{tb:ood_gen_simclr}
 \centering

\begin{tabular}{lccccc}
\toprule
 Style &  \makecell{Baseline\\\scriptsize{(SimCLR)}} &  \makecell{\method\\(\small{$\lambda = 1$})} &  \makecell{\method\\(\small{$\lambda = 10$})} &  \makecell{\method\\(\small{$\lambda = 50$})} \\
\midrule
       rotation &             0.620 &                0.625 &                \textbf{0.697} &                0.696 \\
    contrast &             \textbf{0.816} &                0.814 &                0.806 &                0.794 \\
      saturation &             \textbf{0.810} &                0.806 &                0.789 &                0.774 \\
      blur &             \textbf{0.808} &                0.801 &                0.793 &                0.780 \\
      
      none &     \textbf{0.828}    & 0.827	  & 0.808    & 0.792      \\
      
\bottomrule
\end{tabular}
\vspace{-0.2cm}
\end{table}

\subsection{Stylized ImageNet}
In this experiment, we evaluate the domain generalization of \method\ on more sophisticated styles obtained via style transfer \citep{huang2017arbitrary}, similar to the Stylized ImageNet \citep{geirhos2018imagenet} dataset. In addition, we evaluate the ability of \method\ to generalize to styles that are similar to but weren't used to fit \method. We used the ``dog sketch'' and ''Picasso dog'' styles to obtain \method\ transformation and evaluate on two additional similar but unseen styles, ``woman sketch'' and ``Picasso self-portrait''. See Figure \ref{fig:stylized_images} and \S\ref{supp:exp-details} for visualization and additional details.

As in the CIFAR-10 domain generalization experiment, the logistic regression classifier is trained on the original train images and tested on transformed test images. In Table \ref{tb:ood_gen_imagenet} we report results for ResNet-50 features pre-trained on ImageNet (Baseline) and for the same features transformed with \method\ with $\lambda=1$. \method\ improves OOD top-1 and top-5 accuracies across all four styles, including the unseen ones, while maintaining good in-distribution performance. We also report analogous results for another popular feature extractor, MAE-ViT-Base \citep{he2022masked}, in Table \ref{tb:ood_mae_vit_1}. We again observe that \method\ ($\lambda=1$) improves top-1 and top-5 OOD performances with no degradation of the in-distribution performance. We present results for other values of $\lambda$ in \S\ref{supp:results}.

We note that in this experiment the \emph{sample manipulations and annotations} required for our method (\S\ref{sec:method}) were simple to obtain. We generated the styles for fitting \method\ with basic text prompts using DALL$\cdot$E 2 and obtained pairs of original and transformed images using a style transfer method \cite{huang2017arbitrary}. Thus, this experiment demonstrates how \method\ can be applied to improve robustness to a variety of distribution shifts in vision tasks where we have some amount of prior knowledge needed to formulate a relevant prompt to obtain a style image. 

\begin{table}
 \caption{Top-1 and top-5 accuracies on 5 variations of the ImageNet test set for Baseline pre-trained ResNet-50 features and the corresponding post-processed \method\, ($\lambda=1$) features.
 }
 \label{tb:ood_gen_imagenet}
 \centering

\begin{tabular}{lccccccc}
\toprule
          Style & \multicolumn{2}{l}{\makecell{Baseline} } & \multicolumn{2}{l}{\makecell{\method}}  \\
{} &       Top-1  & Top-5  &       Top-1  & Top-5    \\
\midrule
dog sketch                    &                0.516 &  0.752 &                 \textbf{0.546} &   \textbf{0.777} \\
woman sketch               &                0.478 &  0.712 &                 \textbf{0.518} &   \textbf{0.752} \\
Picasso dog                 &                0.445 &  0.686 &                 \textbf{0.500}&   \textbf{0.738} \\
Picasso s.-p.                &                0.474 &  0.706 &                 \textbf{0.514} &   \textbf{0.747} \\
none                               &                 \textbf{0.757} &   \textbf{0.927} &                0.749 &  0.921 \\
\bottomrule
\end{tabular}
\vspace{-0.2cm}
\end{table}

\begin{table}
 \caption{Top-1 and top-5 accuracies on 5 variations of the ImageNet test set for Baseline pre-trained MAE-ViT-Base features and the corresponding post-processed \method\, ($\lambda=1$) features.
 }
 \label{tb:ood_mae_vit_1}
 \centering

\begin{tabular}{lccccccc} 
\toprule
              Style & \multicolumn{2}{l}{\makecell{Baseline}} & \multicolumn{2}{l}{\makecell{\method}}  \\
{} & Top-1 &  Top-5 & Top-1 & Top-5  \\
\midrule
dog sketch         &                   0.530    &  0.749   &                \textbf{0.575} &  \textbf{0.773}  \\
Picasso dog       &                   0.472     &  0.686  &                \textbf{0.519} &  \textbf{0.716} \\
Picasso s.-p.      &                   0.512     &  0.727  &                \textbf{0.558} &  \textbf{0.752}  \\
woman sketch    &                   0.504    &  0.719   &                \textbf{0.550 }&  \textbf{0.746}  \\
none                    &                   0.811      &  0.952  &                \textbf{0.818} &  \textbf{0.953}  \\

\bottomrule
\end{tabular}
\vspace{-0.2cm}
\end{table}

\section{Conclusion}
In this paper, we studied the problem of disentangling style and content of pre-trained visual representations. We presented \method, a simple post-processing algorithm with theoretical guarantees. In our experiments, we demonstrated that post-processing with \method\ can improve OOD performance of popular pre-trained deep models while preserving the in-distribution accuracy. Our method is computationally inexpensive and simple to implement.

In our experiments, we mainly were interested in discarding the style factors and keeping the style-invariant content factors for OOD generalization. However, we also demonstrated both theoretically and empirically that the learned style factors are representative of the presence or absence of the corresponding styles. Thus, the values of the style factors can be used to assist in outlier/OOD samples detection, or in some special cases of image retrieval, e.g., finding all images with a specific style.

One limitation of our method is the reliance on the availability of meaningful data transformations (or augmentations). While there are plenty of such transformations for images, they could be harder to identify for other data modalities. Natural language processing is one example where it is not as straightforward to define meaningful text augmentations. However, text data augmentations is also an active research area \citep{wei2019eda,bayer2021survey,shorten2021text} which could enable applications of \method\ to NLP.

Another interesting direction to explore is extending our model to various weak supervision settings \citep{bouchacourt2018multi,shu2019weakly,chen2020weakly}. In comparison to data augmentation functions, such forms of supervision are typically easier to obtain outside of the image domain. Thus, an extension of our model to weak supervision could enable disentanglement via post-processing for a broader class of data modalities.

\section*{Acknowledgements} This paper is based upon work supported by the National Science Foundation (NSF) under grants no.\ 2027737 and 2113373, and the Rensselaer-IBM AI Research Collaboration (\url{http://airc.rpi.edu}), part of the IBM AI Horizons Network (\url{http://ibm.biz/AIHorizons}).

\bibliography{YK,sm}
\bibliographystyle{icml2023}

\appendix
\section{Supplementary proofs}

\subsection{Proof of Corollary \ref{cor:correlation-recovery}}
\label{proof-correlation-recovery}
\begin{proof}
We denote $\hat \bz_S \triangleq [\hat \bz]_{\stylefactor}$,  $\bz_S \triangleq [\bz]_{\stylefactor}$, $\Sigma \triangleq \operatorname{cov}(\bz)$  and  $\Delta$ as the diagonal matrix of $\Sigma$. Notice that 
\begin{equation}
    \operatorname{corr}(\bz) = \Delta^{-\nicefrac{1}{2}} \Sigma \Delta^{-\nicefrac{1}{2}}\,.
\end{equation} From Definition \ref{def:direp} $\hat \bz_S = [\bP \bA]_{\stylefactor, \stylefactor} \bz_S$ where $[\bP \bA]_{\stylefactor, \stylefactor}$ is a diagonal matrix. We denote $[\bP \bA]_{\stylefactor, \stylefactor}$ as $\bD$. Then the covariance matrix of $\hat\bz_S$ is \begin{equation}
    \begin{aligned}
     \operatorname{cov}(\hat \bz_S) &= \operatorname{cov}(\bD \bz_S)
      = \bD \Sigma \bD
    \end{aligned}
\end{equation} and its diagonal matrix is 
\begin{equation}
\begin{aligned}
  \operatorname{diag}(\bD \Sigma \bD) &= \bD\operatorname{diag}(\Sigma) \bD\\
  &= \bD\Delta \bD\\
  & = \Delta \bD^2\,,
\end{aligned}
\end{equation} where the last equality is obtained using the fact that the matrix multiplication of the diagonal matrices is commuting. Expressing $\operatorname{corr}(\hat \bz_S)$ in terms of $\operatorname{cov}(\hat \bz_S)$ and it's diagonal matrix we obtain
\begin{equation}
    \begin{aligned}
     &\operatorname{corr}(\hat \bz_S) \\
     & = \operatorname{diag}\big\{\operatorname{corr}(\hat \bz_S)\big\}^{-\nicefrac{1}{2}} \operatorname{corr}(\hat \bz_S) \operatorname{diag}\big\{\operatorname{corr}(\hat \bz_S)\big\}^{-\nicefrac{1}{2}}\\
     & = \{ \Delta \bD^2 \}^{-\nicefrac{1}{2}} \bD \Sigma \bD \{ \Delta \bD^2 \}^{-\nicefrac{1}{2}}\\
     & = \Delta^{-\nicefrac{1}{2}} \bD^{-1} \bD \Sigma \bD  \bD^{-1} \Delta^{-\nicefrac{1}{2}}\\
     & = \Delta^{-\nicefrac{1}{2}}\Sigma \Delta^{-\nicefrac{1}{2}} = \operatorname{corr}(\bz_S)
    \end{aligned}
\end{equation} and we obtain \eqref{eq:corr-recovery}.
\end{proof}

\subsection{Proof of Theorem \ref{th:style-factors}}
\label{sec:proof-style-factor}

\begin{proof}

The closed form of $\hat \bp_j$ in \eqref{eq:jth-direction} can be written as:
\begin{equation}
    \hat\bp_j = \hat \Sigma_{\bu}^{\dagger} \frac{1}{2n} \sum_{i = 1}^n \Big[(\bu_i - \bar \bu)(\by_i^{(j)} - \bar \by) +(\bu_i^{(j)} - \bar \bu)(\tilde \by_i^{(j)} - \bar \by) \Big ]
\end{equation} where $\bar \bu =\frac{1}{2n} \sum_{i = 1}^n \big[ \bu_i  + \bu_i^{(j)} \big] $, $\hat \Sigma_{\bu} = \frac{1}{2n} \sum_{i = 1}^n\big[ \bu_i \bu_i^\top  + \bu_i^{(j)} \{\bu_i^{(j)}\}^\top \big] - \bar \bu \bar \bu^\top$, $\hat \Sigma_{\bu}^{\dagger}$ is the Moore-Penrose inverse of $\hat \Sigma_{\bu}$, $\bar \by = \frac{1}{2n} \sum_{i = 1}^n \big[ \by_i^{(j)} + \tilde \by_i^{(j)}\big]$, and  $\bar \epsilon^{(j)} = \frac{1}{2n} \sum_{i = 1}^n \big[ \epsilon_i^{(j)} + \tilde \epsilon_i^{(j)}\big]$. 
Here, defining $\bar \bz =\frac{1}{2n} \sum_{i = 1}^n \big[ \bz_i  + \bz_i^{(j)} \big]$ and $\hat \Sigma_{\bz} = \frac{1}{2n} \sum_{i = 1}^n\big[ \bz_i \bz_i^\top  + \bz_i^{(j)} \{\bz_i^{(j)}\}^\top \big] - \bar \bz \bar \bz^\top$ we notice the following.
\begin{align}
    \by_i^{(j)} - \bar \by &= \beta_j \big([\bz_i]_j - [\bar \bz]_j\big) + (\epsilon_i^{(j)} - \bar \epsilon^{(j)})\\
     \tilde \by_i^{(j)} - \bar \by &= \beta_j \big([\bz_i^{(j)}]_j - [\bar \bz]_j\big) + (\tilde \epsilon_i^{(j)} - \bar \epsilon^{(j)})\\
     \hat \Sigma_{\bu} &= \bA \hat \Sigma_{\bz} \bA^\top \label{eq:sigma-u}
\end{align}
and hence
\begin{equation}
\label{eq:cov-uy}
    \begin{aligned}
     & \frac{1}{2n} \sum_{i = 1}^n \Big[(\bu_i - \bar \bu)(\by_i^{(j)} - \bar \by) +(\bu_i^{(j)} - \bar \bu)(\tilde \by_i^{(j)} - \bar \by) \Big ]  \\
     & = \frac{1}{2n} \sum_{i = 1}^n \Big[\bA (\bz_i - \bar \bz)\beta_j \big([\bz_i]_j - [\bar \bz]_j\big)\\
     & ~~~~~~~~~~~~~~~~+\bA(\bz_i^{(j)} - \bar \bz)\beta_j \big([\bz_i^{(j)}]_j - [\bar \bz]_j\big) \Big ]\\
      & ~~+ \frac{1}{2n} \sum_{i = 1}^n \Big[\bA (\bz_i - \bar \bz)(\epsilon_i^{(j)} - \bar \epsilon^{(j)})\\
     & ~~~~~~~~~~~~~~~~+\bA(\bz_i^{(j)} - \bar \bz)(\tilde \epsilon_i^{(j)} - \bar \epsilon^{(j)}) \Big ]\\
     & \triangleq \operatorname{cov}_1 + \operatorname{cov}_2
    \,,
    \end{aligned}
\end{equation} where 
\[
\begin{aligned}
 \operatorname{cov_1} & \triangleq \frac{1}{2n} \sum_{i = 1}^n \Big[\bA (\bz_i - \bar \bz)\beta_j \big([\bz_i]_j - [\bar \bz]_j\big)\\
     & ~~~~~~~~~~~~~~~~+\bA(\bz_i^{(j)} - \bar \bz)\beta_j \big([\bz_i^{(j)}]_j - [\bar \bz]_j\big) \Big ]\\
      & = \beta_j \bA \frac{1}{2n} \sum_{i = 1}^n \Big[ (\bz_i - \bar \bz) (\bz_i - \bar \bz)^\top e_j\\
     & ~~~~~~~~~~~~~~~~+(\bz_i^{(j)} - \bar \bz)(\bz_i^{(j)} - \bar \bz)^\top e_j \Big ]\\
     & = \beta_j \bA \hat \Sigma_{\bz} e_j\,.
\end{aligned}
\] and with $\hat \Sigma_{\bz, \epsilon} \triangleq \frac{1}{2n} \sum_{i = 1}^n \Big[ (\bz_i - \bar \bz)(\epsilon_i^{(j)} - \bar \epsilon^{(j)})
    +(\bz_i^{(j)} - \bar \bz)(\tilde \epsilon_i^{(j)} - \bar \epsilon^{(j)}) \Big ]$ we obtain 
\[
\begin{aligned}
 \operatorname{cov_2} & \triangleq \frac{1}{2n} \sum_{i = 1}^n \Big[\bA (\bz_i - \bar \bz)(\epsilon_i^{(j)} - \bar \epsilon^{(j)})\\
     & ~~~~~~~~~~~~~~~~+\bA(\bz_i^{(j)} - \bar \bz)(\tilde \epsilon_i^{(j)} - \bar \epsilon^{(j)}) \Big ]\\
     & = \bA \hat \Sigma_{\bz, \epsilon}
\end{aligned}
\]

Using the identities \eqref{eq:sigma-u} and \eqref{eq:cov-uy} we rewrite $\hat \bp_j$ as 
\begin{equation}
\label{eq:eq-pj}
\begin{aligned}
 \hat \bp_j &= (\bA \hat \Sigma_{\bz} \bA^\top)^\dagger \{\beta_j \bA \hat \Sigma_{\bz} e_j + \bA \hat \Sigma_{\bz, \epsilon}\}\\
 & = (\bA \hat \Sigma_{\bz} \bA^\top)^\dagger  \beta_j \bA \hat \Sigma_{\bz} e_j + (\bA \hat \Sigma_{\bz} \bA^\top)^\dagger \bA \hat \Sigma_{\bz, \epsilon}\\
 & \triangleq \hat \bp_j^{(1)} + \hat \bp_j^{(2)}\,,
\end{aligned}
\end{equation} where $\hat \bp_j^{(1)} \triangleq (\bA \hat \Sigma_{\bz} \bA^\top)^\dagger  \beta_j \bA \hat \Sigma_{\bz} e_j$ and $\hat \bp_j^{(2)} \triangleq (\bA \hat \Sigma_{\bz} \bA^\top)^\dagger \bA \hat \Sigma_{\bz, \epsilon}$. 

Since $n \ge d + 1$ we notice that the covariance matrix $\hat \Sigma_{\bz}$ is invertible. This fact combined with left invertibility of $\bA$ implies that the matrix  $ \bA \hat \Sigma_{\bz}^{\nicefrac{1}{2}}$ is also left invertible. We recall the property of Moore-Penrose inverse that for any left invertible matrix $\bG$ it holds: 
\[
(\bG\bG^\top )^\dagger \bG = \bG (\bG^\top \bG)^{-1}\,.
\] Letting $\bG = \bA \hat \Sigma_{\bz}^{\nicefrac{1}{2}}$ and using the property in \eqref{eq:eq-pj} we obtain 
\begin{equation}
   \begin{aligned}
    \hat \bp_j^{(1)} &= \beta _j (\bA \hat \Sigma_{\bz} \bA^\top)^\dagger  \bA \hat \Sigma_{\bz} e_j\\
    & = \beta _j (\bG\bG^\top)^\dagger  \bG \hat \Sigma_{\bz}^{\nicefrac{1}{2}} e_j\\
    & = \beta _j \bG (\bG^\top \bG)^{-1} \hat \Sigma_{\bz}^{\nicefrac{1}{2}} e_j\\
    & = \beta _j \bA \Sigma_{\bz}^{\nicefrac{1}{2}} \left(\Sigma_{\bz}^{\nicefrac{1}{2}}\bA^\top \bA \Sigma_{\bz}^{\nicefrac{1}{2}}\right )^{-1} \hat \Sigma_{\bz}^{\nicefrac{1}{2}} e_j\\
    & = \beta _j \bA \Sigma_{\bz}^{\nicefrac{1}{2}} \Sigma_{\bz}^{-\nicefrac{1}{2}} (\bA^\top \bA  )^{-1} \Sigma_{\bz}^{-\nicefrac{1}{2}} \hat \Sigma_{\bz}^{\nicefrac{1}{2}} e_j\\
    & = \beta _j \bA  (\bA^\top \bA  )^{-1}  e_j\,.
   \end{aligned}
\end{equation} Hence, we notice that 
\[
\begin{aligned}
 \bA^\top \hat \bp_j^{(1)} &= \bA^\top \hat\beta _j \bA  (\bA^\top \bA  )^{-1}  e_j\\
 & = \beta_j e_j\,.
\end{aligned}
\] 
Repeating same calculation as above we obtain 
\[
\begin{aligned}
 \bA^\top \hat \bp_j^{(2)} & =  \bA^\top (\bA \hat \Sigma_{\bz} \bA^\top)^\dagger  \bA \hat \Sigma_{\bz, \epsilon}\\
 &= \hat \Sigma_\bz^{-1} \hat \Sigma_{\bz, \epsilon}\,.
\end{aligned}
\]

From Assumption \ref{assmp:sample-manipulation} we recall that $(\epsilon^{(j)}, \tilde \epsilon^{(j)})$ and  $(\bz, \bz^{(j)})$ are uncorrelated and hence 
\[
\hat \Sigma_{\bz, \epsilon} \stackrel{\text{a.s.}}{\longrightarrow} \mathbf{0}\,.
\]  Since $\Sigma_\bz$ is invertible we obtain that
\[
\bA^\top \hat \bp_j^{(2)} \stackrel{\text{a.s.}}{\longrightarrow} \mathbf{0}\,,
\] and 
\[
\bA^\top \hat \bp_j \stackrel{\text{a.s.}}{\longrightarrow} \beta_j e_j
\] almost surely. Noticing that $\hat \bp_j$ is the $j$-th row of $\bP$ we conclude that  at almost sure limit it holds: (1) $[\bP \bA]_{\stylefactor, \stylefactor}$ converges to a diagonal matrix, and (2) $[\bP \bA]_{\stylefactor, \contentfactor} \stackrel{\text{a.s.}}{\longrightarrow} \mathbf{0}$.

\end{proof}

\subsection{Proof of Theorem \ref{th:sparse-recovery}}
\label{sec:proof-sparse-recovery}
We divide the proof in two steps which are stated as lemmas.

\begin{lemma}
\label{lemma:sparse-recovery-part2}
With probability at least $1 - \kappa^n$ ($\kappa$ is defined in Theorem \ref{th:sparse-recovery})
the following holds:    \[ \hat\bQ(+\infty) [\bA]_{\cdot, \stylefactor} = \mathbf{0}\,. \]
\end{lemma}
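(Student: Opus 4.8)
\textbf{Proof plan for Lemma \ref{lemma:sparse-recovery-part2}.}

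The plan is to understand the limiting objective in \eqref{eq:loss} as $\lambda \to \infty$ and show that the optimal $\bQ$ is forced to annihilate the column space of $[\bA]_{\cdot, \stylefactor}$. First I would rewrite the empirical matrices in terms of the latent factors: using $\bu_i = \bA\bz_i$ and $\bu_i^{(j)} = \bA\bz_i^{(j)}$, we have $\Delta_j^\top \Delta_j / n = \bA \,\widehat{M}_j\, \bA^\top$ where $\widehat{M}_j \triangleq \frac1n\sum_i (\bz_i - \bz_i^{(j)})(\bz_i - \bz_i^{(j)})^\top$. By Assumption \ref{assmp:sample-manipulation}, each difference $\bz_i - \bz_i^{(j)}$ is supported on the style coordinates $\stylefactor$ (content coordinates are unchanged per \eqref{eq:unchanged-content}), so $\widehat{M}_j$ has nonzero entries only in the $\stylefactor \times \stylefactor$ block. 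Summing over $j \in \stylefactor$, the combined matrix $\widehat{M} \triangleq \frac1m\sum_{j\in\stylefactor}\widehat{M}_j$ is again supported on the $\stylefactor\times\stylefactor$ block, and its restriction $[\widehat M]_{\stylefactor,\stylefactor}$ equals $\frac1{mn}\sum_{j,i}\big([\bz_i-\bz_i^{(j)}]_{\stylefactor}\big)\big([\bz_i-\bz_i^{(j)}]_{\stylefactor}\big)^\top$. This is exactly where Assumption \ref{assmp:linear-independence} and the sample size $n \ge d+1$ enter: I claim that with probability at least $1-\kappa^n$, at least one of the $n$ i.i.d. draws lands in the event $\bE$ of \eqref{eq:invertibility-perturbation}, in which case the $m$ vectors $[\bz_i - \bz_i^{(j)}]_{\stylefactor}$, $j\in\stylefactor$, span $\reals^m$, hence $[\widehat M]_{\stylefactor,\stylefactor} \succ 0$ and therefore $\widehat M$ has rank exactly $m$ with column space equal to $\mathrm{span}\{[\bA]_{\cdot,j} : j\in\stylefactor\}$ after multiplying by $\bA$ — more precisely $\bA\widehat M\bA^\top$ is PSD with range $= \bA(\reals^{\stylefactor}) = \mathrm{colspan}([\bA]_{\cdot,\stylefactor})$.

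Next I would argue about the limit $\lambda\to\infty$. The penalty term $\frac\lambda m \sum_j \operatorname{tr}[\bQ^\top\bQ(\Delta_j^\top\Delta_j/n)] = \lambda\,\operatorname{tr}[\bQ^\top\bQ\,\bA\widehat M\bA^\top]$ blows up unless $\bQ \bA\widehat M\bA^\top\bQ^\top = \mathbf 0$, i.e. unless the rows of $\bQ$ lie in the orthogonal complement of $\mathrm{colspan}(\bA\widehat M\bA^\top) = \mathrm{colspan}([\bA]_{\cdot,\stylefactor})$. Since the constraint set $\{\bQ : \bQ\bQ^\top = \bI\}$ with $\bQ \in \reals^{(d-m)\times d'}$ does contain feasible matrices satisfying this (the orthogonal complement has dimension $d' - \operatorname{rank}([\bA]_{\cdot,\stylefactor}) = d' - m \ge d-m$, using left-invertibility of $\bA$ so its style columns are independent), the sequence of minimizers $\hat\bQ(\lambda)$ must have all accumulation points satisfying $\hat\bQ(+\infty)\bA\widehat M\bA^\top\hat\bQ(+\infty)^\top = \mathbf 0$. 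Because $\bA\widehat M\bA^\top \succeq 0$, this is equivalent to $\hat\bQ(+\infty)\,\bA\widehat M\bA^\top = \mathbf 0$, and since the range of $\bA\widehat M\bA^\top$ is precisely $\mathrm{colspan}([\bA]_{\cdot,\stylefactor})$, we conclude $\hat\bQ(+\infty)[\bA]_{\cdot,\stylefactor} = \mathbf 0$, which is the claim. (I would note that the limit $\bP(+\infty) = \lim_\lambda \bP(\lambda)$ in Theorem \ref{th:sparse-recovery} should be read along a convergent subsequence, or with a tie-breaking rule in the PCA part; this does not affect the style-block conclusion.)

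The main obstacle I anticipate is the probabilistic step: carefully justifying that $[\widehat M]_{\stylefactor,\stylefactor}$ is invertible with probability at least $1-\kappa^n$. The point is that $[\widehat M]_{\stylefactor,\stylefactor} = \frac1{mn}\sum_{i=1}^n \big(\sum_{j} [\bz_i-\bz_i^{(j)}]_{\stylefactor}[\bz_i-\bz_i^{(j)}]_{\stylefactor}^\top\big) = \frac1{mn}\sum_{i=1}^n D_i D_i^\top$ where $D_i \triangleq [\bz_i - \bz_i^{(1)}, \dots, \bz_i - \bz_i^{(m)}]_{\stylefactor,\cdot} \in \reals^{m\times m}$; this is a sum of $n$ PSD matrices, one per sample, and it is positive definite as soon as a single $D_i$ is invertible — equivalently as soon as some $\bz_i \in \bE$. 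The complement event "$\bz_i\notin\bE$ for all $i$" has probability $\kappa^n$ by independence, giving the bound. A secondary technical point is the epigraphical/$\Gamma$-convergence argument that minimizers of \eqref{eq:loss} converge (along a subsequence) to a minimizer of the limiting constrained problem; since the feasible set is compact (a Stiefel manifold) and the objective is continuous, this is standard, but I would state it cleanly rather than wave at it. A minor care is also needed because the second condition of Theorem \ref{th:sparse-recovery} (invertibility of $[\bP\bA]_{\contentfactor,\contentfactor}$) is handled in a companion lemma, so here I only need the null-block conclusion.
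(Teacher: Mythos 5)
Your argument is correct and follows essentially the same route as the paper: both force the penalty to vanish at $\lambda=\infty$, both bound the failure probability by $\kappa^n$ via the event $\bE$, and both use invertibility of the style-block difference matrix for a good sample to kill $[\bA]_{\cdot,\stylefactor}$. The only difference is presentational — you package the step through the range of the aggregated PSD matrix $\bA\widehat M\bA^\top$, whereas the paper argues pointwise from $\hat\bQ(+\infty)\bA(\bz_i-\bz_i^{(j)})=\mathbf 0$ and the block decomposition $\bA\bZ_0=[\bA]_{\cdot,\stylefactor}[\bZ_0]_{\stylefactor,\cdot}$ for a single good $\bz_0$; your added remarks on the existence of a feasible $\bQ$ annihilating the penalty and on interpreting the $\lambda\to\infty$ limit along subsequences are welcome clarifications of steps the paper leaves implicit.
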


\begin{proof}[Proof of Lemma \ref{lemma:sparse-recovery-part2}]
At $\lambda \to \infty$ it necessarily holds: 
\[\frac1{m}\sum_{j \in \cS} \operatorname{tr} \left[\hat\bQ(+\infty)^\top \hat\bQ(+\infty) \Big(\frac{\Delta_j^\top \Delta_j}{n} \Big)\right] = 0\] which equivalently means for every $j \in \cS$: 
\begin{equation}
\label{eq:sec-part-zero}
    \operatorname{tr} \left[\hat\bQ(+\infty)^\top \hat\bQ(+\infty) \Big(\frac{\Delta_j^\top \Delta_j}{n} \Big)\right] = 0\,.
\end{equation} Combining \eqref{eq:second-part-of-loss} and the above we obtain
\[
\begin{aligned}
 &\frac{1}{n}\sum_{i = 1}^n \|\hat\bQ(+\infty)(\bu_i - \bu_i^{(j)})\|_2^2 \\
 &= \operatorname{tr} \left[\hat\bQ(+\infty)^\top \hat\bQ(+\infty) \Big(\frac{\Delta_j^\top \Delta_j}{n} \Big)\right] = 0
\end{aligned}
\] which implies that for each $i \in [n]$ and $j \in \stylefactor$
\begin{equation}
    \begin{aligned}
     \mathbf{0} & =  \hat\bQ(+\infty)(\bu_i - \bu_i^{(j)})\\
     & = \hat\bQ(+\infty)\bA (\bz_i - \bz_i^{(j)})\,,
    \end{aligned}
\end{equation} where the second equality follows from Assumption \ref{assmp:entangled-rep}. 
Since the latent factors $\{\bz_i\}_{i = 1}^n$ were drawn independently from the distribution $\bbP_\bz$, we conclude that one of the samples is in the event $\bE$ with probability at least $1 - \kappa^n$. Denote the sample as $\bz_0$. Then defining $ \bZ_0 = [\bz_0 - \bz_0^{(1)}, \dots, \bz_0 - \bz_0^{(m)}]$ we notice the following: (1) from \eqref{eq:unchanged-content} in Assumption \ref{assmp:sample-manipulation} it follows $[\bZ_0]_{\contentfactor, \cdot } = \textbf{0}$, and (2)  from the Assumption \ref{assmp:linear-independence} we see that the matrix $[\bZ_0]_{\stylefactor, \cdot}$ is invertible and hence we obtain 
\[
\begin{aligned}
 \textbf{0} & = \hat\bQ(+\infty)\bA [\bz_0 - \bz_0^{(1)}, \dots, \bz_0 - \bz_0^{(m)}]\\
 & =\hat\bQ(+\infty)
 \begin{bmatrix}
 [\bA]_{\cdot, \stylefactor} & [\bA]_{\cdot, \contentfactor}
 \end{bmatrix}. \begin{bmatrix}
 [\bZ_0]_{\stylefactor, \cdot}\\ [\bZ_0]_{\contentfactor, \cdot}
 \end{bmatrix}\\
 & = \hat\bQ(+\infty)
 \begin{bmatrix}
 [\bA]_{\cdot, \stylefactor} & [\bA]_{\cdot, \contentfactor}
 \end{bmatrix}. \begin{bmatrix}
 [\bZ_0]_{\stylefactor, \cdot}\\ \textbf{0}
 \end{bmatrix}\\
 & = \hat\bQ(+\infty) [\bA]_{\cdot, \stylefactor} [\bZ_0]_{\stylefactor, \cdot}\,,
\end{aligned}
\] where using invertibility of $[\bZ_0]_{\stylefactor, \cdot}$ we conclude \[
\hat\bQ(+\infty) [\bA]_{\cdot, \stylefactor} = \textbf{0}
\] and the lemma. 

\end{proof}

\begin{lemma}
Let $\bH^\perp\in \reals^{d'\times d'}$ be the orthogonal projector onto $\operatorname{span}\{[\bA]_{\cdot, j}: j \in \stylefactor\}^\perp$. Then the matrix $\bH^\perp\frac{\bU^\top \bU}{(m+1)n}\bH^\perp$ has exactly $(d-m)$ many positive eigen-values and $\bQ(+\infty)$ is the collection of the eigen-vectors corresponding to them. Furthermore, $[\bP (+\infty) \bA]_{\contentfactor, \contentfactor}$ is invertible.
\end{lemma}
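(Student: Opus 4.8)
The plan is to reduce the statement to one linear-algebra computation plus a constrained PCA argument. Set $\cS\triangleq\operatorname{span}\{[\bA]_{\cdot,j}:j\in\stylefactor\}$ and $\bH\triangleq\bI_{d'}-\bH^\perp$, the orthogonal projector onto $\cS$; since $\bA$ is left invertible, $\dim\cS=m$, $\cS\subseteq\operatorname{range}(\bA)$, and in fact $\cS=\operatorname{range}(\bA)\cap\ker(\bH^\perp)$. First I would record the key fact about $M\triangleq\bU^\top\bU/((m+1)n)$: by Assumption~\ref{assmp:entangled-rep}, $M=\bA\Sigma_n\bA^\top$ where $\Sigma_n$ is the (uncentered) second-moment matrix of the $(m+1)n$ latent vectors $\{\bz_i\}\cup\{\bz_i^{(j)}\}$, and exactly as in the proof of Theorem~\ref{th:style-factors} the hypothesis $n\ge d+1$ makes $\Sigma_n$ invertible. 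Hence $\operatorname{range}(M)=\operatorname{range}(\bA)$ and $\tilde M\triangleq\bH^\perp M\bH^\perp$ has $\operatorname{range}(\tilde M)=\bH^\perp\operatorname{range}(\bA)$; rank--nullity together with $\ker(\bH^\perp)\cap\operatorname{range}(\bA)=\cS$ gives $\dim\operatorname{range}(\tilde M)=d-\dim\cS=d-m$, and since $\tilde M$ is symmetric positive semidefinite this is precisely the number of its strictly positive eigenvalues.

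Next I would identify $\hat\bQ(+\infty)$ with the eigenvectors of $\tilde M$. Lemma~\ref{lemma:sparse-recovery-part2} already tells us that, on the probability-$(1-\kappa^n)$ event, $\hat\bQ(+\infty)[\bA]_{\cdot,\stylefactor}=\mathbf0$, i.e.\ $\hat\bQ(+\infty)=\hat\bQ(+\infty)\bH^\perp$ (the rows of $\hat\bQ(+\infty)$ lie in $\cS^\perp$). Conversely, any orthonormal $\bQ$ with rows in $\cS^\perp$ zeroes the penalty in \eqref{eq:loss}, because $\bu_i-\bu_i^{(j)}=\bA(\bz_i-\bz_i^{(j)})\in\cS$ by \eqref{eq:unchanged-content}; such $\bQ$ exist since $\dim\cS^\perp=d'-m\ge d-m$. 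A one-line penalty argument — writing $L_1,L_2$ for the first and second terms of \eqref{eq:loss}, for any feasible $\bQ'$ one has $L_1(\hat\bQ(\lambda))\le L_1(\hat\bQ(\lambda))+\lambda L_2(\hat\bQ(\lambda))\le L_1(\bQ')+\lambda L_2(\bQ')=L_1(\bQ')$, and then $\lambda\to\infty$ — shows that $\hat\bQ(+\infty)$ minimizes $L_1$ over orthonormal $\bQ$ with $\bQ=\bQ\bH^\perp$. For such $\bQ$, $\operatorname{tr}[(\bI_{d'}-\bQ^\top\bQ)M]=\operatorname{tr}(M)-\operatorname{tr}(\bQ\tilde M\bQ^\top)$, so $\hat\bQ(+\infty)$ maximizes $\operatorname{tr}(\bQ\tilde M\bQ^\top)$; since $\tilde M\succeq0$ has exactly $d-m$ positive eigenvalues and $\operatorname{range}(\tilde M)\subseteq\cS^\perp$, Ky Fan's theorem forces the row space of the maximizer to be $\operatorname{range}(\tilde M)$ itself — the span of the positive-eigenvalue eigenvectors — which in particular lies in $\cS^\perp$, so the constraint was inactive. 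This is the asserted description of $\hat\bQ(+\infty)$.

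Finally I would deduce invertibility of $[\bP(+\infty)\bA]_{\contentfactor,\contentfactor}$. Since $[\bP(+\infty)]_{\contentfactor,\cdot}=\hat\bQ(+\infty)$ and (by Lemma~\ref{lemma:sparse-recovery-part2}) $\hat\bQ(+\infty)[\bA]_{\cdot,\stylefactor}=\mathbf0$, the matrix $[\bP(+\infty)\bA]_{\contentfactor,\cdot}=\hat\bQ(+\infty)\bA$ has its $\stylefactor$-columns equal to zero, so $[\bP(+\infty)\bA]_{\contentfactor,\contentfactor}=\hat\bQ(+\infty)[\bA]_{\cdot,\contentfactor}$ (a $(d-m)\times(d-m)$ matrix) is invertible iff $\hat\bQ(+\infty)\bA$ has full row rank $d-m$, i.e.\ iff no nonzero $w$ in the row space of $\hat\bQ(+\infty)$ — which equals $\operatorname{range}(\tilde M)=\bH^\perp\operatorname{range}(\bA)$ — is orthogonal to $\operatorname{range}(\bA)$. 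But if $w=\bH^\perp v=v-\bH v$ with $v\in\operatorname{range}(\bA)$, then $\bH v\in\cS\subseteq\operatorname{range}(\bA)$ forces $w\in\operatorname{range}(\bA)$, whence $w\in\operatorname{range}(\bA)\cap\operatorname{range}(\bA)^\perp=\{0\}$; this contradiction completes the proof.

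The hard part is the bookkeeping that ties the three pieces together: Lemma~\ref{lemma:sparse-recovery-part2} only certifies that the rows of $\hat\bQ(+\infty)$ avoid the $m$-dimensional subspace $\cS$, and it is the rank-$d$ property of $\bU^\top\bU$ (via $n\ge d+1$) that upgrades this to ``$\tilde M$ has \emph{exactly} $d-m$ positive eigenvalues'' — not fewer, which would let $\hat\bQ(+\infty)$ absorb spurious directions of $\ker\tilde M$, and not more, which would make the top eigenspace a non-canonical proper subspace of $\operatorname{range}(\tilde M)$ — as well as to the transversality $\operatorname{range}(\tilde M)\cap\operatorname{range}(\bA)^\perp=\{0\}$. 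The penalty-limit and Ky Fan steps are routine; the only subtlety there is ensuring the limit $\hat\bQ(+\infty)$ is simultaneously feasible and optimal for the constrained problem, which is exactly what routing everything through $\bH^\perp$ buys us.
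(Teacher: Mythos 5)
Your proof is correct, and it reaches the lemma by a genuinely different route than the paper's. The paper proceeds concretely: it performs a Gram--Schmidt/QR factorization $\bA=\tilde\bA\bU$ chosen so that $\operatorname{span}\{[\bA]_{\cdot,j}:j\in\stylefactor\}=\operatorname{span}\{[\tilde\bA]_{\cdot,j}:j\in\stylefactor\}$, computes $\bH^\perp\tfrac{\bU^\top\bU}{(m+1)n}\bH^\perp=[\tilde\bA]_{\cdot,\contentfactor}[\bM]_{\contentfactor,\contentfactor}[\tilde\bA]_{\cdot,\contentfactor}^\top$ with $\bM=\bU\Sigma_\bz'\bU^\top$, reads off the rank from positive definiteness of the principal block, and then exhibits $\hat\bQ(+\infty)$ and $[\bP(+\infty)\bA]_{\contentfactor,\contentfactor}=\bW^\top\bU_{\contentfactor,\contentfactor}$ in closed form. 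You instead derive the rank by the identity $\operatorname{range}(\tilde M)=\bH^\perp\operatorname{range}(\bA)$ plus rank--nullity (using $\ker\bH^\perp\cap\operatorname{range}\bA=\cS$), and obtain invertibility from the transversality $\bH^\perp\operatorname{range}(\bA)\cap\operatorname{range}(\bA)^\perp=\{0\}$ rather than from a closed-form expression. The two approaches are logically equivalent; the paper's has the advantage of producing the explicit formula $\bW^\top\bU_{\contentfactor,\contentfactor}$, while yours avoids the bookkeeping of an adapted QR basis and exposes the underlying geometry. Worth noting: your penalty-limit argument (that $\hat\bQ(+\infty)$ solves the $L_2=0$-constrained minimization of $L_1$, hence by Ky Fan is the top $(d-m)$ eigenvectors of $\tilde M$) makes explicit a step the paper treats tersely, namely \emph{why} $\hat\bQ(+\infty)$ is the claimed eigenvector collection; the small care required there — passing to a subsequential limit on the compact Stiefel manifold and invoking the eigengap $\lambda_{d-m}>0=\lambda_{d-m+1}$ to get uniqueness of the optimal row space — is correctly flagged and handled in your write-up.
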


\begin{proof}
We start by noticing that $\frac{\bU^\top \bU}{(m+1)n} = \bA \Sigma_{\bz}' \bA^\top$ where the matrix
\[
\Sigma_{\bz}' = \frac1{(m+1)n} \sum_{i = 1}^n \left[\bz_i\bz_i^\top + \sum_{j \in \cS} \bz_i^{(j)}\{\bz_i^{(j)}\}^\top  \right]
\] is invertible since $n \ge d+1$. Without loss of generality we assume that $\stylefactor = [m]$. Denoting $\cC_\bA \triangleq  \operatorname{col-space}(\bA)$ we notice that for any $\bx \in \cC_{\bA}^\perp$
\begin{equation}
\begin{aligned}
 & \bH^\perp\frac{\bU^\top \bU}{(m+1)n}\bH^\perp \bx \\
 & = \bH^\perp\bA \Sigma_{\bz}' \bA^\top \bx , ~~ \text{since} ~~  \bx \in \cC_{\bA}^\perp \subset \operatorname{span}\{[\bA]_{\cdot, \stylefactor}\}^\perp  \\
 & =  \mathbf{0}, ~~ \text{since} ~~ \bx \in \cC_{\bA}^\perp \,.
\end{aligned}
\end{equation} and for any $\bx \in \operatorname{span}\{[\bA]_{\cdot, j}: j \in \stylefactor\}$ it holds \begin{equation}
    \bH^\perp\frac{\bU^\top \bU}{(m+1)n}\bH^\perp \bx = \mathbf{0}\,.
\end{equation} Since $\operatorname{dim} \{ \cC_{\bA}^\perp \} = d' - d$ and $\operatorname{dim}\{[\bA]_{\cdot, j}: j \in \stylefactor\} = m$, counting the degrees of freedom we obtain that $\bH^\perp\frac{\bU^\top \bU}{(m+1)n}\bH^\perp$ doesn't have rank more than $d' - (d' - d) - m = d - m$. Expressing $\bA $ as  \begin{equation} \label{eq:qr}
    \bA = \tilde \bA \bU,
\end{equation} where $\tilde \bA\in \reals^{d' \times d}$ is an orthogonal matrix and $\bU\in \reals^{d \times d}$ is an upper triangular such that $\operatorname{span}\{[\bA]_{\cdot, j}: j \in \stylefactor\} = \operatorname{span}\{[\tilde \bA]_{\cdot, j}: j \in \stylefactor\}$. Such a decomposition can easily be obtained from Gram-Schmidt orthogonalization of the columns of $\bA$. Since $[\tilde\bA]_{\cdot, j} \in \operatorname{span}\{[\bA]_{\cdot, j}: j \in \stylefactor\}$ we notice that 
\begin{equation}
    \bH^\perp \tilde \bA = \begin{bmatrix}
    \mathbf{0}_{d' \times m} & [\tilde \bA]_{\cdot, \contentfactor}
    \end{bmatrix}
\end{equation} and defining $\bM \triangleq \bU \Sigma_{\bz}' \bU^\top $ which is an invertible matrix we notice that 
\begin{equation}
  \bH^\perp\frac{\bU^\top \bU}{(m+1)n}\bH^\perp =   [\tilde \bA]_{\cdot, \contentfactor} [\bM]_{\contentfactor, \contentfactor} [\tilde \bA]_{\cdot, \contentfactor}^\top \,.
\end{equation} and hence it has rank $|\contentfactor| = d-m$. This concludes a part of the lemma. 

Here, $[\bM]_{\contentfactor, \contentfactor}$ is a partition matrix of the non-negative definite matrix $\bM \triangleq \bU \Sigma_{\bz}' \bU^\top$. We consider it's spectral decomposition \begin{equation}
    [\bM]_{\contentfactor, \contentfactor} = \bW \bD \bW^\top
\end{equation} where $\bW \in \reals^{(d-m)\times (d-m)}$ is an orthogonal matrix and $\bD \in \reals^{(d-m) \times (d-m)}$ is diagonal with positive diagonal entries (since $[\bM]_{\contentfactor, \contentfactor}$ is full rank). This follows, 
\begin{equation}
    \bH^\perp \tilde \bA = [\tilde \bA]_{\cdot, \contentfactor} \bW \bD \bW^\top [\tilde \bA]_{\cdot, \contentfactor}^\top
\end{equation} where $[\tilde \bA]_{\cdot, \contentfactor} \bW$ is again a $\reals^{d'\times (d-m)}$ orthogonal matrix whose columns are the only eigen-vectors of $\bH^\perp\frac{\bU^\top \bU}{(m+1)n}\bH^\perp$ with positive eigen-values. Hence, 
\begin{equation} \label{eq:spectral}
    \bQ(+\infty) = [\tilde \bA]_{\cdot, \contentfactor} \bW\,.
\end{equation} Since $[\bP (+\infty)]_{\contentfactor, \cdot} = \bQ(+\infty)^\top $ we obtain
\[
\begin{aligned}
\relax  [\bP (+\infty)\bA]_{\contentfactor, \cdot} & = \bQ(+\infty)^\top \bA
\end{aligned}
\] where using \eqref{eq:qr} and \eqref{eq:spectral} we obtain
\[
\begin{aligned}
 &[\bP (+\infty)\bA]_{\contentfactor, \cdot}\\
 & = \bW^\top \big\{ [\tilde \bA]_{\cdot, \contentfactor}\big\}^\top \tilde \bA \bU\\
 &= \bW^\top \begin{bmatrix}
 \mathbf{0}_{(d-m)\times d} & \bI_{(d-m) \times (d-m)}
 \end{bmatrix}  \bU\\
 & = \bW^\top \bU_{\contentfactor, \cdot}\,.
\end{aligned}
\] Finally we obtain 
\[
[\bP (+\infty)\bA]_{\contentfactor, \contentfactor} = \bW^\top \bU_{\contentfactor, \contentfactor}\,.
\] where, following that $\bW$ is an orthogonal matrix and $\bU$ an invertible upper-triangular matrix, both $\bW$ and $\bU_{\contentfactor, \contentfactor}$ are invertible. This implies $[\bP (+\infty)\bA]_{\contentfactor, \contentfactor}$ is invertible, and we conclude the lemma. 
\end{proof}

\subsection{Proof of Corollary \ref{cor:sparse-recovery}}
\begin{proof}
Note that the convergences in Theorem \ref{th:style-factors} are almost sure convergences.
For each $j\in \stylefactor$ we define $\bB_j$ as the probability one event on which  $\bA^\top \hat \bp_j \to \beta_j e_j$. We further define $\bB \triangleq \cap_{j \in \stylefactor} \bB_j$ which is again a probability one event (an intersection of finitely many probability one events), and on the event the convergences hold simultaneously over $j \in \stylefactor$. 

Drawing our attention to the conclusions in Theorem \ref{th:sparse-recovery}, we define $\bC_n$ as the event that 
\[
\begin{aligned}
 \bC_n \triangleq \{\text{The conclusions in Theorem \ref{th:sparse-recovery} hold}\\
~~ \text{with sample size} ~~ n\}
\end{aligned}
\] for each $n \ge d+1 $ and notice that $\bbP_\bz(\bC_n) \ge 1- \kappa^n$. This implies
\[
\sum_{n \ge d+1} \bbP_\bz(\bC_n^c) \le \sum_{n \ge d+1}  \kappa^n <\infty\,.
\] We define $\bC = \{\bC^c_n ~~\text{holds infinitely often}\}$ and use the first Borel-Cantelli lemma to conclude that
\[
\bbP_\bz(\bC^c) = 0 ~~ \text{or,} ~~ \bbP_\bz(\bC) = 1\,.
\]
Note that the event $\bC$ is the same as the event that \{$\bC_n$ holds all but finitely often\}, or that \{The conclusions in Theorem \ref{th:sparse-recovery} holds at $n \to \infty$ \}, which are probability one events.  Thus it follows that $\bB \cap \bC$, an event on which the conclusions in both the theorems \ref{th:style-factors} and \ref{th:sparse-recovery} simultaneously hold (for $n \to \infty$), is a probability one event. Hence, we conclude that with probability one  the following hold at the limit $n\to \infty$: (1) $[\bP \bA]_{\stylefactor, \stylefactor}$ is a diagonal matrix,  (2) $[\bP \bA]_{\stylefactor, \contentfactor} = \mathbf{0}$, (3) $[\bP \bA]_{\stylefactor, \contentfactor} = \mathbf{0}$, and (4) $[\bP \bA]_{\contentfactor, \contentfactor}$ is invertible. These are the exact conditions in Definition \ref{def:direp} that are required for sparse recovery. Hence, the corollary follows. 
\end{proof}

\section{Details for synthetic data study in  \S\ref{sec:sim}}
\label{sec:sim-supp}

\subsection{The latent factors}

The latent factors are generated as 
\begin{equation}
   \reals^{10} \ni \bz \sim \bN(\mathbf{0}, \Sigma),
\end{equation} where $\Sigma$ is a $10 \times 10$ covariance matrix whose entries are described below. For $i, j \in \{1, \dots, 10\}$ 
\begin{equation}
    [\Sigma]_{i, j} = \begin{cases}
    1 & i = j ,\\
    \rho \in [0, 1) & (i, j) = (1, 2), ~~\text{or,} ~~ (i, j) = (2, 1), \\
    0 & \text{otherwise}\,.
    \end{cases}
\end{equation} We fix the first five coordinates as style factors, \ie\ $\stylefactor = \{1, \dots , 5\}$ and the rest of them as content factors, \ie\ $\contentfactor = \{6, \dots, 10\}$. Note that style and content factors are independent, \ie, \[
[\bz]_{\stylefactor} \perp [\bz]_{\contentfactor}\,.
\]

We draw $\{\bz_i\}_{i = 1}^{n}\stackrel{\iid}{\sim} \bN (\mathbf{0}, \Sigma)$.  

\subsection{The entangled representations}

We fix $d' = 10$ and obtain entangled representations as 
\begin{equation}
    \bu = \bA \bz = \bL \bU \bz\in  \reals^{10}\,,
\end{equation} where $\bA = \bL\bU$, $\bU$ is a randomly generated $10 \times 10 $ orthogonal matrix and $\bL$ is a $10 \times 10 $ lower triangular matrix described below. 
\begin{equation}
    [\bL]_{i, j} = \begin{cases}
    1 & i = j ,\\
   0.9  & i > j, \\
    0 & i < j\,.
    \end{cases}
\end{equation} We use the same orthogonal matrix throughout our experiment. Note that both $\bL$ and $\bU$ are invertible and hence $\bA = \bL \bU$ is also invertible. 

\subsection{Sample manipulations and annotations}

For each of the latent factors $\bz$ and $j$-th style coordinates we obtain two manipulated latent factors which we denote as $\bz^{(j), +}$ and $\bz^{(j), -}$ and their description follow. $\bz^{(j), +}$ (resp. $\bz^{(j), -}$) sets the $j$-th coordinate to its positive (resp. negative) absolute value, \ie\ 
\[
[\bz^{(j), +}]_j = |[\bz]_j| ~~ (\text{resp.} ~~ [\bz^{(j), -}]_j = -|[\bz]_j|)\,,
\]
 and annotate it as $+1$ (resp. $-1$). 
Since the first two coordinates are correlated with correlation coefficient $\rho$, if either of them changes by the value $\delta$ then the other one changes by $\rho \delta$. We provide a concrete example of change in the second coordinate for the change in the first coordinate, but a similar change happens vice-versa. Since 
\[
[\bz^{(1), +}]_1 - [\bz]_1 = |[\bz]_1| - [\bz]_1\,,
\] it must hold 
\[
[\bz^{(1), +}]_2 - [\bz]_2 = \rho\big(|[\bz]_1| - [\bz]_1\big)\,.
\] Note that one of $\bz^{(j), +}$ and $\bz^{(j), -}$ is exactly same as $\bz$. We obtain the entangled representations as $\bu^{(j), +} = \bA\bz^{(j), +}$ and $ \bu^{(j), -} = \bA\bz^{(j), -}$.

\subsection{Style factor estimations} Note that $\bA$ is invertible and hence the covariance matrix of $\bu = \bA \bz$ is also invertible. In this case, the minimum norm least square problem in \eqref{eq:jth-direction} is the simple least square problem. For $j\in \stylefactor$ we describe the estimation of $j$-th style factor below. 
\begin{equation}
    \begin{aligned}
\relax    [\hat \bz]_j \triangleq \hat\bp_j^\top \bu, ~~ \text{where} \\
    \hat \bp_j \triangleq \underset{a \in \reals, \bp\in \reals^{d'}}{\argmin}  \frac{1}{2n} \sum_{i = 1}^n \Big[\big(+1 - a - \bp^\top \bu_i^{(j), +}\big)^2\\
    + \big(-1 - a - \bp^\top \bu_i^{(j), -}\big)^2\Big] 
    \end{aligned}
\end{equation}

\section{Experimental details}
\label{supp:exp-details}

\subsection{Feature extractors and image style generation (image transformations)}
\label{supp:exp-features}
\paragraph{MNIST data} For the colored MNIST experiment, we train a multilayer perceptron (MLP) feature extractor, a 3-layer neural network with ReLU activation function and a hidden layer of size 50. The dataset for training the feature extractor is obtained by randomly coloring some original MNIST images green and some of them red. We then train the feature extractor by making it predict both the color of the image and the digit label. During training, we use a batch size of 256 and a learning rate of 0.001.

After training the feature extractor, we use it to extract features from MNIST images that we use in the experiments. For the experiments, we use original MNIST images and MNIST images colored green. 

\paragraph{CIFAR-10 data} For experiments on CIFAR-10, we use two different feature extractors, \texttt{Supervised} and \texttt{SimCLR}. For \texttt{Supervised}, we use a \texttt{Supervised} model that was pre-trained on ImageNet \citep{russakovsky2015imagenet} from Pytorch's Torchvision package \footnote{https://pytorch.org/vision/stable/index.html}. For \texttt{SimCLR}, we first train a \texttt{SimCLR} \footnote{https://github.com/spijkervet/SimCLR} model on the original CIFAR-10 dataset before using it to extract features.

We transform the CIFAR-10 dataset four different ways to generate new sets of data that we use in our various experiment settings. The first set of data is generated by rotating the original CIFAR-10 data at angle 15 degrees, the second set is generated by applying contrast to the original CIFAR-10 data using a contrast factor of 0.3, the third set is generated by blurring the original CIFAR-10 data using a sigma value of 0.3, and the fourth set is generated by making the original CIFAR-10 images saturated using a saturation factor of 5. We selected transformation parameters that transformed the original data without changing it into something completely different and unrecognizable. For the experiments, we extract features from the original CIFAR-10, rotated CIFAR-10, contrasted CIFAR-10, blurred CIFAR-10, and saturated CIFAR-10 data. We then use the extracted features to perform experiments as described in \S\ref{sec:experiments} and \S\ref{sec:method}.

\paragraph{ImageNet data} For experiments on ImageNet, we use a pre-trained ResNet-50 model to extract features from the ImageNet \citep{russakovsky2015imagenet} dataset. The ImageNet data that we use contains 1,281,167 images for training, 50,000 images for validation, and 1000 classes.

The ImageNet dataset was used to demonstrate PISCO's ability to scale and generalize under distribution shifts. For generalization, we use the code published by \citet{geirhos2018imagenet} to generate four stylized ImageNet datasets with covariate distribution shifts by applying four styles on original ImageNet images. The styles applied are ``dog sketch'', ``woman sketch'', ``Picasso self-portrait'', and ``Picasso dog'' (see Figure \ref{fig:stylized_images}). Two of the styles, ``dog sketch'' and ``Picasso dog'', were generated by DALL$\cdot$E 2. For the ``Picasso dog'' style, the prompt used to generate it from DALL$\cdot$E 2 was, ``portrait of a dog in Picasso’s 1907 self-portrait style''. For the ``dog sketch'' style, the prompt used to generate it from DALL$\cdot$E 2 was, ``artistic hand drawn sketch of a dog face''. The other two styles, ``woman sketch'' and ``Picasso self-portrait'', were downloaded from a GitHub repository\footnote{\url{https://github.com/xunhuang1995/AdaIN-style/tree/master/input/style}} of the style transfer project by \cite{huang2017arbitrary}. The generated stylized ImageNet data is then used to test PISCO's out-of-distribution (OOD) generalization capability. Figure \ref{fig:stylized_images} shows example images from the ImageNet dataset and the four styles that we use to generate the stylized ImageNet sets.

\begin{figure*}
    \centering
    \includegraphics[scale=0.35]{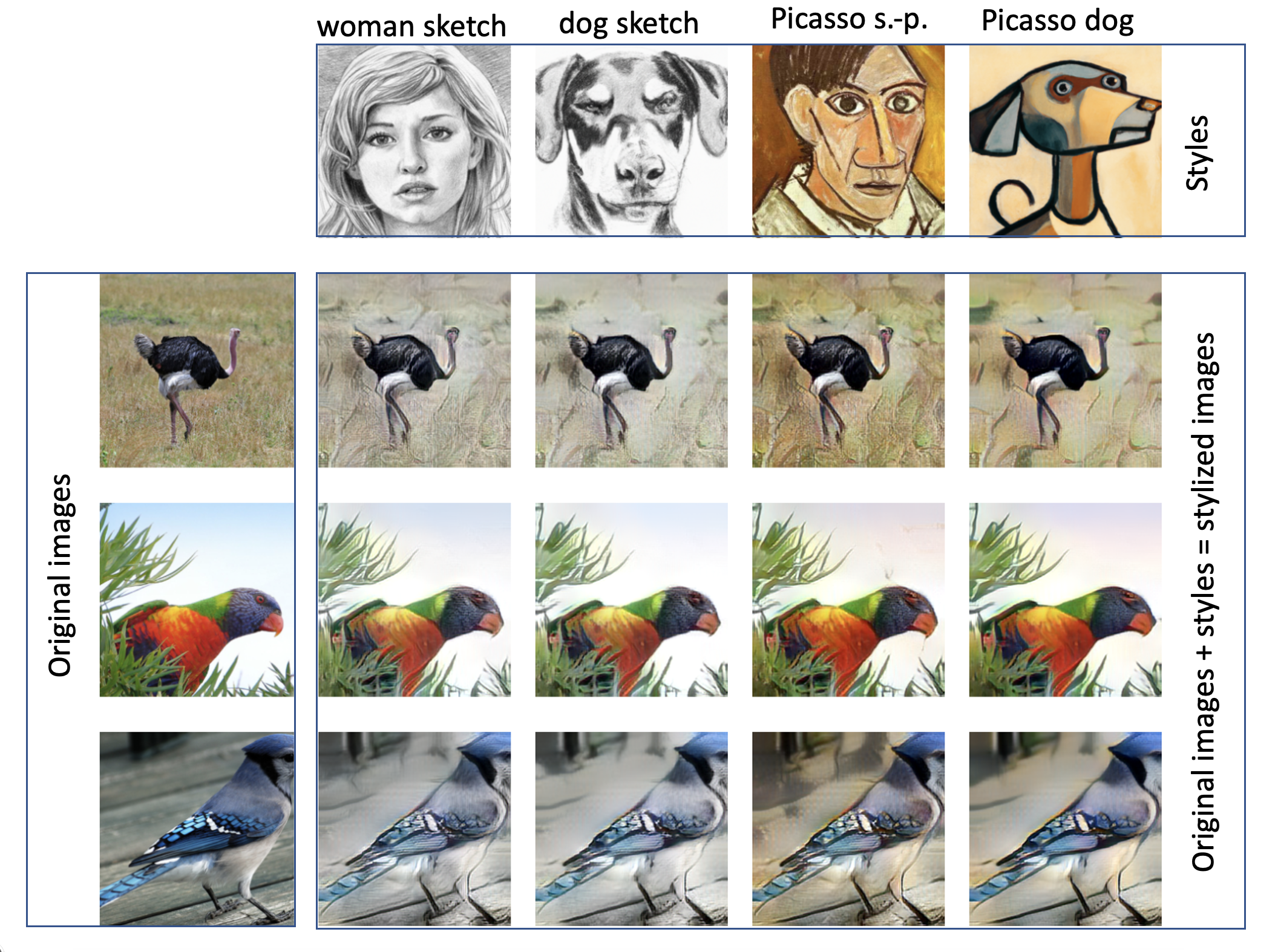}
    \caption{Example images from the ImageNet dataset with the styles applied to them.}
    \label{fig:stylized_images}
\end{figure*}

\subsection{Spurious correlations and error bars}
Spurious correlations for the experiments in both MNIST and CIFAR-10 datasets are created by first dividing images in each dataset into two halves, the first half contains images with class label below 4 and the second half contains images with class 4 and above. We then create datasets, where the image label is spuriously correlated with the image style, \ie\ color green, rotation, contrast, blur, or saturation, as follows: in the training dataset, images from the first half are transformed with probability $\alpha$ and images from the second half are transformed with probability $1-\alpha$. In the test dataset, we do the reverse of what we did in the training data; images from the first half are transformed with probability $1-\alpha$ and images from the second half are transformed with probability $\alpha$. We perform experiments and report results for $\alpha$ values $0.5,0.75,0.90,0.95,0.99,1.0$. At $\alpha = 0.5$, the train and test data have the same distribution, and at $\alpha = 1$, the spurious correlations between labels and styles is extreme. 

Results on experiments where there are spurious correlations between label and transformations (styles) in the data are reported in plots eg. Figure \ref{fig:resnet_results}, Figure \ref{fig:simclr_results}, Figure \ref{fig:resnet_results_1}, etc. The reported results are over 10 restarts. We include error bars in the plots, but the errors are small so the error bars are not very visible. 

\subsection{Training and testing logistic regression models for classification}
\paragraph{MNIST Data}
For MNIST data, the digits labels are from 0 to 9. For baseline results, we train and test the logistic regression model using all the features extracted using the MLP feature extractor. For \method\ results, we discard the style feature corresponding to color green and train and test the logistic regression model using only the remaining content features.

\paragraph{CIFAR-10 Data}
For CIFAR-10 data, we use the original image labels to train and test the logistic regression model. For baseline results, we train and test the logistic regression model using features extracted using \texttt{SimCLR} or \texttt{Supervised}. For \method\ results, we first discard the four style features corresponding to rotation, contrast, blur, and saturation and then train and test the logistic regression model using only the remaining content features.

\paragraph{ImageNet Data}
For ImageNet data, similar to CIFAR-10 and MNIST settings, after fitting the logistic regression model on features extracted using ResNet-50 to obtain baseline results, ImageNet features are post-processed with \method\ to isolate content and style, and then style features get dropped when fitting the model for OOD generalization. The batch size used when training the logistic regression model on ImageNet was 32768, the learning rate was 0.0001, and the number of epochs was 50.

For more experimental details, check out our released code on GitHub\footnote{\url{https://github.com/lilianngweta/PISCO}}.

\textbf{Selecting the hyperparameter $\lambda$:} $\lambda$ trades-off disentanglement with the preservation of variance in the data (second and first terms in eq. (\ref{eq:unchanged-content}), correspondingly). The easiest, no-harm, way to select $\lambda$ is to increase it until the in-distribution performance starts to degrade. In our reported results, $\lambda=1$ is the best $\lambda$ value because it improves OOD performance without affecting the in-distribution accuracy. Further increasing $\lambda$ can provide additional OOD gains at the cost of in-distribution performance. Our method works best when the styles are easy to predict from the original representations with a linear model (see first column in Tables \ref{tb:correlations_resnet} and \ref{tb:correlations_simclr}; note that this is also easy to evaluate at training time). For example, blur is hard to predict and \method\ with larger values of $\lambda$ degrades the corresponding OOD performance in Tables \ref{tb:ood_gen_resnet} and \ref{tb:ood_gen_simclr}, while rotation is easier to predict and PISCO with $\lambda=10$ improves the performance in both tables. When a given style is hard to predict, it means that the representation is robust to it (as is the case with blur and some other styles for \texttt{SimCLR} representations) and it might make sense to exclude it when applying PISCO to avoid unnecessary trade-offs with the variance preservation. However, if strong spurious correlation is present, PISCO improves performance even for harder to predict styles (see Figures \ref{fig:resnet_results} and \ref{fig:simclr_results}).


\section{Additional results and selecting the hyperparameter $\eta$}
\label{supp:results}

In this section, we present results on the MNIST dataset (see \S\ref{supp:mnist_results}). We also present additional ImageNet results (see \S\ref{supp:add_imagenet_results}) and additional CIFAR-10 results (see \S\ref{supp:add_cifar_results}) on different values of the hyperparameter $\eta$, as well as ImageNet results for different values of $\lambda$. In experiments for all datasets (MNIST, CIFAR-10, and ImageNet), we have presented results for when $\eta = 0.95$. Here we present additional ImageNet and CIFAR-10 results when $\eta$ is 0.90, 0.93, 0.95 (for ImageNet only), 0.98, and 1.0 to demonstrate its impact on performance. We also presented ImageNet results for when $\lambda=1$ in the main paper; here we present additional results for when $\lambda$ is 10 and 50 to demonstrate how varying $\lambda$ affects performance on ImageNet data.


\subsection{Colored MNIST experiment}
\label{supp:mnist_results}

In this experiment, our goal is to isolate color green from the digit class. First, to obtain representations with entangled color green and digit information, we train a neural network feature extractor to predict both color and digit label (see \S\ref{supp:exp-details} for details) and then use it to extract features from original and green MNIST images that we use in the experiment. In this experiment, we have a single style factor, i.e. color green, $m=1$. We learn post-processing feature transformation matrices $\bP(\lambda)$ with \method\ as in Algorithm \ref{alg:method} and report results for $\lambda \in \{1, 10, 50\}$.

Next, we create a dataset where the label is spuriously correlated with the color green, similar to Colored MNIST \citep{arjovsky2019Invariant}. Specifically, in the training dataset, images from the first half of the classes are colored green with probability $\alpha$ and images from the second half of the classes are colored green with probability $1-\alpha$. In test data the correlation is reversed, i.e., images from the first half of the classes are colored green with probability $1-\alpha$ and images from the second half with probability $\alpha$ (see \S\ref{supp:exp-details} for additional details). Thus, for $\alpha=0.5$ train and test data have the same distribution where each image is randomly colored green, and $\alpha=1$ corresponds to the extreme spurious correlation setting.

For each $\alpha$ we train and test a linear model on the original representations and on \method\ representations (discarding the learned color green factor) for varying $\lambda$. We summarize the results in Figure \ref{fig:mnist_results}. \method\ outperforms the baseline across all values of $\alpha$ and matches the baseline accuracy when there is no spurious correlation and train and test distributions are the same, i.e., $\alpha=0.5$. Thus, our method provides a significant OOD accuracy boost while preserving the in-distribution accuracy.

\begin{figure}[h]
\centering
\includegraphics[scale=0.46]{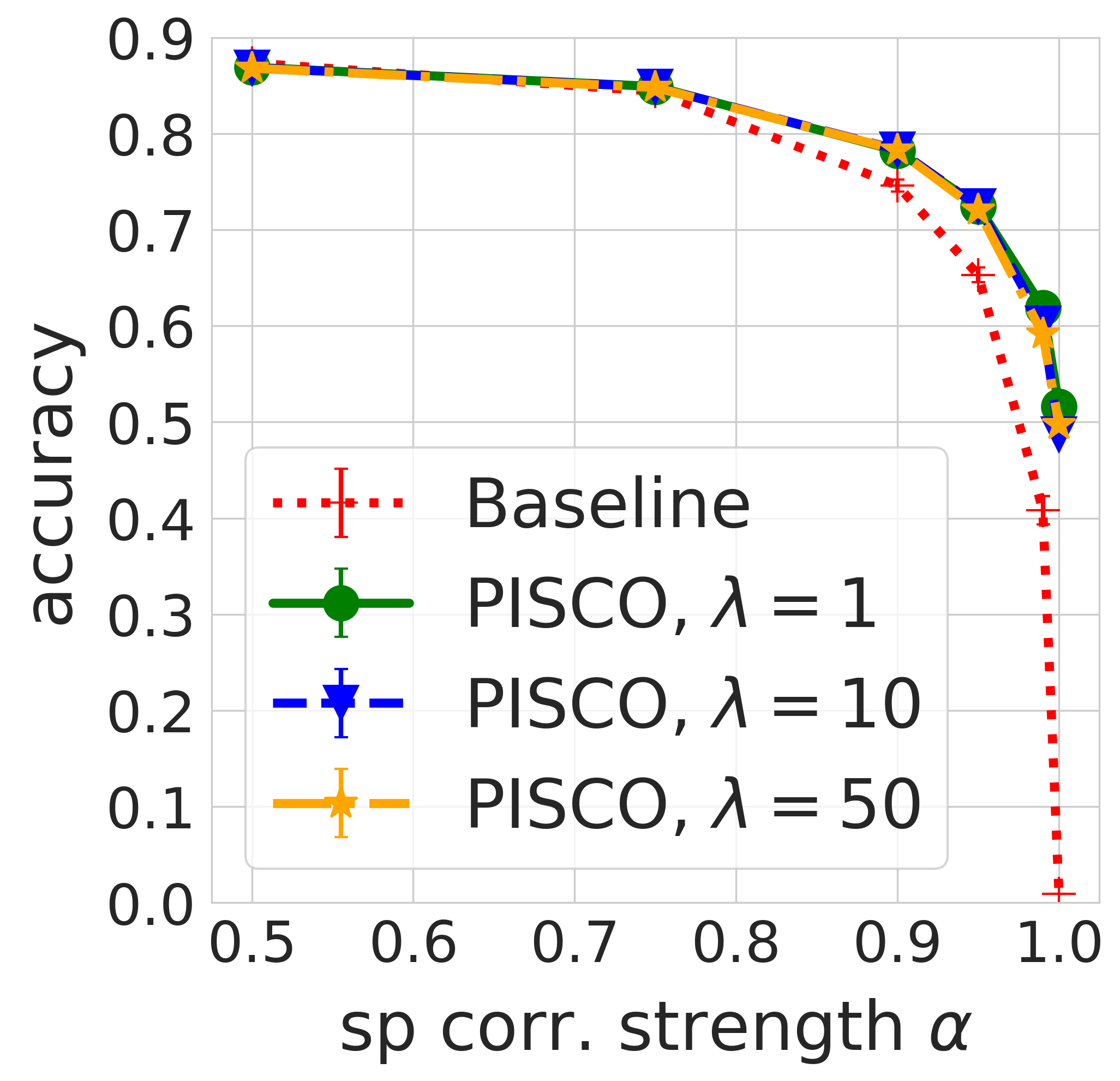}
\caption{OOD accuracy on Colored MNIST dataset where the label is spuriously correlated with color green. The strength of the correlation is controlled by $\alpha$. For each $\alpha$, we train a logistic regression on the training data using the corresponding representations and report test accuracy. \method\ is robust to spurious correlations across all values of $\alpha$ with only a slight accuracy drop for extreme $\alpha$ values. The baseline is a multilayer perceptron (MLP). 
}
\label{fig:mnist_results}
\end{figure}

\subsection{Additional transformed CIFAR-10 experiment results}
\label{supp:add_cifar_results}

\paragraph{Results for $\eta = 0.90$:} Results for when $\eta = 0.90$ can be found in Figure \ref{fig:resnet_results_090}, Figure \ref{fig:simclr_results_090}, Table \ref{tb:ood_gen_resnet_090}, and Table \ref{tb:ood_gen_simclr_090}. 

\paragraph{Results for $\eta = 0.93$:} Results for when $\eta = 0.93$ can be found in Figure \ref{fig:resnet_results_093}, Figure \ref{fig:simclr_results_093}, Table \ref{tb:ood_gen_resnet_093}, and Table \ref{tb:ood_gen_simclr_093}. 

\paragraph{Results for $\eta = 0.98$:} Results for when $\eta = 0.98$ can be found in Figure \ref{fig:resnet_results_098}, Figure \ref{fig:simclr_results_098}, Table \ref{tb:ood_gen_resnet_098}, and Table \ref{tb:ood_gen_simclr_098}. 

\paragraph{Results for $\eta = 1.0$:} Results for when $\eta = 1.0$ can be found in Figure \ref{fig:resnet_results_1}, Figure \ref{fig:simclr_results_1}, Table \ref{tb:ood_gen_resnet_1}, and Table \ref{tb:ood_gen_simclr_1}. When $\eta = 1.0$, it means the number of features in the baseline is the same as the number of features learned using \method\ and as a result, we observe the in-distribution performance of \method\ is almost the same as that of baseline methods even for higher values of $\lambda$.

\paragraph{Overall CIFAR-10 results discussion.} Even with different values of $\eta$, \method\ still outperforms the baselines in almost all cases. An expected observation from the results is as $\eta$ increases, the in-distribution performance of \method\ goes up even for high values of $\lambda$ and its OOD performance slightly goes down.



\begin{figure}
\captionsetup[subfigure]{justification=centering}
     \centering
     \begin{subfigure}[b]{0.22\textwidth}
         \centering
         \includegraphics[width=\textwidth]{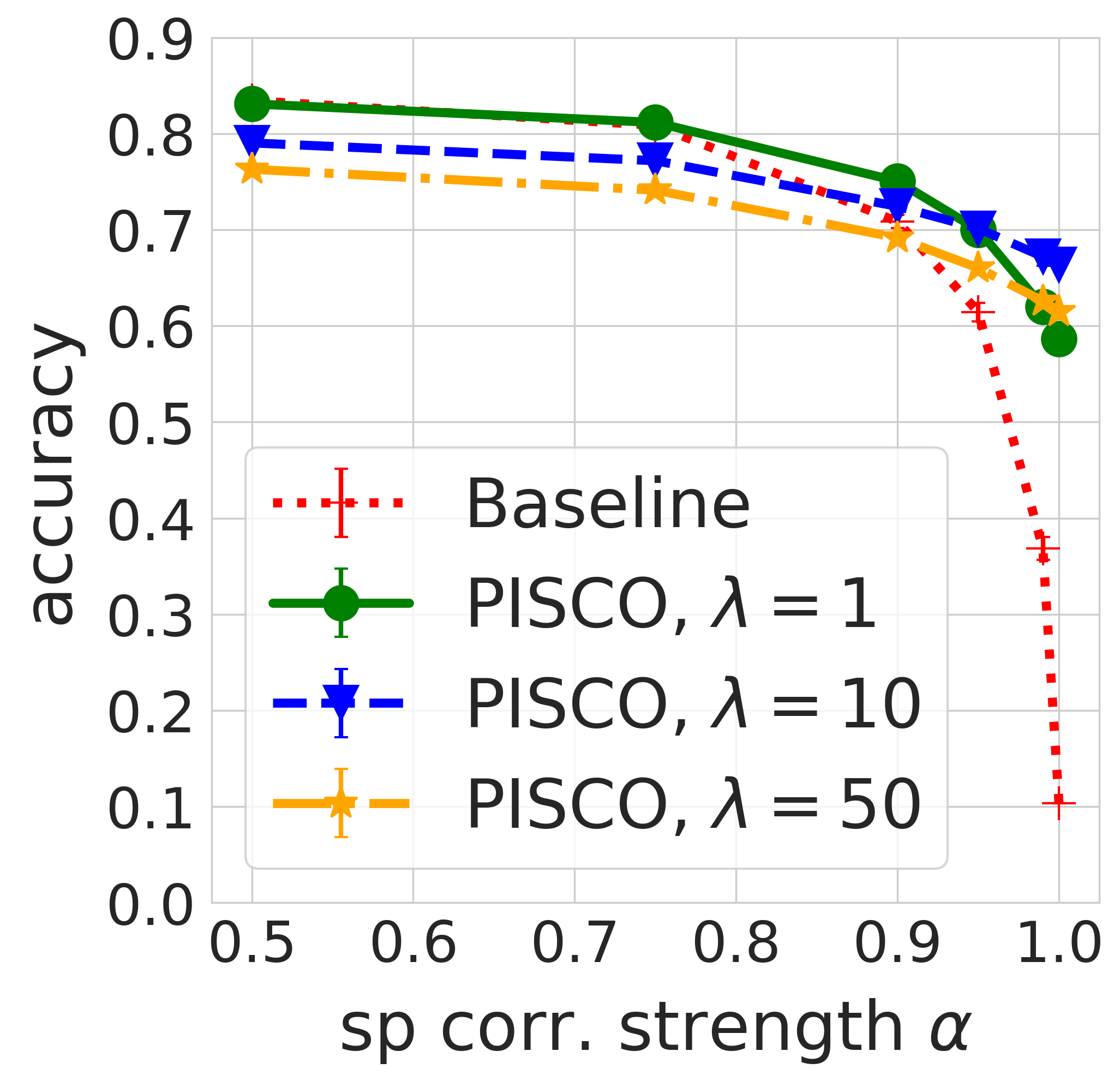}
         \caption{Rotation - \texttt{Supervised}}
         \label{fig:rotat-resnet-supp1}
     \end{subfigure}
     \hfill
     \begin{subfigure}[b]{0.22\textwidth}
         \centering
         \includegraphics[width=\textwidth]{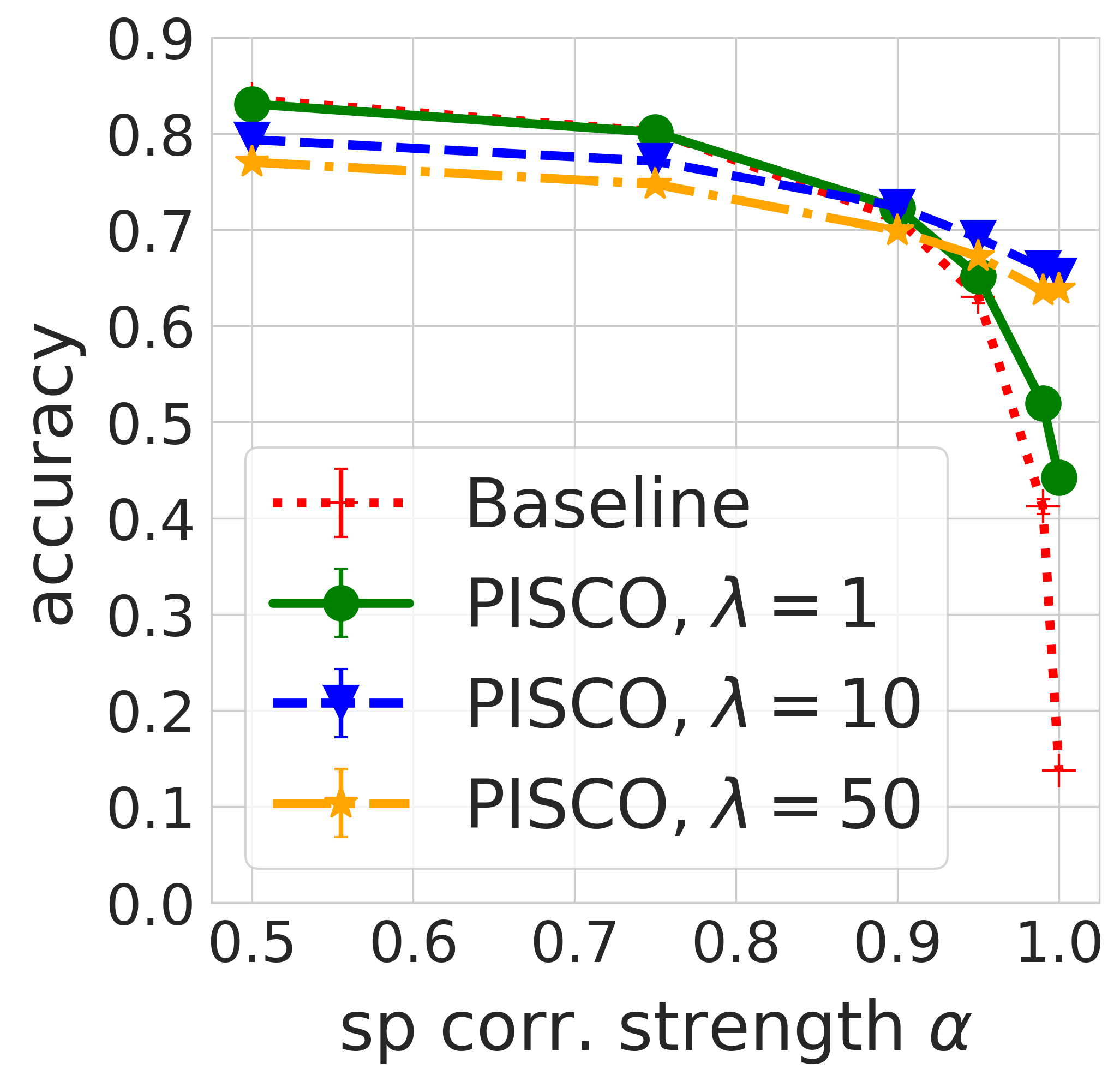}
         \caption{Contrast - \texttt{Supervised}}
         \label{fig:contr-resnet-supp1}
     \end{subfigure}
     \hfill
     \begin{subfigure}[b]{0.22\textwidth}
         \centering
         \includegraphics[width=\textwidth]{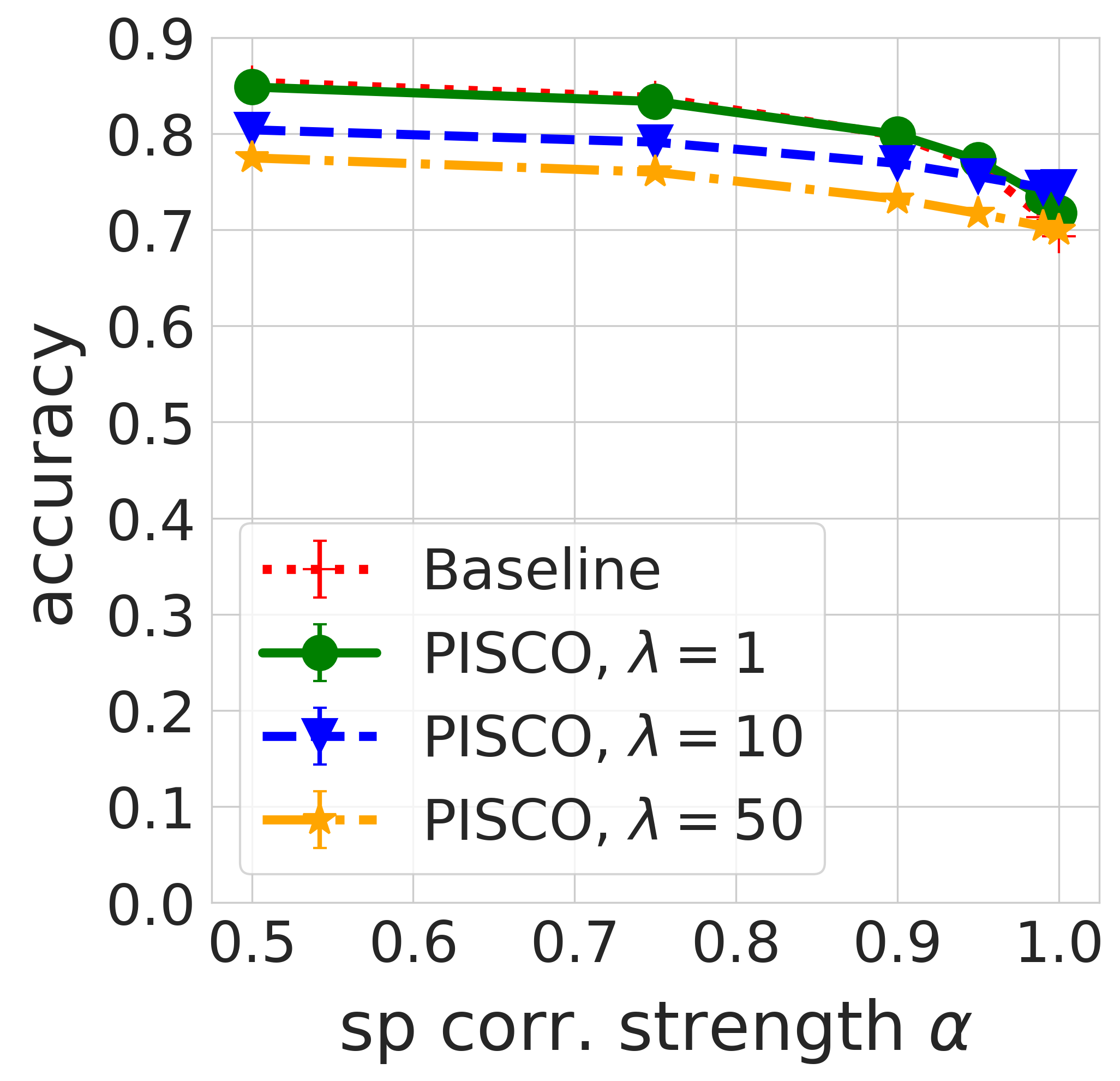}
         \caption{Blur - \texttt{Supervised}}
         \label{fig:blur-resnet-supp1}
     \end{subfigure}
     \hfill
     \begin{subfigure}[b]{0.22\textwidth}
         \centering
         \includegraphics[width=\textwidth]{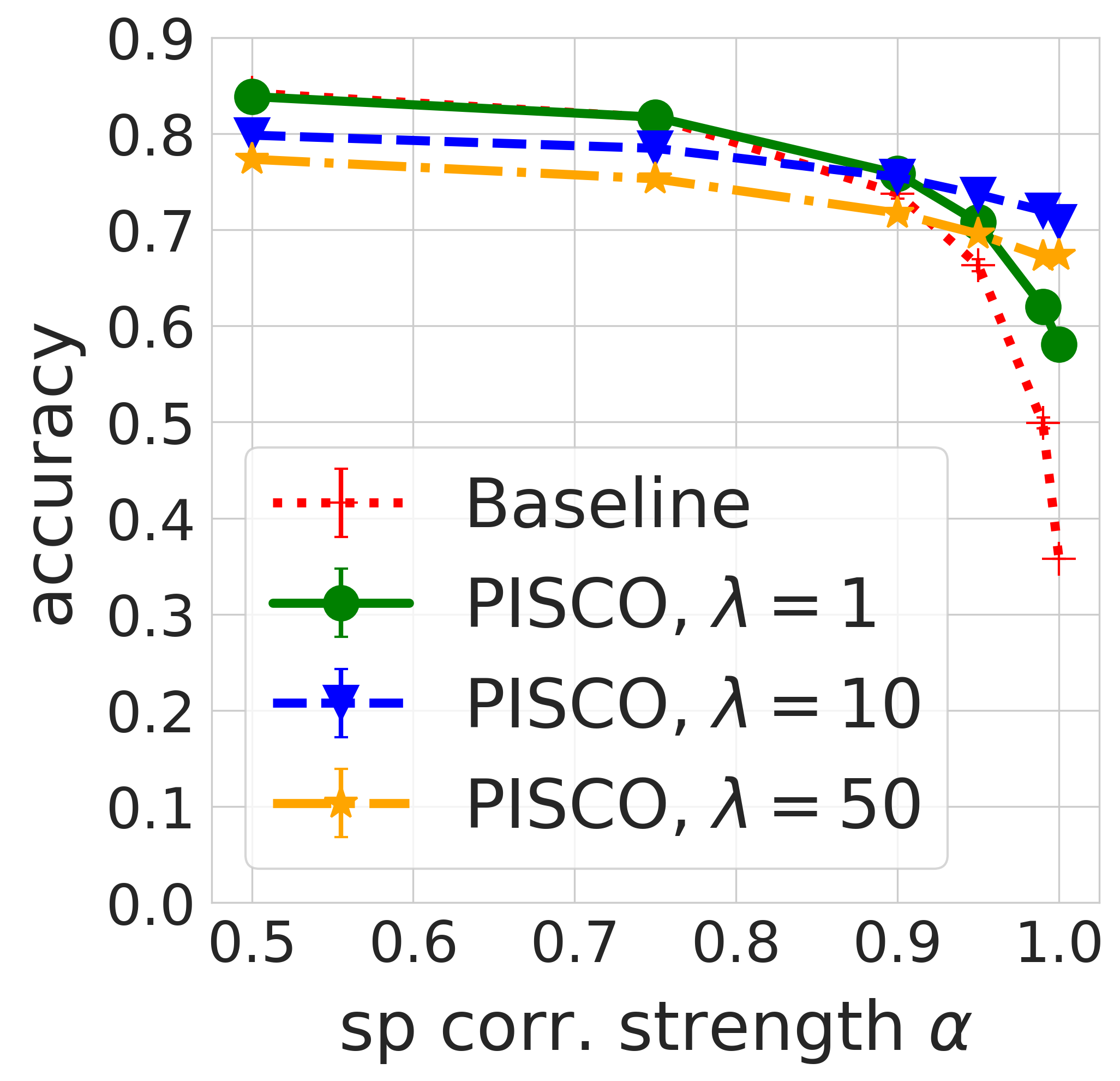}
         \caption{Satur. - \texttt{Supervised}}
         \label{fig:sat-resnet-supp1}
     \end{subfigure}
         \centering
         \caption{$\eta = 0.90$, OOD performance of \texttt{Supervised} representations on CIFAR-10 where the label is spuriously correlated with the corresponding transformation. \method\ significantly improves OOD performance, especially in the case of rotation. Both $\lambda=1$ and $\lambda=10$ preserve in-distribution accuracy, while larger $\lambda=50$ may degrade it as per \eqref{eq:loss}.
         }
         \label{fig:resnet_results_090}
\end{figure}

\begin{figure}
\captionsetup[subfigure]{justification=centering}
     \centering
     \begin{subfigure}[b]{0.22\textwidth}
         \centering
         \includegraphics[width=\textwidth]{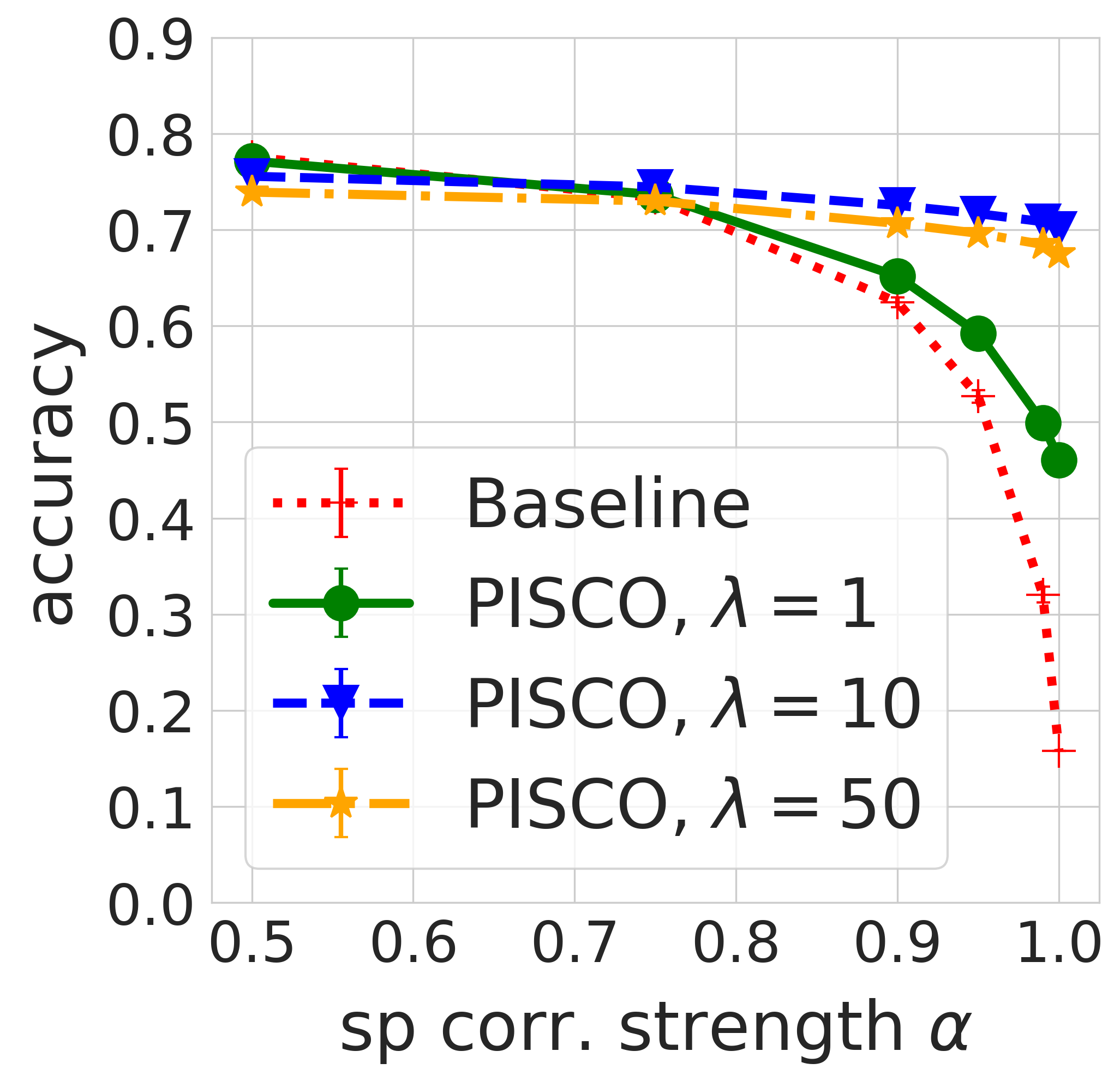}
         \caption{Rotation -  \texttt{SimCLR}}
         \label{fig:rotat-simclr-supp1}
     \end{subfigure}
     \hfill
     \begin{subfigure}[b]{0.22\textwidth}
         \centering
         \includegraphics[width=\textwidth]{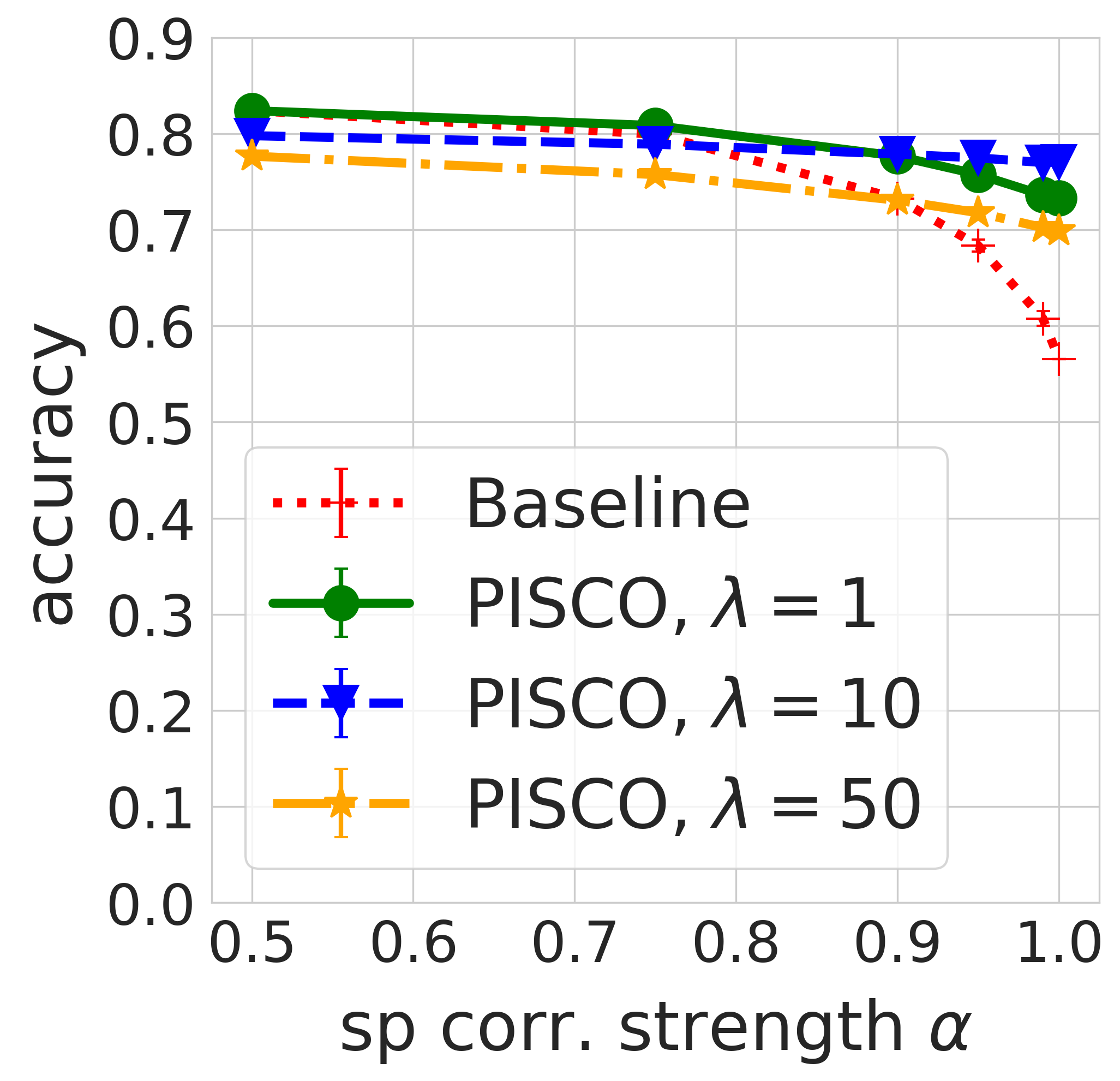}
         \caption{Contrast -  \texttt{SimCLR}}
         \label{fig:contr-simclr-supp1}
     \end{subfigure}
     \hfill
     \begin{subfigure}[b]{0.22\textwidth}
         \centering
         \includegraphics[width=\textwidth]{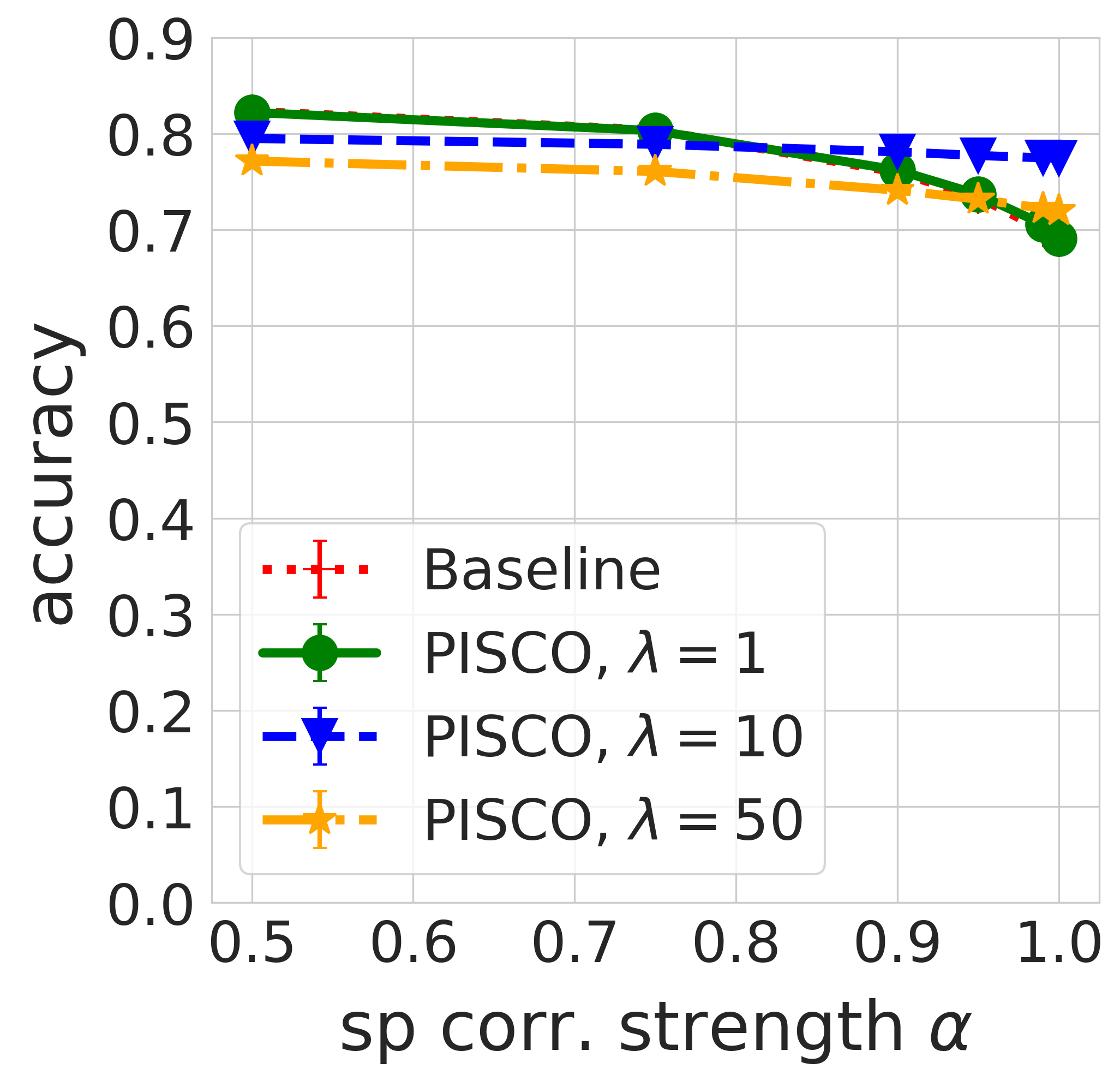}
         \caption{Blur - \texttt{SimCLR}}
         \label{fig:blur-simclr-supp1}
     \end{subfigure}
     \hfill
     \begin{subfigure}[b]{0.22\textwidth}
         \centering
         \includegraphics[width=\textwidth]{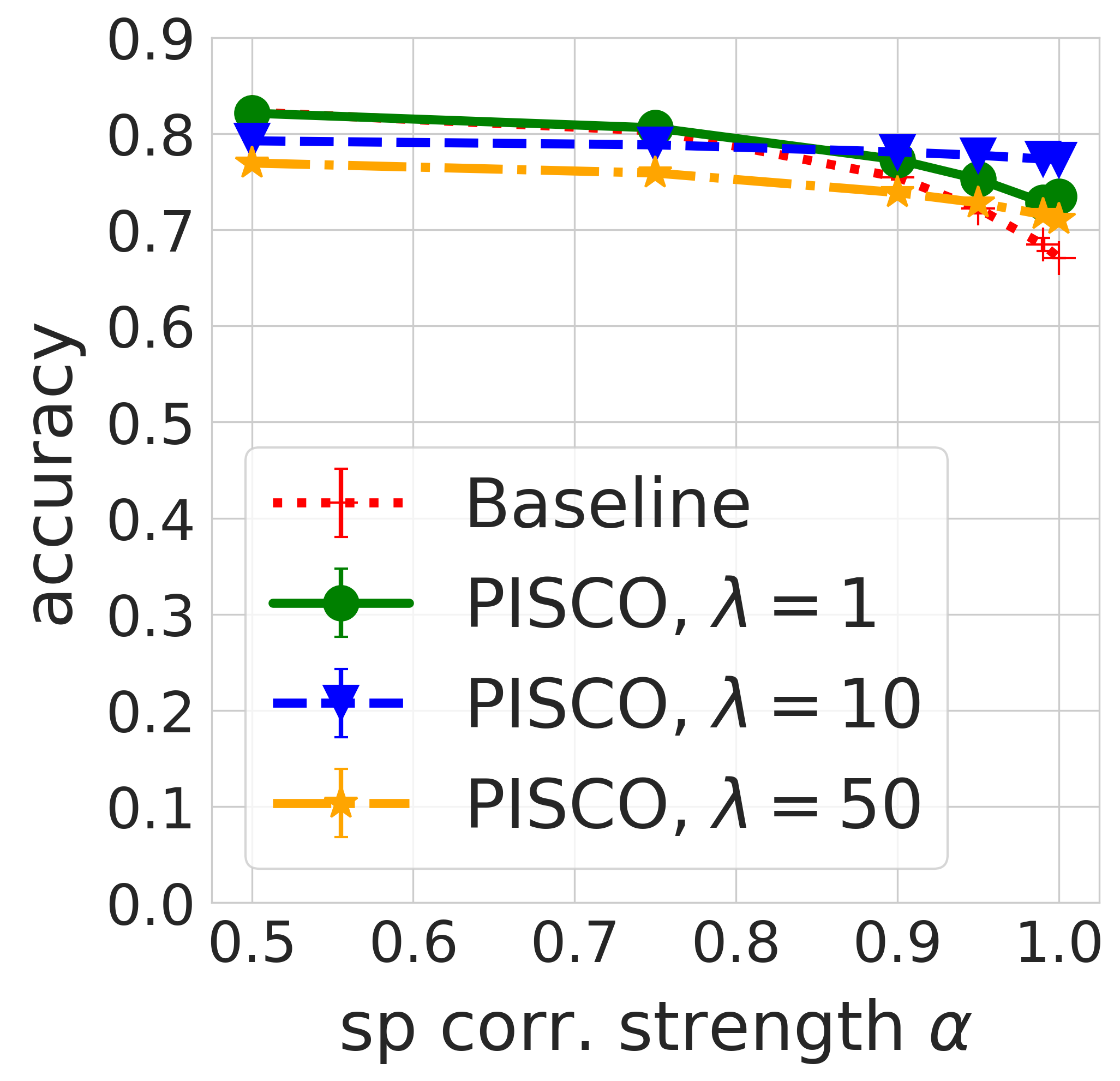}
         \caption{Saturation - \texttt{SimCLR}}
         \label{fig:sat-simclr-supp1}
     \end{subfigure}
         \centering
         \caption{$\eta = 0.90$, OOD performance of \texttt{SimCLR} representations on CIFAR-10 where the label is spuriously correlated with the corresponding transformation. Results are analogous to Figure \ref{fig:resnet_results_090}. The \texttt{SimCLR} baseline representations are less sensitive to contrast and saturation but remain sensitive to rotation.
         }
         \label{fig:simclr_results_090}
        
\end{figure}

\begin{table}
 \caption{$\eta = 0.90$, Performance of \texttt{Supervised} representations on CIFAR-10 test set in-distribution, i.e., no transformation (referred to as ``none''; last row), and OOD when modified with the corresponding transformation. \method\ with $\lambda=1$ provides significant improvements for rotation, contrast, and saturation, while preserving in-distribution accuracy.
 }
 \label{tb:ood_gen_resnet_090}
 \centering

\begin{tabular}{lccccc}
\toprule
Style &  \makecell{Baseline\\\scriptsize{(\texttt{Supervised})}} &  \makecell{\method\\(\small{$\lambda = 1$})} &  \makecell{\method\\(\small{$\lambda = 10$})} &  \makecell{\method\\(\small{$\lambda = 50$})} \\
\midrule
      rotation &  0.678  &  \textbf{0.741}  & 0.722  & 0.693                 \\
    contrast & 0.625  & 0.680 & \textbf{0.741} & 0.718 \\
      saturation & 0.699 &                \textbf{0.759} &                0.742 &                0.714  \\
      blur &  \textbf{0.817} &                \textbf{0.817} &                0.777 &                0.750   \\
     
     none &  \textbf{0.873} & 0.869 & 0.823 &  0.791    \\
\bottomrule
\end{tabular}
\end{table}

\begin{table}
 \caption{
 $\eta = 0.90$, Performance of \texttt{SimCLR} representations on CIFAR-10 test set in-distribution, i.e., no transformation (referred to as ``none''; last row), and OOD when modified with the corresponding transformation. \texttt{SimCLR} features are robust to these transformations and perform similarly to \method\ with $\lambda=1$.
 }
 \label{tb:ood_gen_simclr_090}
 \centering

\begin{tabular}{lccccc}
\toprule
 Style &  \makecell{Baseline\\\scriptsize{(\texttt{SimCLR})}} &  \makecell{\method\\(\small{$\lambda = 1$})} &  \makecell{\method\\(\small{$\lambda = 10$})} &  \makecell{\method\\(\small{$\lambda = 50$})} \\
\midrule
       rotation &             0.620 &                0.632 &                \textbf{0.697} &                 0.689\\
    contrast &             \textbf{0.816} &                0.815 &                0.795 &                 0.775 \\
      saturation &             \textbf{0.810} &                0.805 &                0.782 &                 0.763 \\
      blur &             \textbf{0.808} &                0.804 &                0.783 &                 0.762 \\
      
      none &    \textbf{0.828}    & \textbf{0.828}	  & 0.800    & 0.778     \\
      
\bottomrule
\end{tabular}
\end{table}



\begin{figure}
\captionsetup[subfigure]{justification=centering}
     \centering
     \begin{subfigure}[b]{0.22\textwidth}
         \centering
         \includegraphics[width=\textwidth]{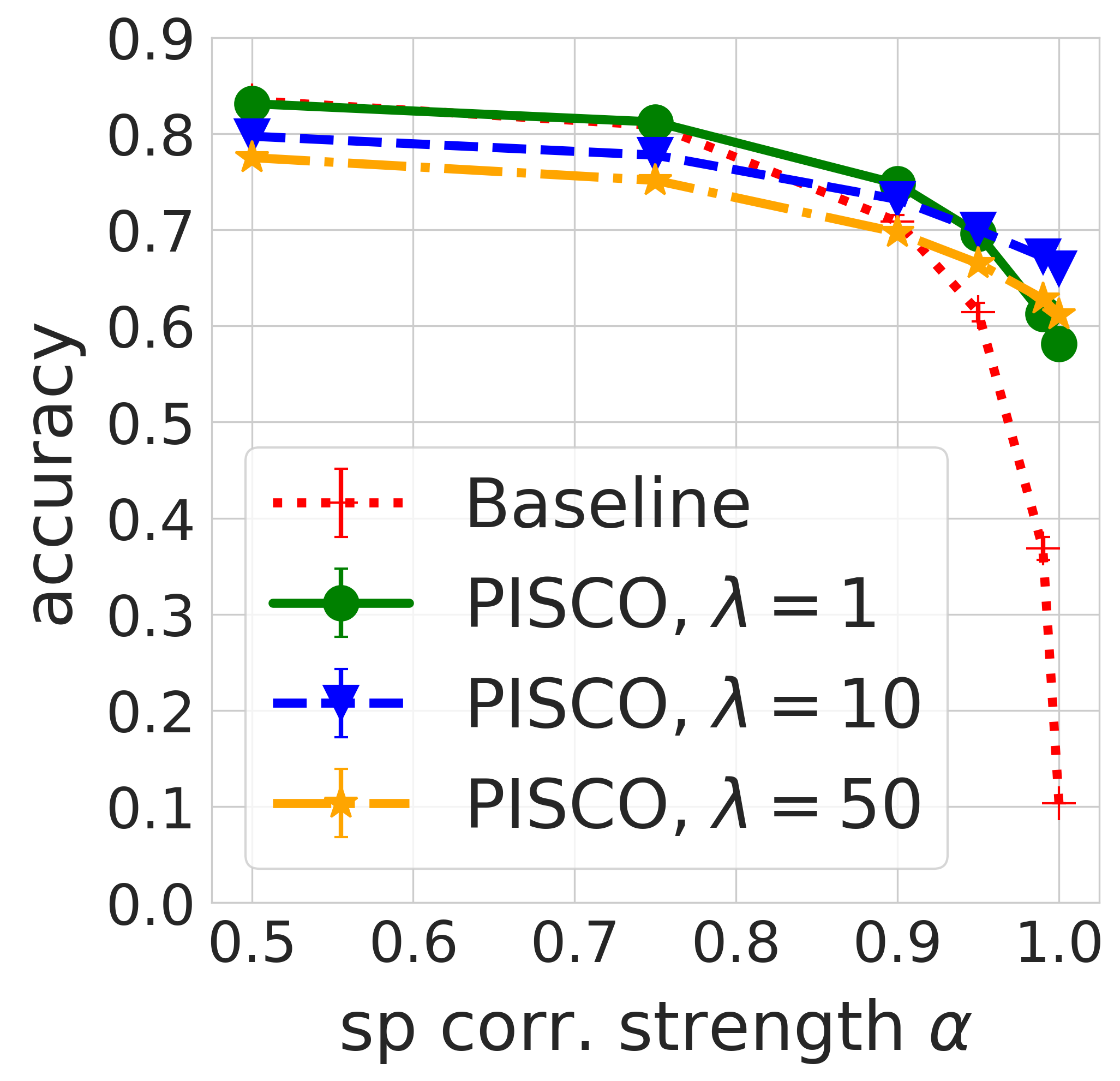}
         \caption{Rotation - \texttt{Supervised}}
         \label{fig:rotat-resnet-supp2}
     \end{subfigure}
     \hfill
     \begin{subfigure}[b]{0.22\textwidth}
         \centering
         \includegraphics[width=\textwidth]{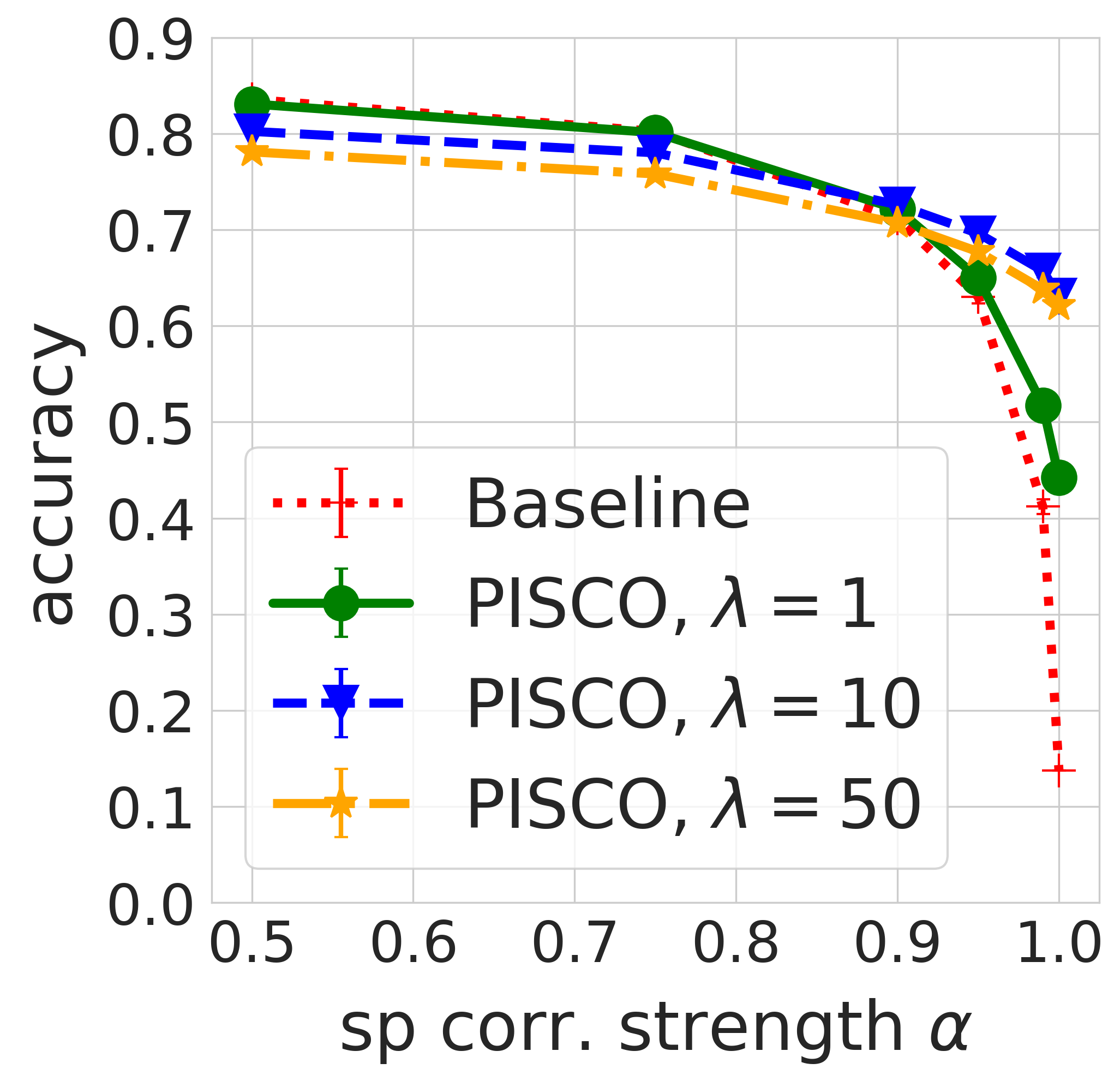}
         \caption{Contrast - \texttt{Supervised}}
         \label{fig:contr-resnet-supp2}
     \end{subfigure}
     \hfill
     \begin{subfigure}[b]{0.22\textwidth}
         \centering
         \includegraphics[width=\textwidth]{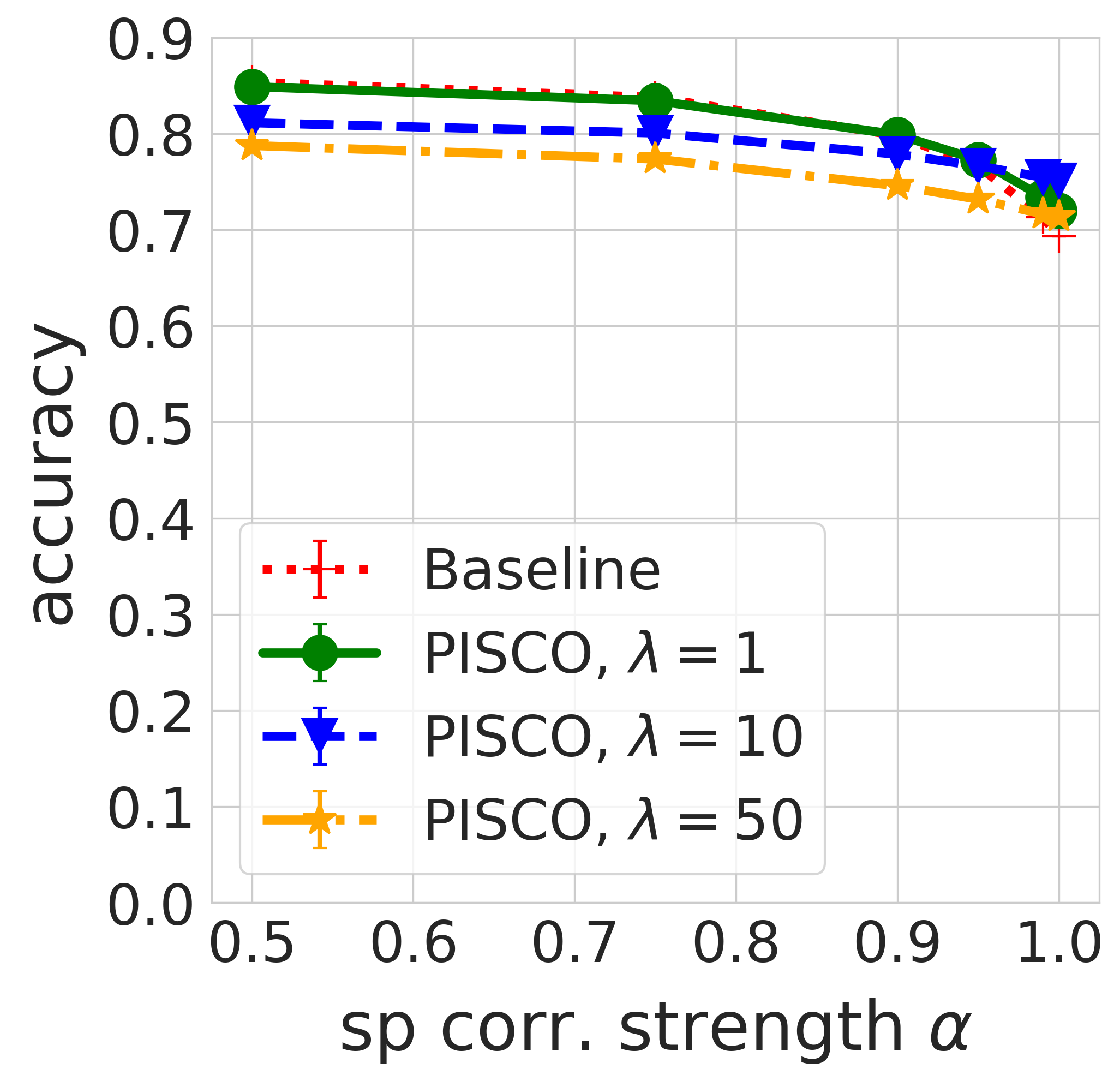}
         \caption{Blur - \texttt{Supervised}}
         \label{fig:blur-resnet-supp2}
     \end{subfigure}
     \hfill
     \begin{subfigure}[b]{0.22\textwidth}
         \centering
         \includegraphics[width=\textwidth]{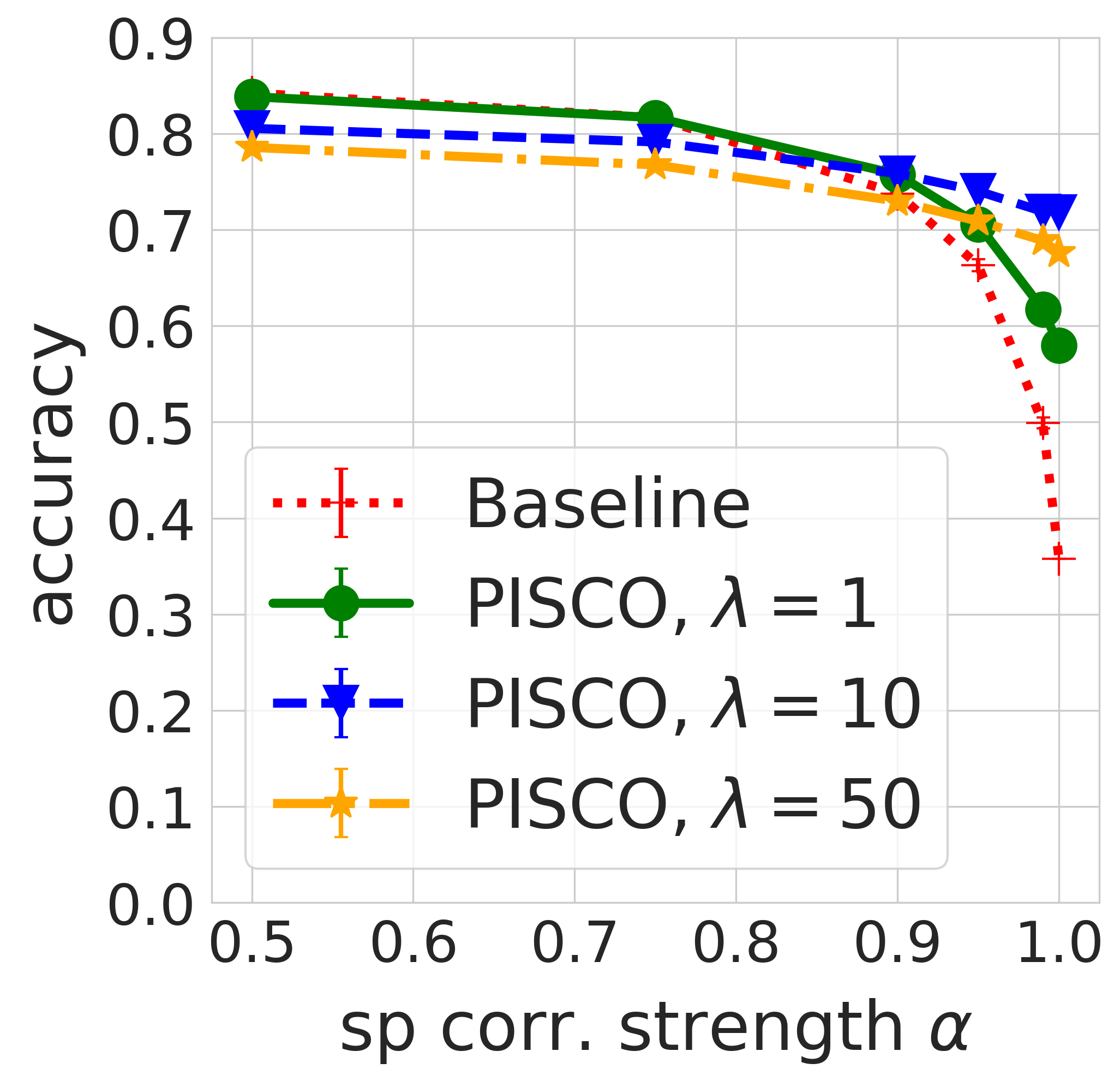}
         \caption{Satur. - \texttt{Supervised}}
         \label{fig:sat-resnet-supp2}
     \end{subfigure}
         \centering
         \caption{$\eta = 0.93$, OOD performance of \texttt{Supervised} representations on CIFAR-10 where the label is spuriously correlated with the corresponding transformation. \method\ significantly improves OOD performance, especially in the case of rotation. Both $\lambda=1$ and $\lambda=10$ preserve in-distribution accuracy, while larger $\lambda=50$ may degrade it as per \eqref{eq:loss}.
         }
         \label{fig:resnet_results_093}
\end{figure}

\begin{figure}
\captionsetup[subfigure]{justification=centering}
     \centering
     \begin{subfigure}[b]{0.22\textwidth}
         \centering
         \includegraphics[width=\textwidth]{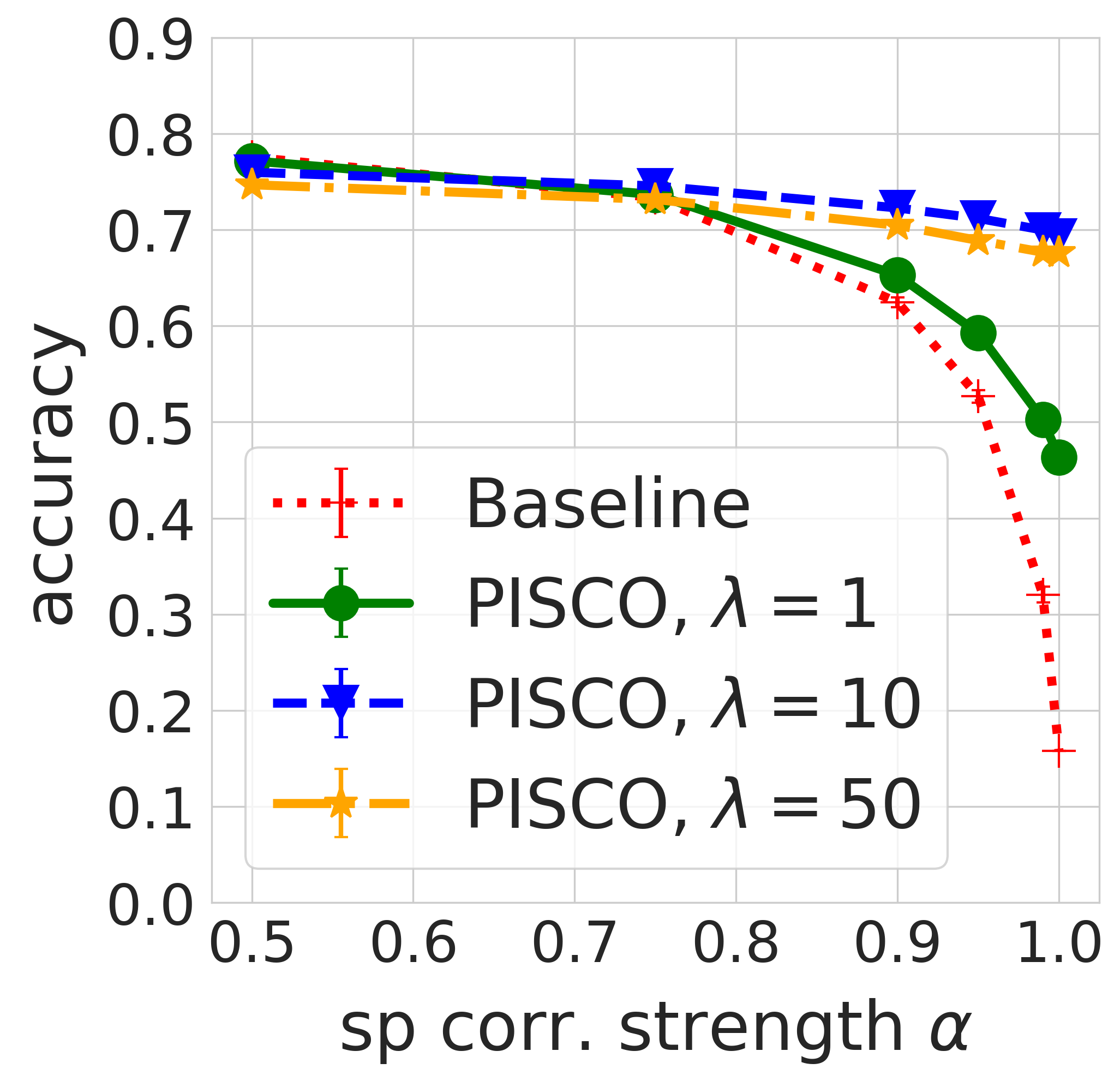}
         \caption{Rotation - \texttt{SimCLR}}
         \label{fig:rotat-simclr-supp2}
     \end{subfigure}
     \hfill
     \begin{subfigure}[b]{0.22\textwidth}
         \centering
         \includegraphics[width=\textwidth]{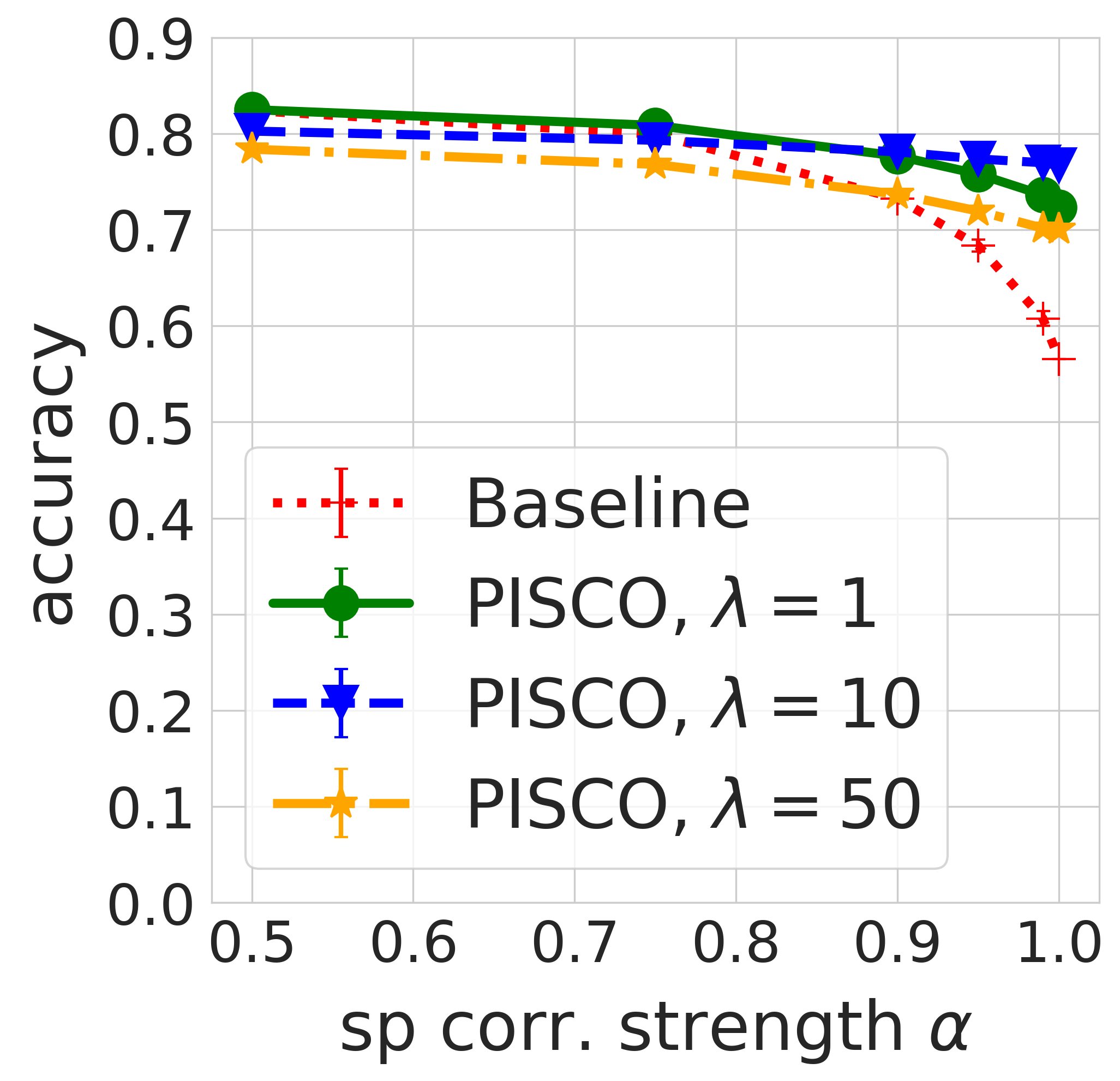}
         \caption{Contrast - \texttt{SimCLR}}
         \label{fig:contr-simclr-supp2}
     \end{subfigure}
     \hfill
     \begin{subfigure}[b]{0.22\textwidth}
         \centering
         \includegraphics[width=\textwidth]{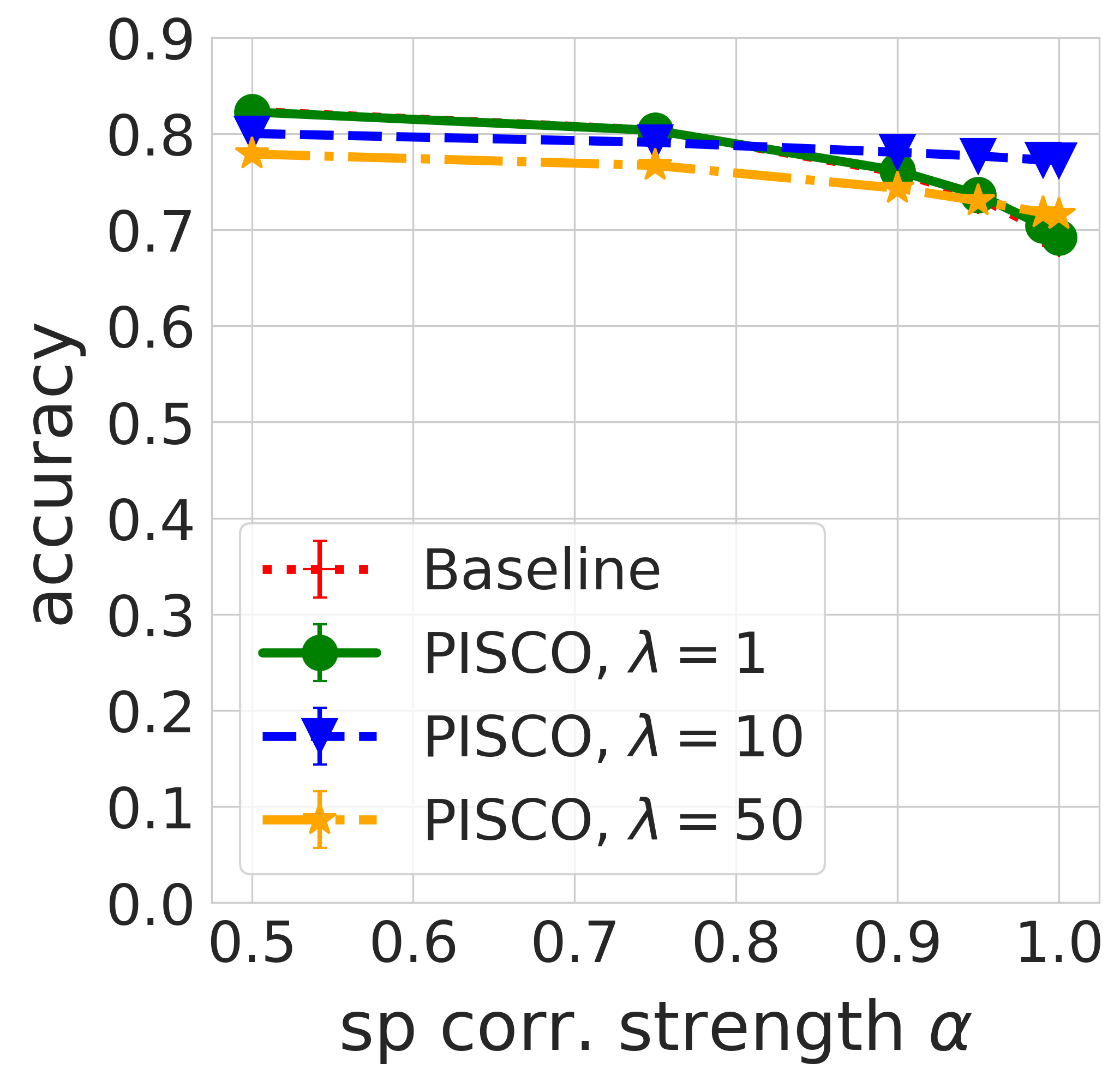}
         \caption{Blur - \texttt{SimCLR}}
         \label{fig:blur-simclr-supp2}
     \end{subfigure}
     \hfill
     \begin{subfigure}[b]{0.22\textwidth}
         \centering
         \includegraphics[width=\textwidth]{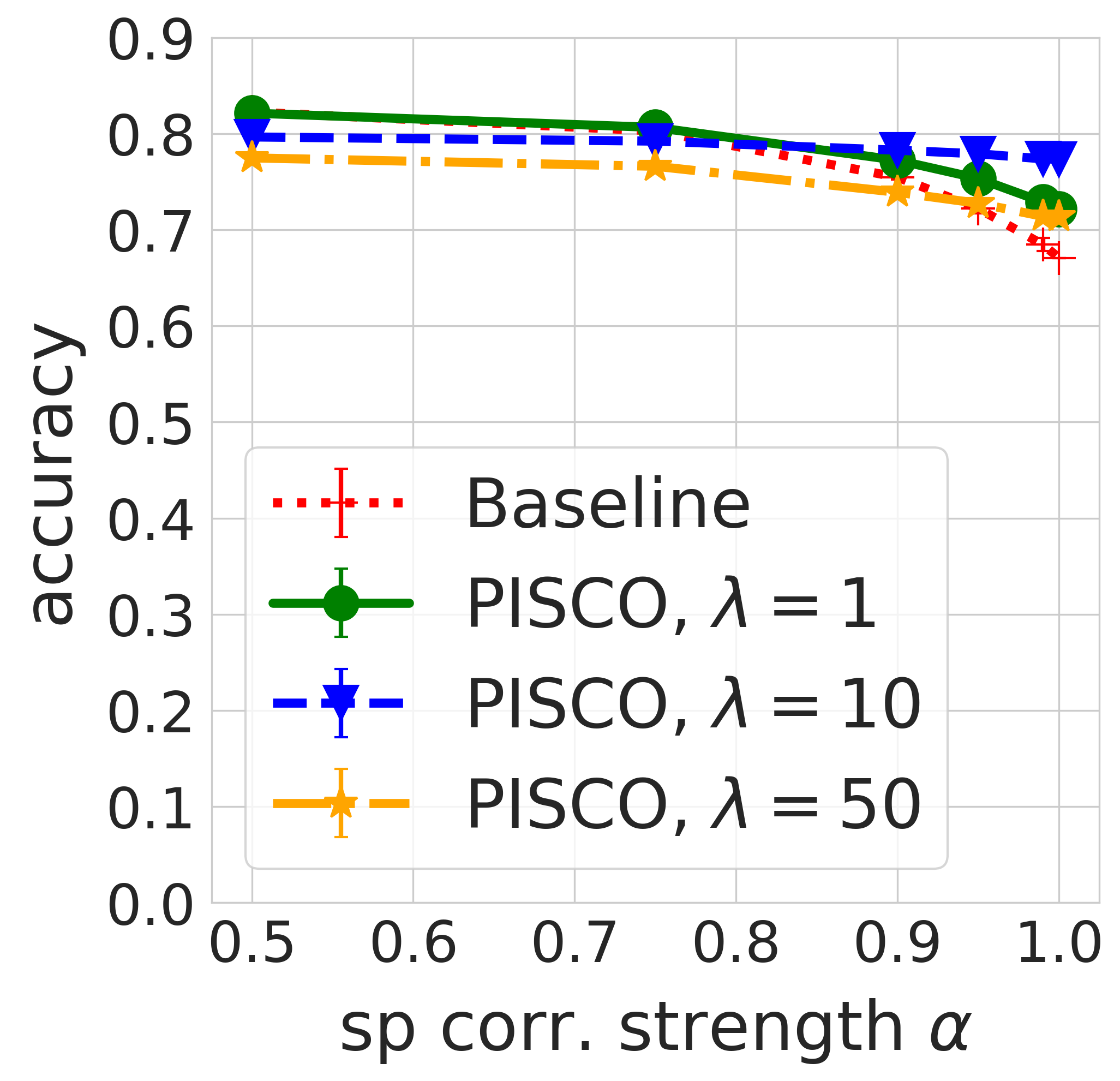}
         \caption{Saturation - \texttt{SimCLR}}
         \label{fig:sat-simclr-supp2}
     \end{subfigure}
         \centering
         \caption{$\eta = 0.93$, OOD performance of \texttt{SimCLR} representations on CIFAR-10 where the label is spuriously correlated with the corresponding transformation. Results are analogous to Figure \ref{fig:resnet_results_093}. The \texttt{SimCLR} baseline representations are less sensitive to contrast and saturation but remain sensitive to rotation.
         }
         \label{fig:simclr_results_093}
        
\end{figure}

\begin{table}
 \caption{$\eta = 0.93$, Performance of \texttt{Supervised} representations on CIFAR-10 test set in-distribution, i.e., no transformation (referred to as ``none''; last row), and OOD when modified with the corresponding transformation. \method\ with $\lambda=1$ provides significant improvements for rotation, contrast, and saturation while preserving in-distribution accuracy.
 }
 \label{tb:ood_gen_resnet_093}
 \centering

\begin{tabular}{lccccc}
\toprule
Style &  \makecell{Baseline\\\scriptsize{(\texttt{Supervised})}} &  \makecell{\method\\(\small{$\lambda = 1$})} &  \makecell{\method\\(\small{$\lambda = 10$})} &  \makecell{\method\\(\small{$\lambda = 50$})} \\
\midrule
      rotation &  0.678 &                \textbf{0.741} &                0.726 &                0.700                  \\
    contrast & 0.625 &                0.678 &                \textbf{0.744} &                0.723  \\
      saturation & 0.699 &                \textbf{0.759} &                0.742 &                0.718   \\
      blur & \textbf{0.817} &                \textbf{0.817} &                0.788 &                0.761    \\
     
     none &  \textbf{0.873} & 0.871 & 0.827 &  0.805    \\
\bottomrule
\end{tabular}
\end{table}

\begin{table}
 \caption{
 $\eta = 0.93$, Performance of \texttt{SimCLR} representations on CIFAR-10 test set in-distribution, i.e., no transformation (referred to as ``none''; last row), and OOD when modified with the corresponding transformation. \texttt{SimCLR} features are robust to these transformations and perform similarly to \method\ with $\lambda=1$.
 }
 \label{tb:ood_gen_simclr_093}
 \centering

\begin{tabular}{lccccc}
\toprule
 Style &  \makecell{Baseline\\\scriptsize{(SimCLR)}} &  \makecell{\method\\(\small{$\lambda = 1$})} &  \makecell{\method\\(\small{$\lambda = 10$})} &  \makecell{\method\\(\small{$\lambda = 50$})} \\
\midrule
       rotation &             0.620 &               0.632 &                 \textbf{0.695} &                0.692\\
    contrast &             0.816 &               \textbf{0.817} &                 0.797 &                0.786 \\
      saturation &             \textbf{0.810} &               0.809 &                 0.786 &                0.765 \\
      blur &             \textbf{0.808} &                0.804 &                0.783 &                 0.762 \\
      
      none &    \textbf{0.828}    & 0.826	  & 0.804    & 0.782     \\
      
\bottomrule
\end{tabular}
\end{table}



\begin{figure}
\captionsetup[subfigure]{justification=centering}
     \centering
     \begin{subfigure}[b]{0.22\textwidth}
         \centering
         \includegraphics[width=\textwidth]{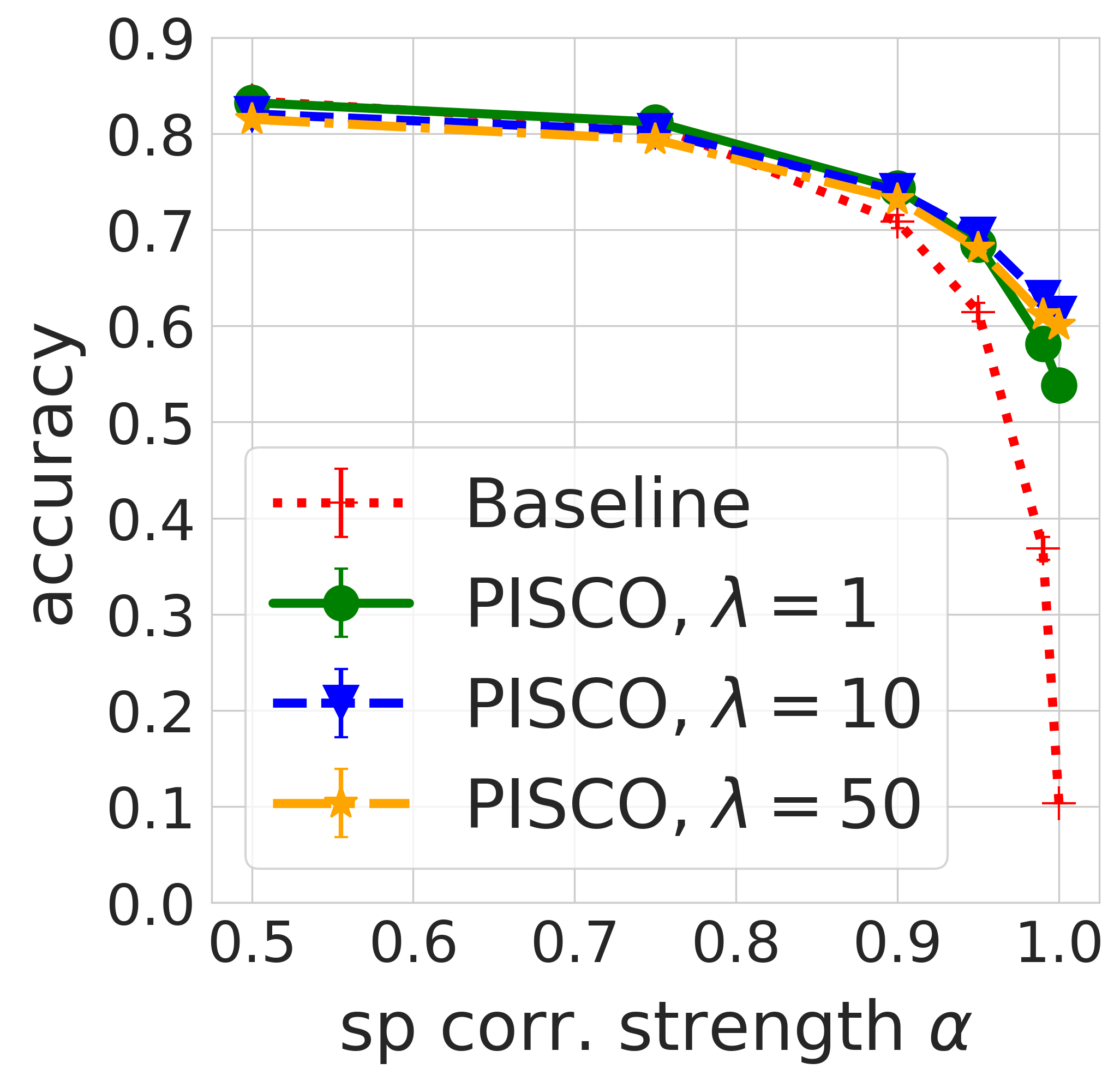}
         \caption{Rotation - \texttt{Supervised}}
         \label{fig:rotat-resnet-supp3}
     \end{subfigure}
     \hfill
     \begin{subfigure}[b]{0.22\textwidth}
         \centering
         \includegraphics[width=\textwidth]{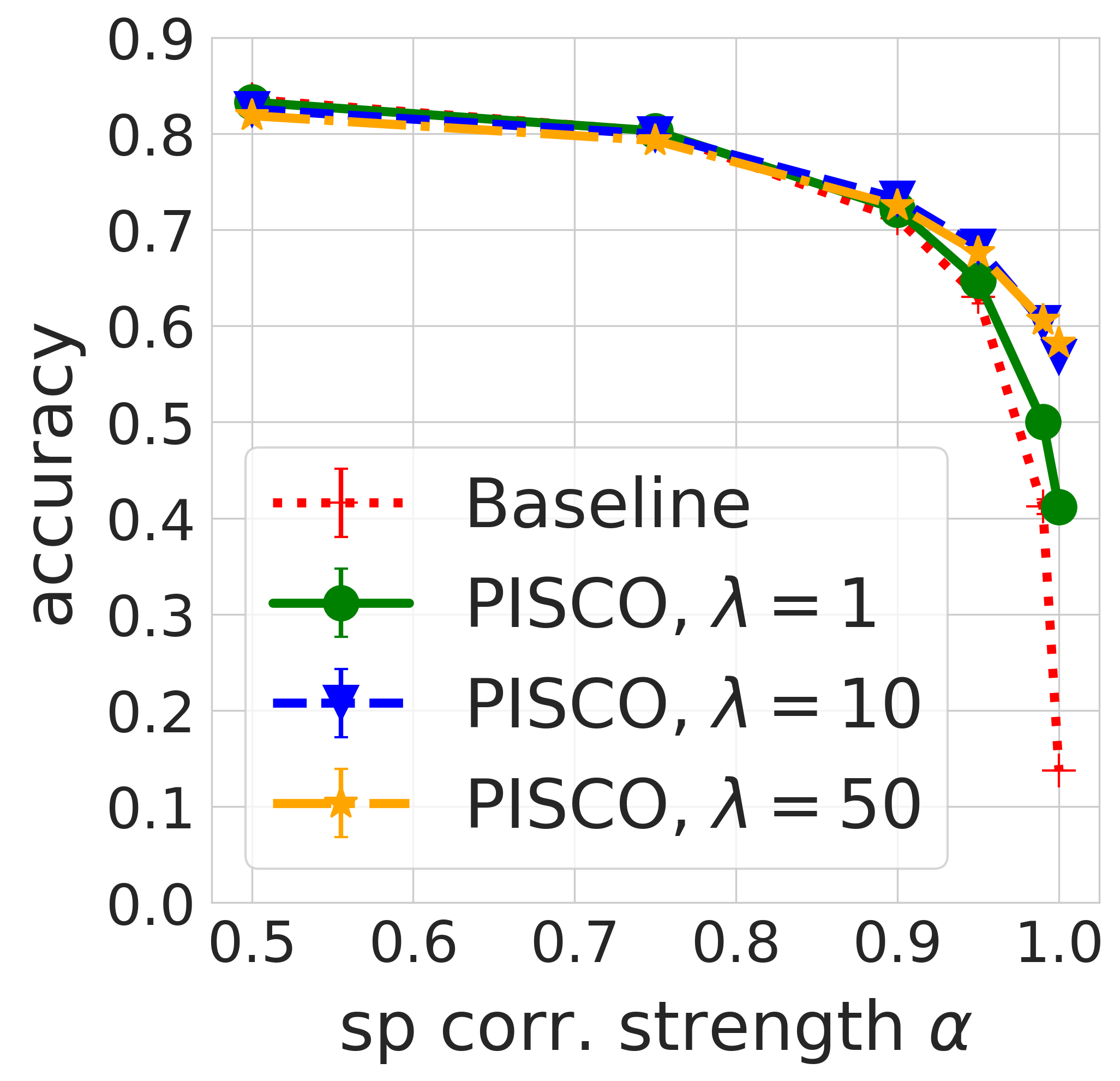}
         \caption{Contrast - \texttt{Supervised}}
         \label{fig:contr-resnet-supp3}
     \end{subfigure}
     \hfill
     \begin{subfigure}[b]{0.22\textwidth}
         \centering
         \includegraphics[width=\textwidth]{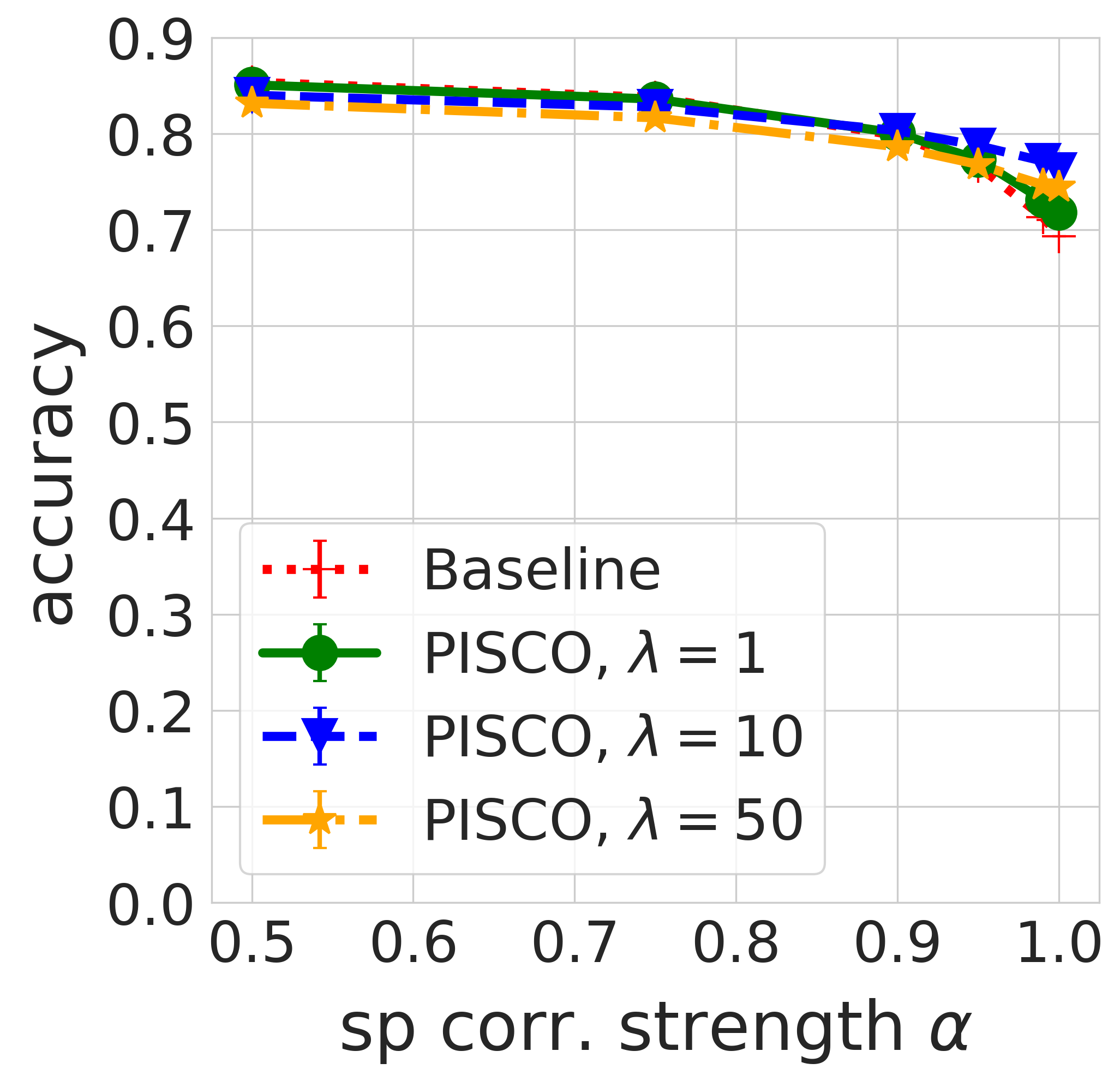}
         \caption{Blur - \texttt{Supervised}}
         \label{fig:blur-resnet-supp3}
     \end{subfigure}
     \hfill
     \begin{subfigure}[b]{0.22\textwidth}
         \centering
         \includegraphics[width=\textwidth]{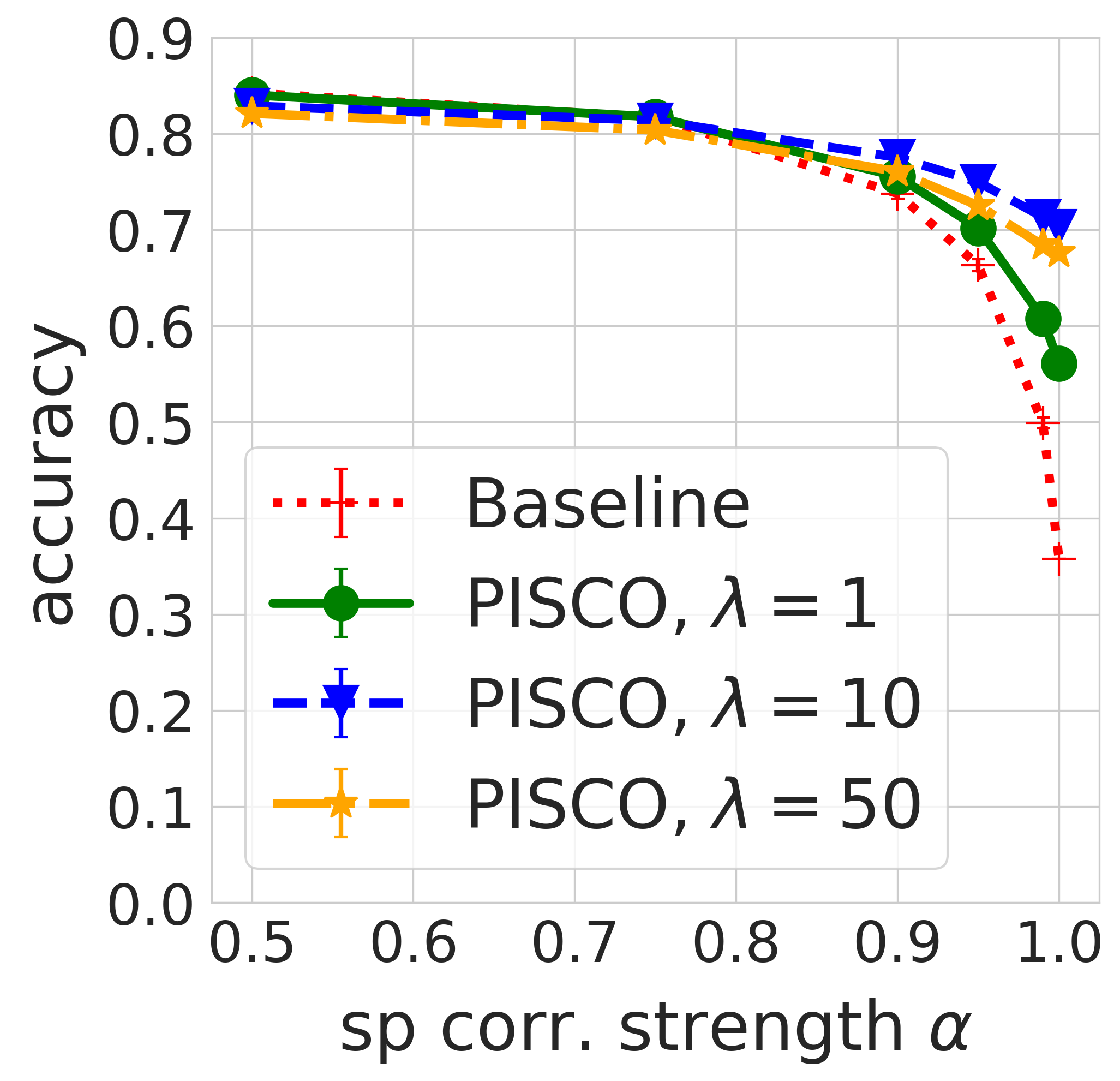}
         \caption{Saturation - \texttt{Supervised}}
         \label{fig:sat-resnet-supp3}
     \end{subfigure}
         \centering
         \caption{$\eta = 0.98$, OOD performance of \texttt{Supervised} representations on CIFAR-10 where the label is spuriously correlated with the corresponding transformation. \method\ significantly improves OOD performance, especially in the case of rotation. Both $\lambda=1$ and $\lambda=10$ preserve in-distribution accuracy, while larger $\lambda=50$ may degrade it as per \eqref{eq:loss}.
         }
         \label{fig:resnet_results_098}
\end{figure}

\begin{figure}
\captionsetup[subfigure]{justification=centering}
     \centering
     \begin{subfigure}[b]{0.22\textwidth}
         \centering
         \includegraphics[width=\textwidth]{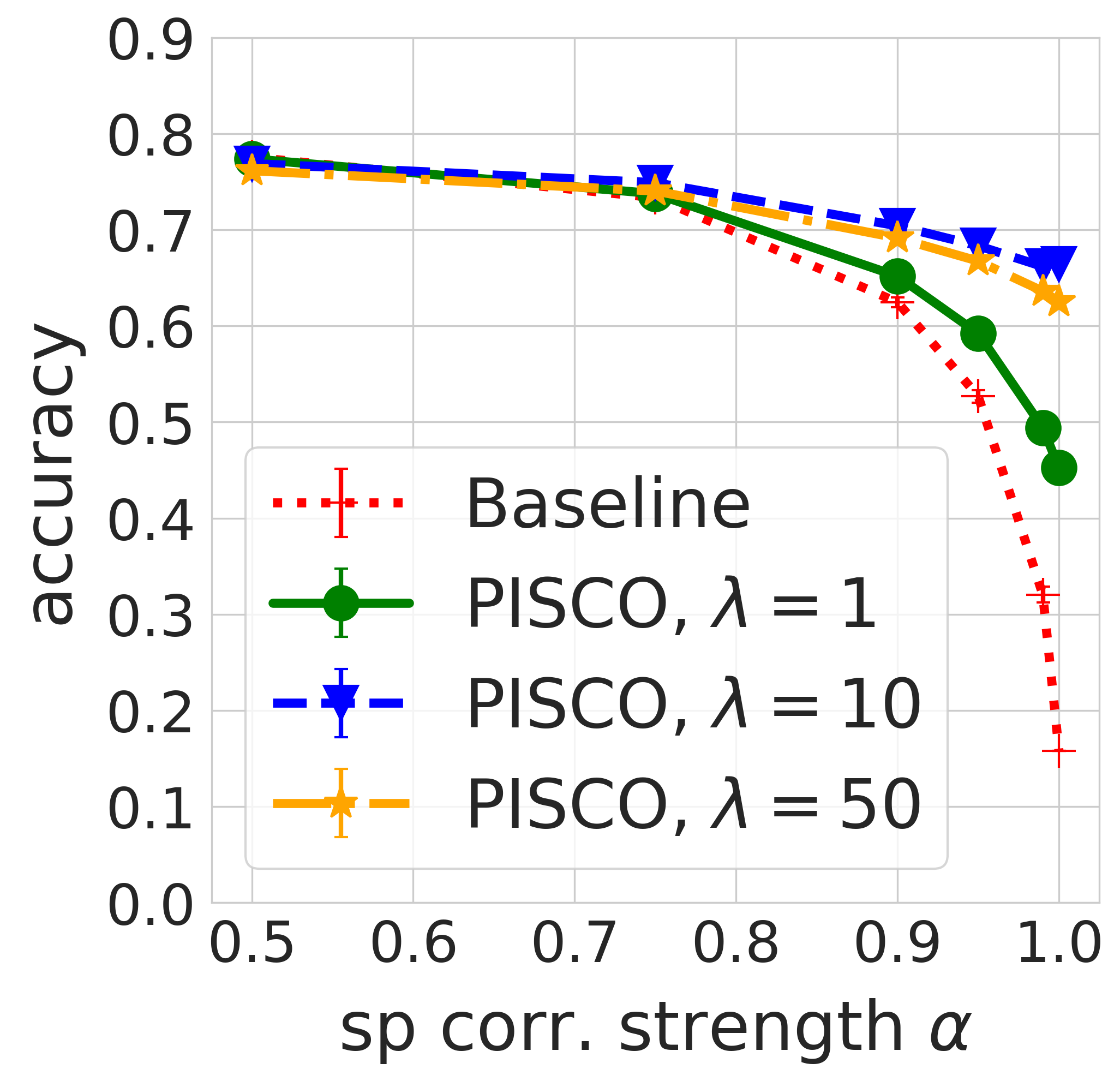}
         \caption{Rotation - \texttt{SimCLR}}
         \label{fig:rotat-simclr-supp3}
     \end{subfigure}
     \hfill
     \begin{subfigure}[b]{0.22\textwidth}
         \centering
         \includegraphics[width=\textwidth]{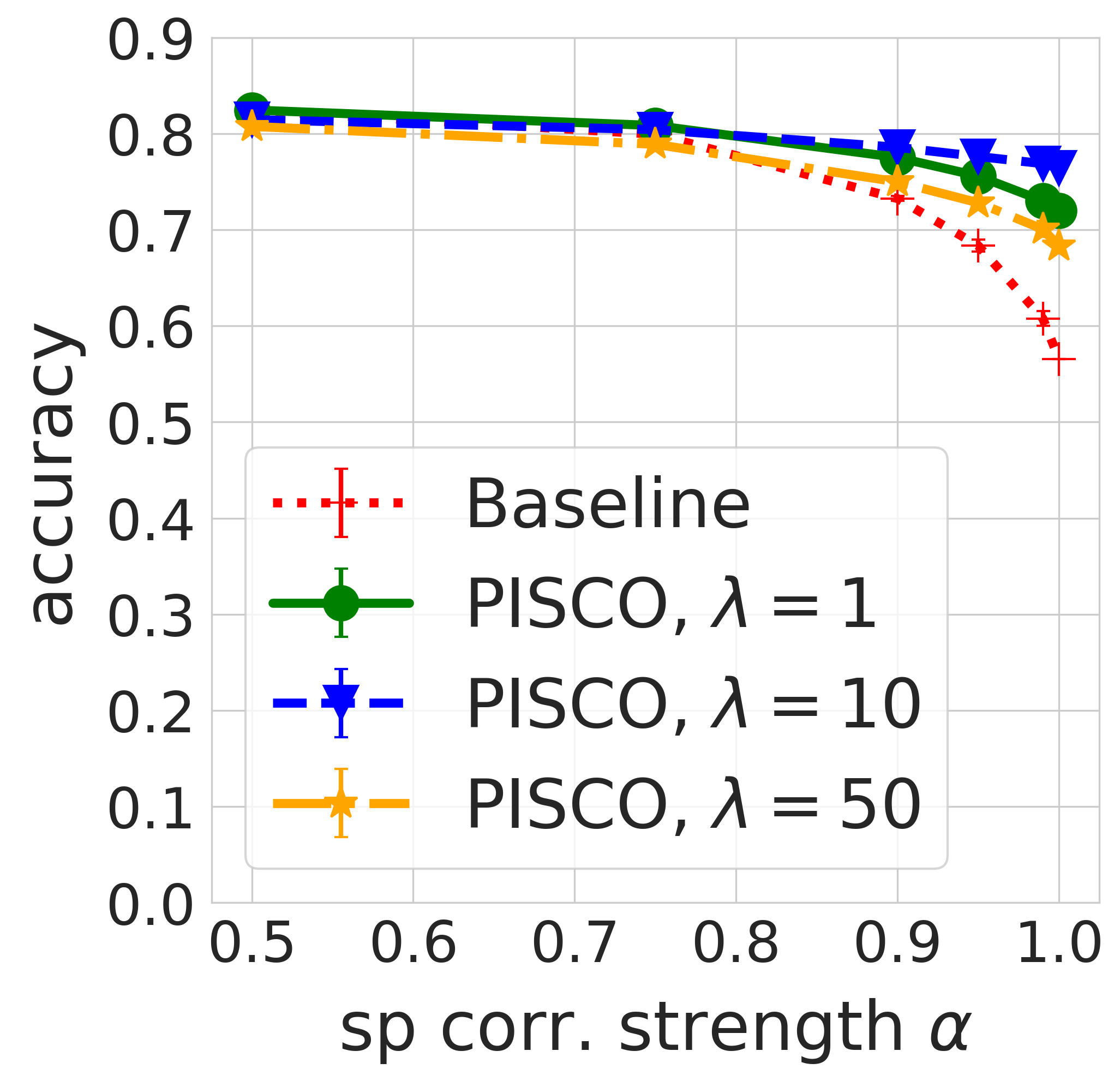}
         \caption{Contrast - \texttt{SimCLR}}
         \label{fig:contr-simclr-supp3}
     \end{subfigure}
     \hfill
     \begin{subfigure}[b]{0.22\textwidth}
         \centering
         \includegraphics[width=\textwidth]{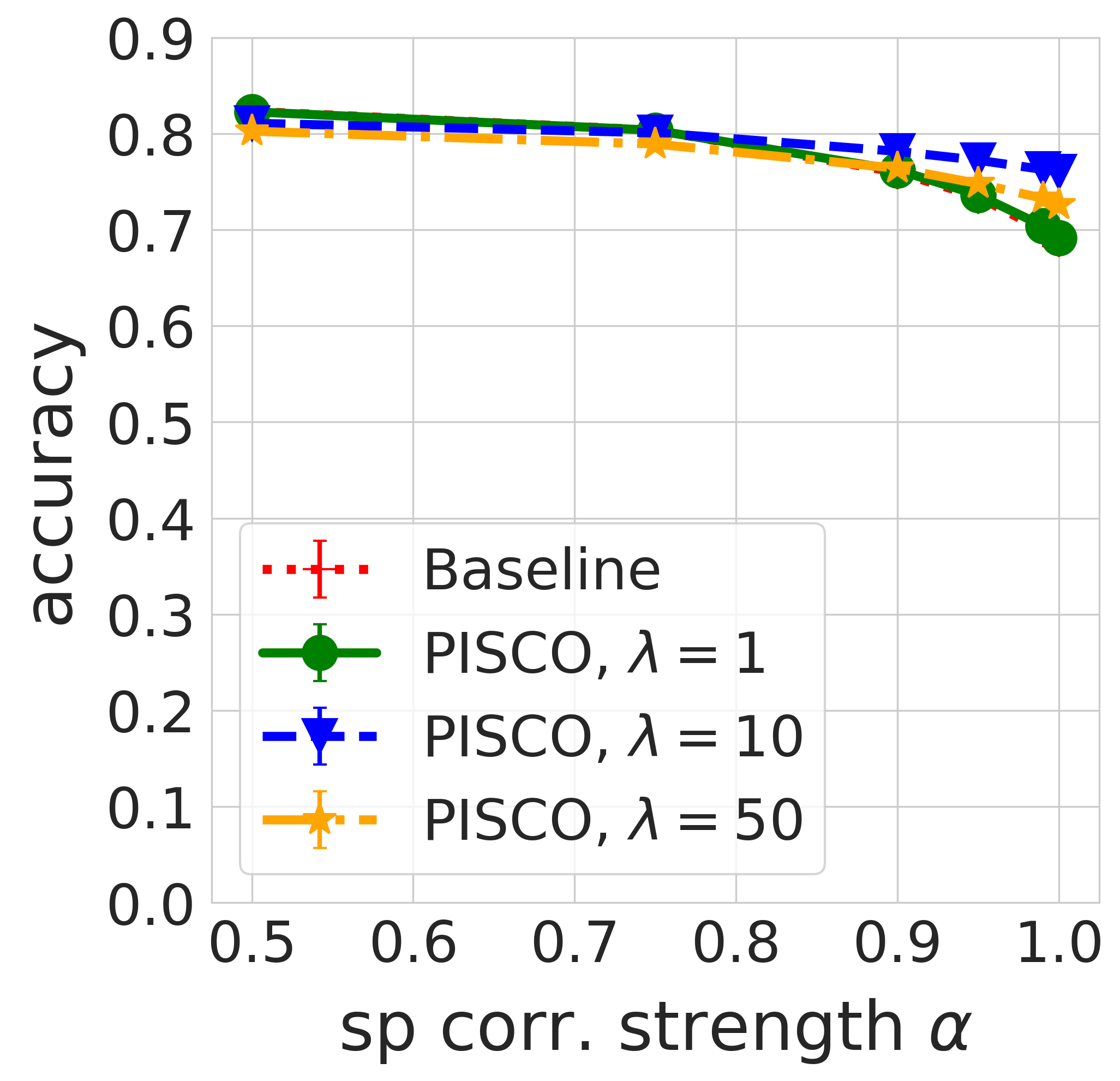}
         \caption{Blur - \texttt{SimCLR}}
         \label{fig:blur-simclr-supp3}
     \end{subfigure}
     \hfill
     \begin{subfigure}[b]{0.22\textwidth}
         \centering
         \includegraphics[width=\textwidth]{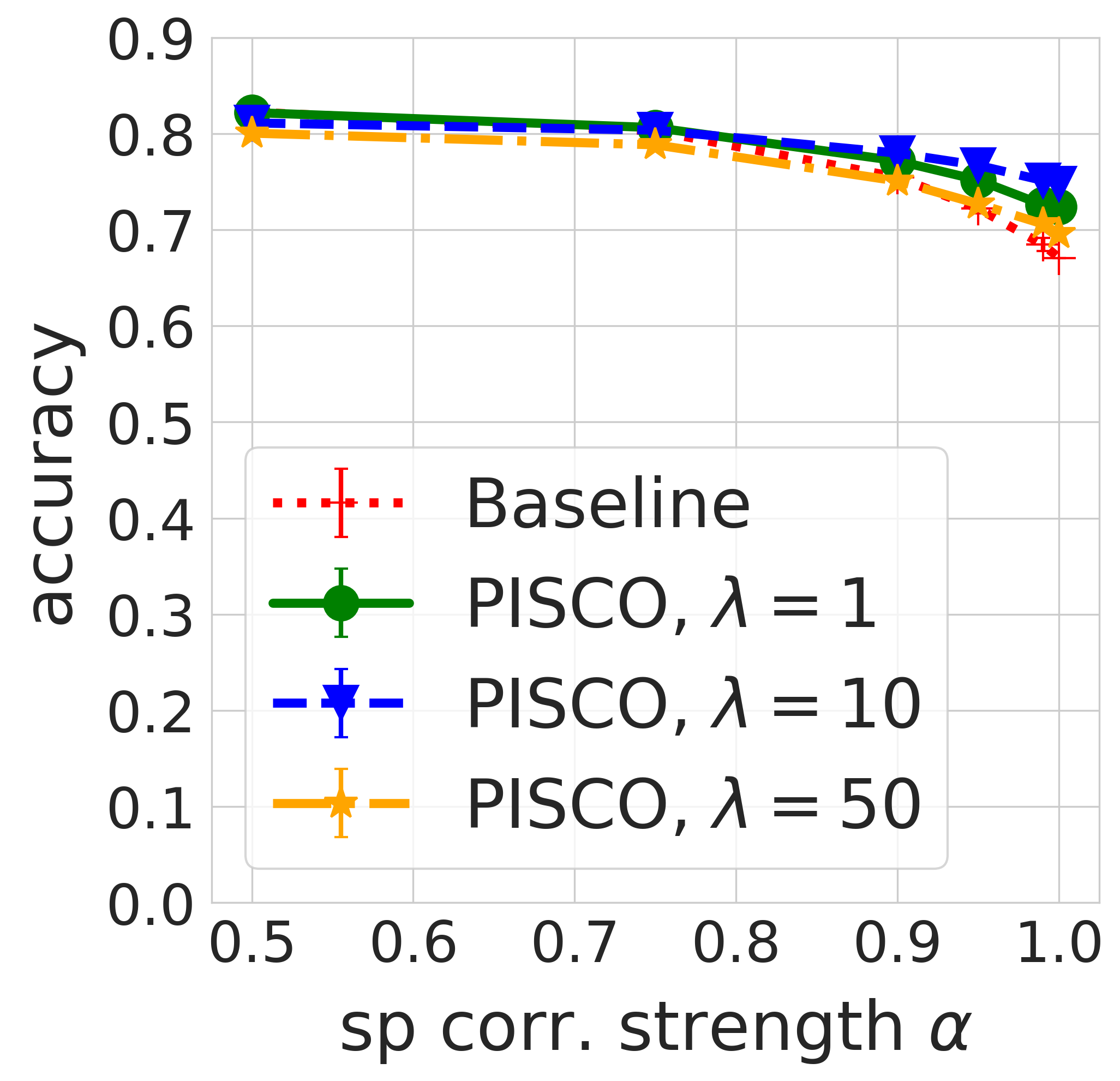}
         \caption{Saturation - \texttt{SimCLR}}
         \label{fig:sat-simclr-supp3}
     \end{subfigure}
         \centering
         \caption{$\eta = 0.98$, OOD performance of \texttt{SimCLR} representations on CIFAR-10 where the label is spuriously correlated with the corresponding transformation. Results are analogous to Figure \ref{fig:resnet_results_098}. The \texttt{SimCLR} baseline representations are less sensitive to contrast and saturation but remain sensitive to rotation.
         }
         \label{fig:simclr_results_098}
        
\end{figure}

\begin{table}
 \caption{$\eta = 0.98$, Performance of \texttt{Supervised} representations on CIFAR-10 test set in-distribution, i.e., no transformation (referred to as ``none''; last row), and OOD when modified with the corresponding transformation. \method\ with $\lambda=1$ provides significant improvements for rotation, contrast, and saturation, while preserving in-distribution accuracy.
 }
 \label{tb:ood_gen_resnet_098}
 \centering

\begin{tabular}{lccccc}
\toprule
Style &  \makecell{Baseline\\\scriptsize{(\texttt{Supervised})}} &  \makecell{\method\\(\small{$\lambda = 1$})} &  \makecell{\method\\(\small{$\lambda = 10$})} &  \makecell{\method\\(\small{$\lambda = 50$})} \\
\midrule
      rotation &  0.678 &               \textbf{0.739} &                0.736 &                 0.726                 \\
    contrast & 0.625 &               0.680 &                \textbf{0.740} &                 0.729  \\
      saturation & 0.699 &               \textbf{0.757} &                0.740 &                 0.726   \\
      blur & \textbf{0.817} &               \textbf{0.817} &                0.807 &                 0.797    \\
     
     none &  \textbf{0.873} & 0.871 & 0.861 &  0.851    \\
\bottomrule
\end{tabular}
\end{table}

\begin{table}
 \caption{
 $\eta = 0.98$, Performance of \texttt{SimCLR} representations on CIFAR-10 test set in-distribution, i.e., no transformation (referred to as ``none''; last row), and OOD when modified with the corresponding transformation. \texttt{SimCLR} features are robust to these transformations and perform similarly to \method\ with $\lambda=1$.
 }
 \label{tb:ood_gen_simclr_098}
 \centering

\begin{tabular}{lccccc}
\toprule
 Style &  \makecell{Baseline\\\scriptsize{(\texttt{SimCLR})}} &  \makecell{\method\\(\small{$\lambda = 1$})} &  \makecell{\method\\(\small{$\lambda = 10$})} &  \makecell{\method\\(\small{$\lambda = 50$})} \\
\midrule
       rotation &             0.620 &               0.633 &                \textbf{0.681} &                 0.677\\
    contrast &             0.816 &               \textbf{0.817} &                0.808 &                 0.799 \\
      saturation &             \textbf{0.810} &               0.809 &                0.796 &                 0.778 \\
      blur &             \textbf{0.808} &               0.806 &                0.795 &                 0.793 \\
      
      none &    \textbf{0.828}    & 0.826	  & 0.815    & 0.806     \\
      
\bottomrule
\end{tabular}
\end{table}



\begin{figure}
\captionsetup[subfigure]{justification=centering}
     \centering
     \begin{subfigure}[b]{0.22\textwidth}
         \centering
         \includegraphics[width=\textwidth]{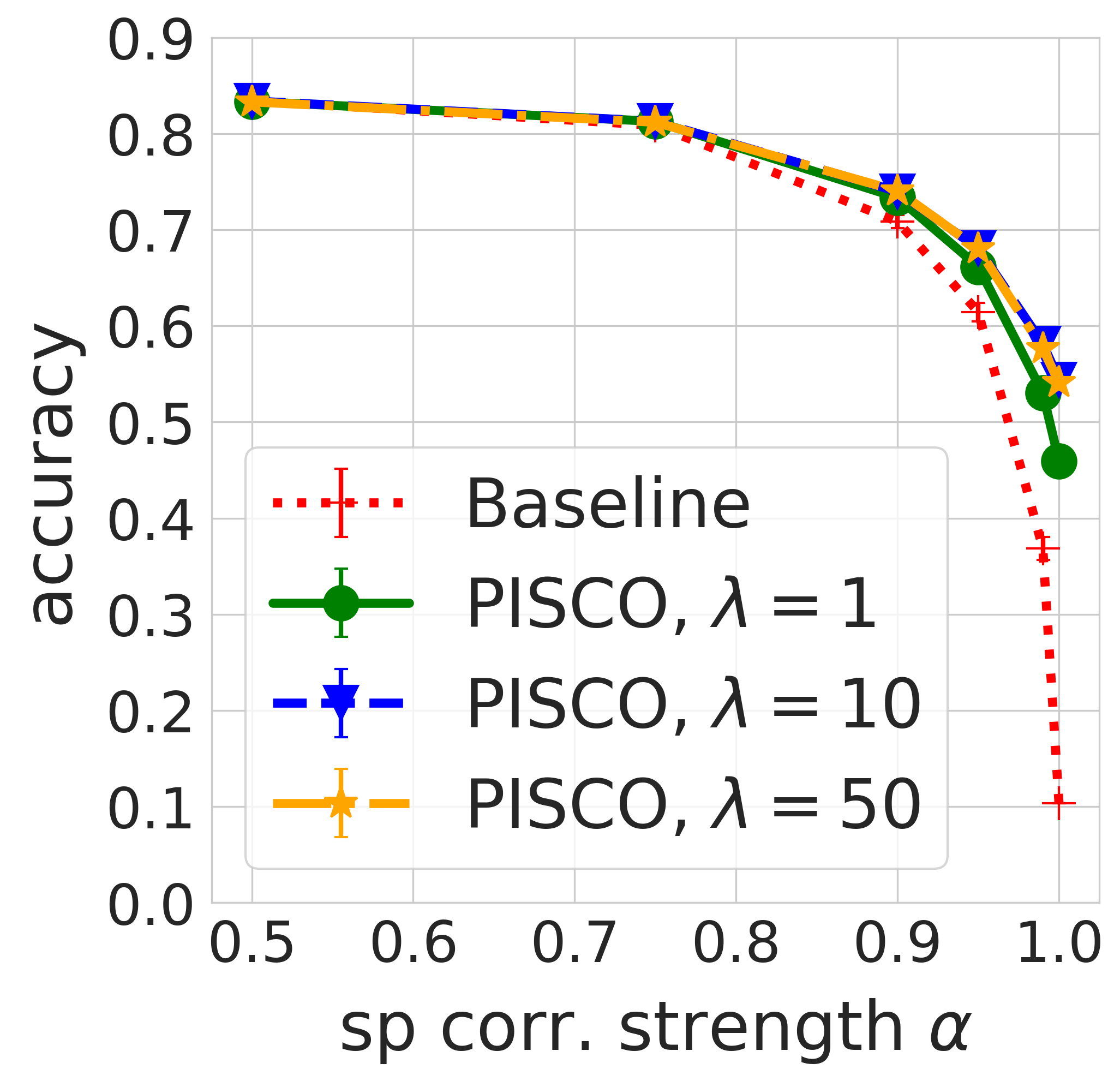}
         \caption{Rotation - \texttt{Supervised}}
         \label{fig:rotat-resnet-supp4}
     \end{subfigure}
     \hfill
     \begin{subfigure}[b]{0.22\textwidth}
         \centering
         \includegraphics[width=\textwidth]{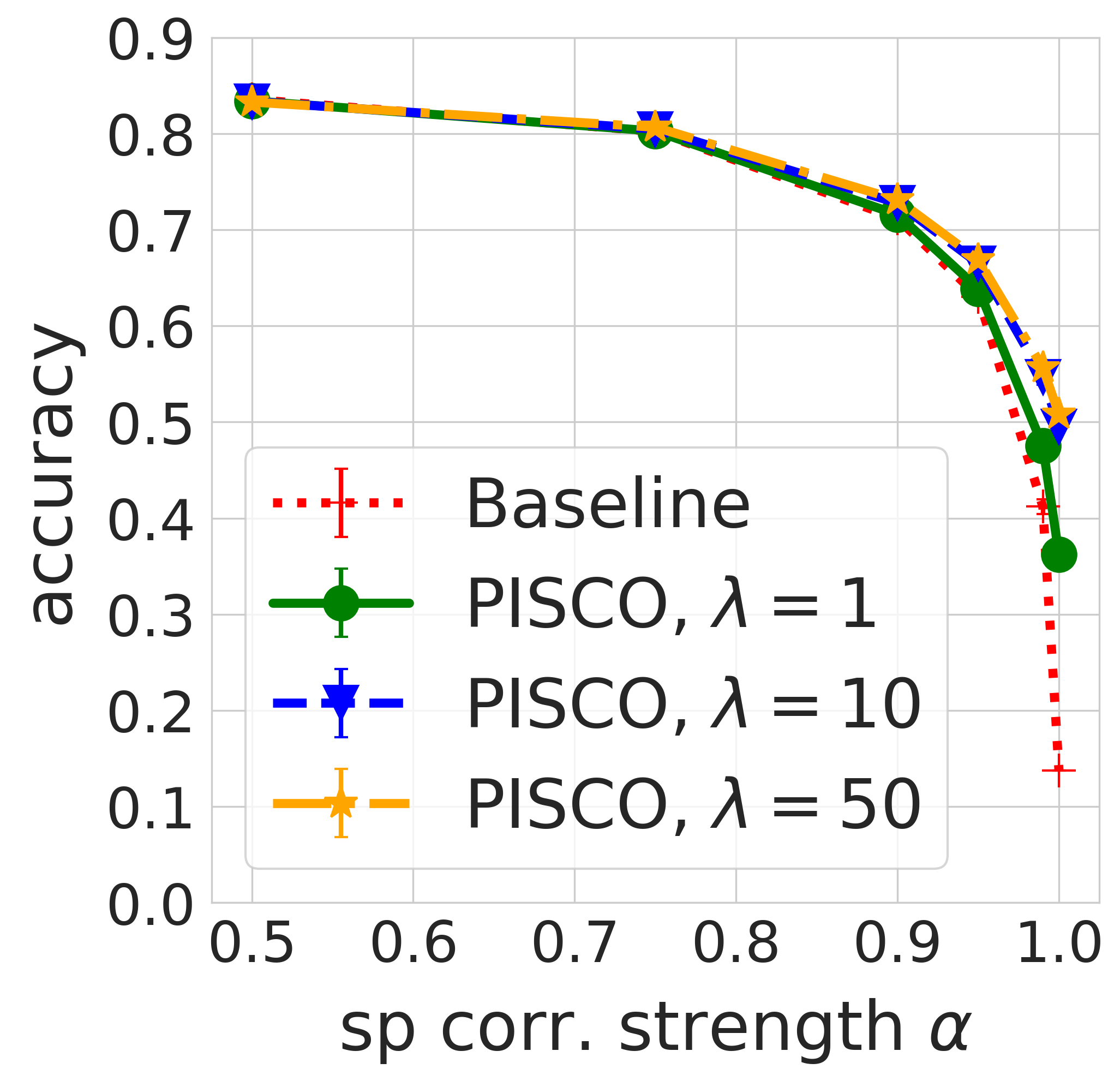}
         \caption{Contrast - \texttt{Supervised}}
         \label{fig:contr-resnet-supp4}
     \end{subfigure}
     \hfill
     \begin{subfigure}[b]{0.22\textwidth}
         \centering
         \includegraphics[width=\textwidth]{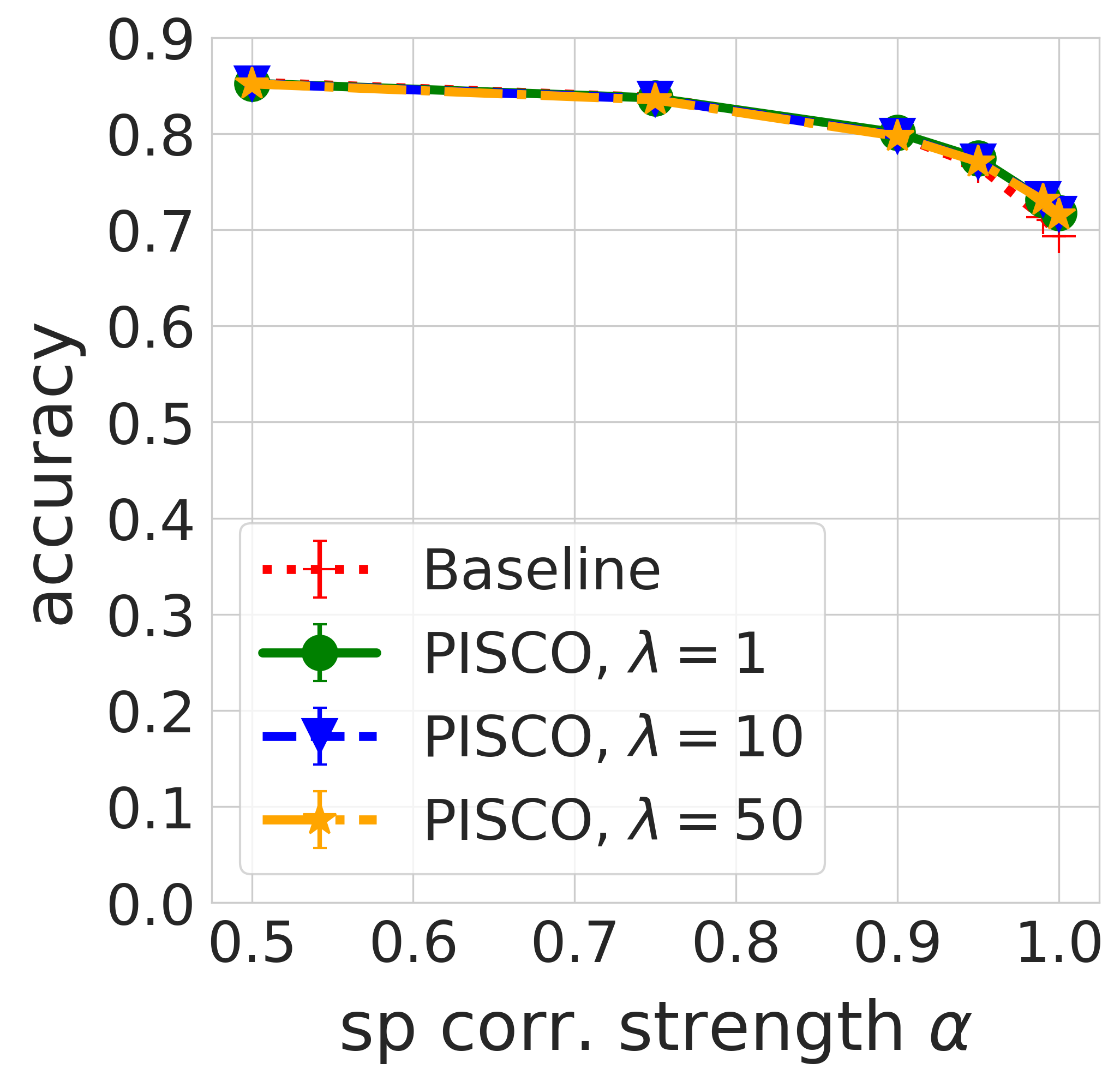}
         \caption{Blur - \texttt{Supervised}}
         \label{fig:blur-resnet-supp4}
     \end{subfigure}
     \hfill
     \begin{subfigure}[b]{0.22\textwidth}
         \centering
         \includegraphics[width=\textwidth]{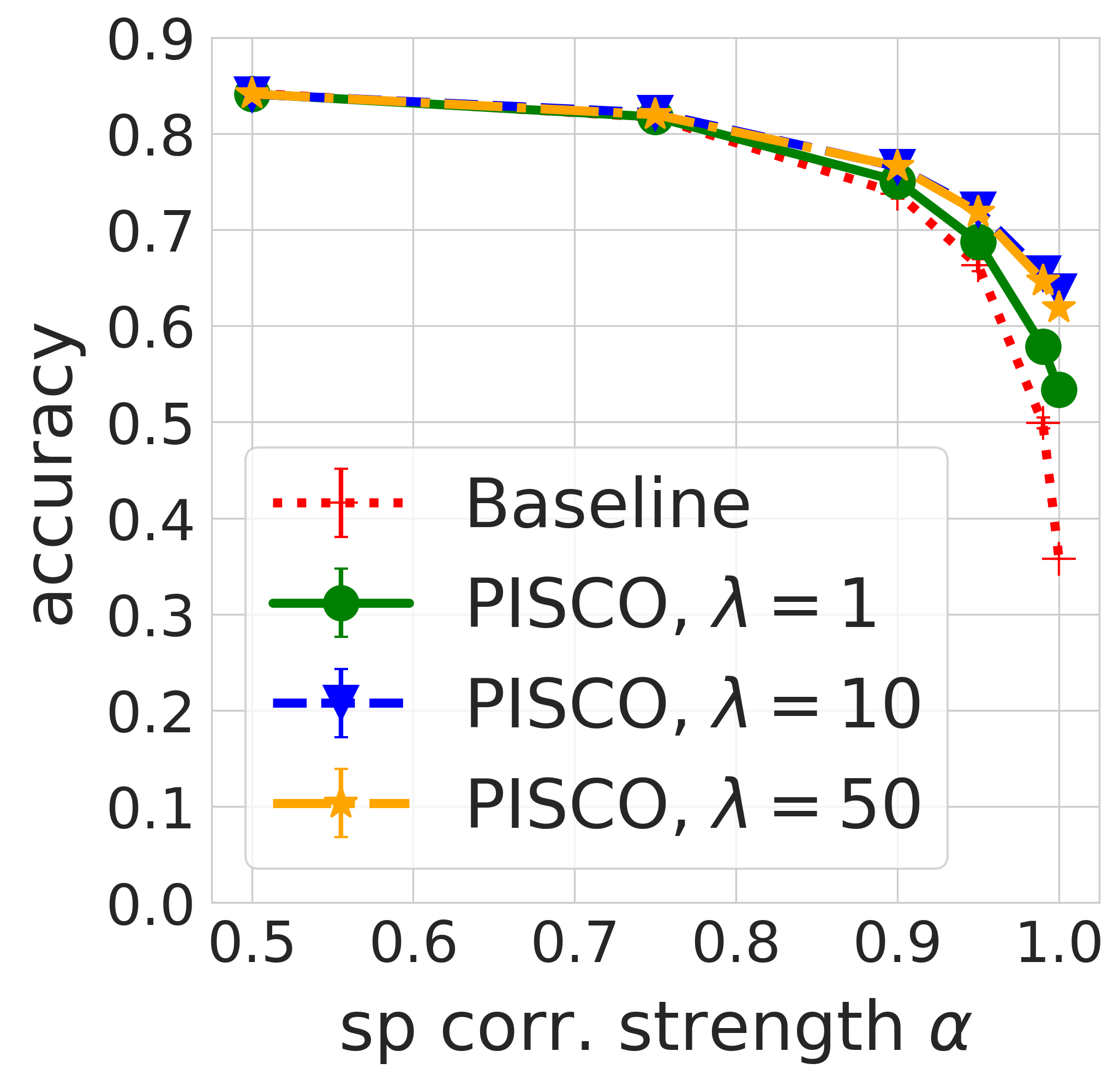}
         \caption{Saturation - \texttt{Supervised}}
         \label{fig:sat-resnet-supp4}
     \end{subfigure}
         \centering
         \caption{$\eta = 1.0$, OOD performance of \texttt{Supervised} representations on CIFAR-10 where the label is spuriously correlated with the corresponding transformation. \method\ significantly improves OOD performance, especially in the case of rotation. Both $\lambda=1$ and $\lambda=10$ preserve in-distribution accuracy, while larger $\lambda=50$ may degrade it as per \eqref{eq:loss}.
         }
         \label{fig:resnet_results_1}
\end{figure}

\begin{figure}
\captionsetup[subfigure]{justification=centering}
     \centering
     \begin{subfigure}[b]{0.22\textwidth}
         \centering
         \includegraphics[width=\textwidth]{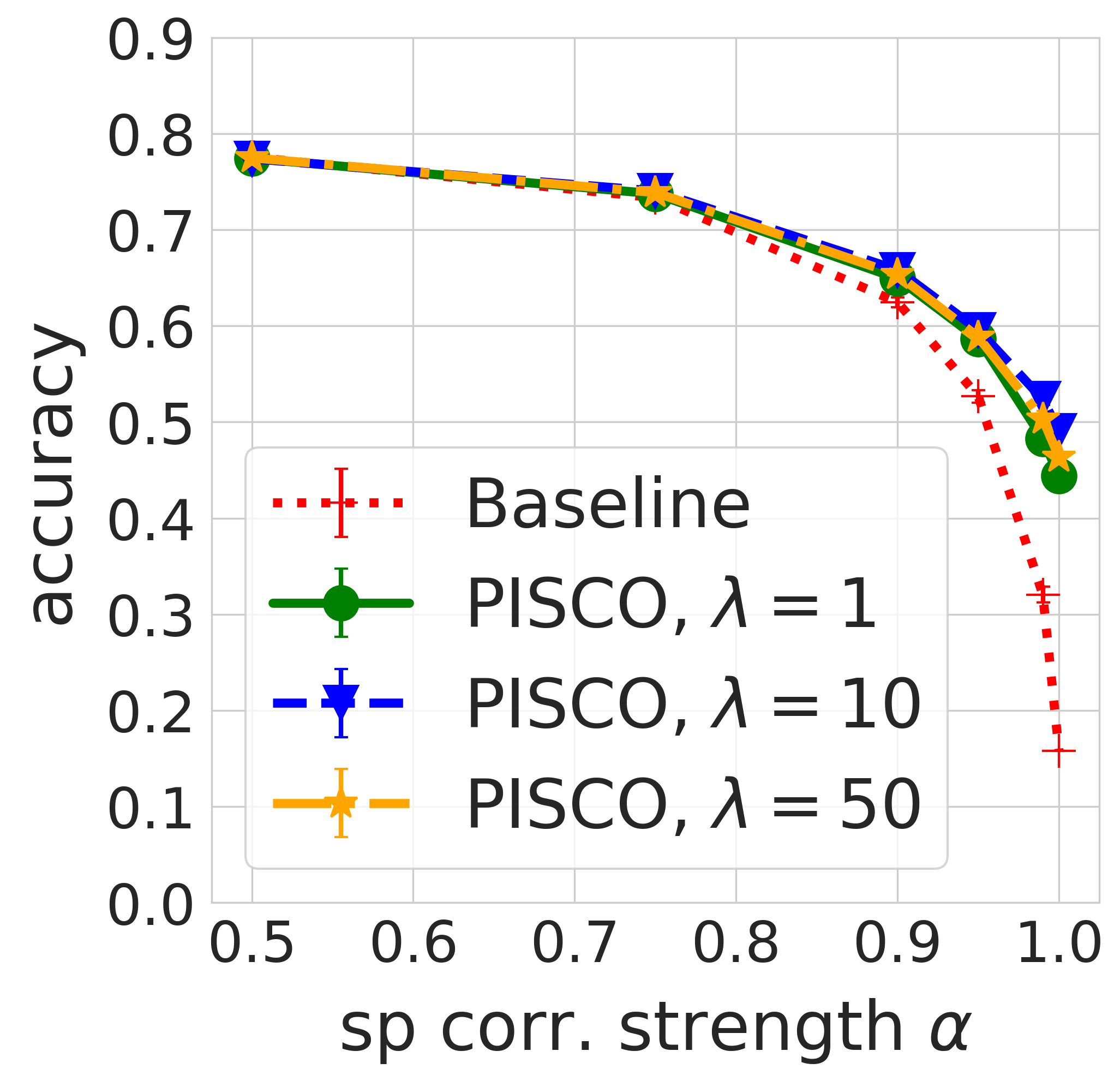}
         \caption{Rotation - \texttt{SimCLR}}
         \label{fig:rotat-simclr-supp4}
     \end{subfigure}
     \hfill
     \begin{subfigure}[b]{0.22\textwidth}
         \centering
         \includegraphics[width=\textwidth]{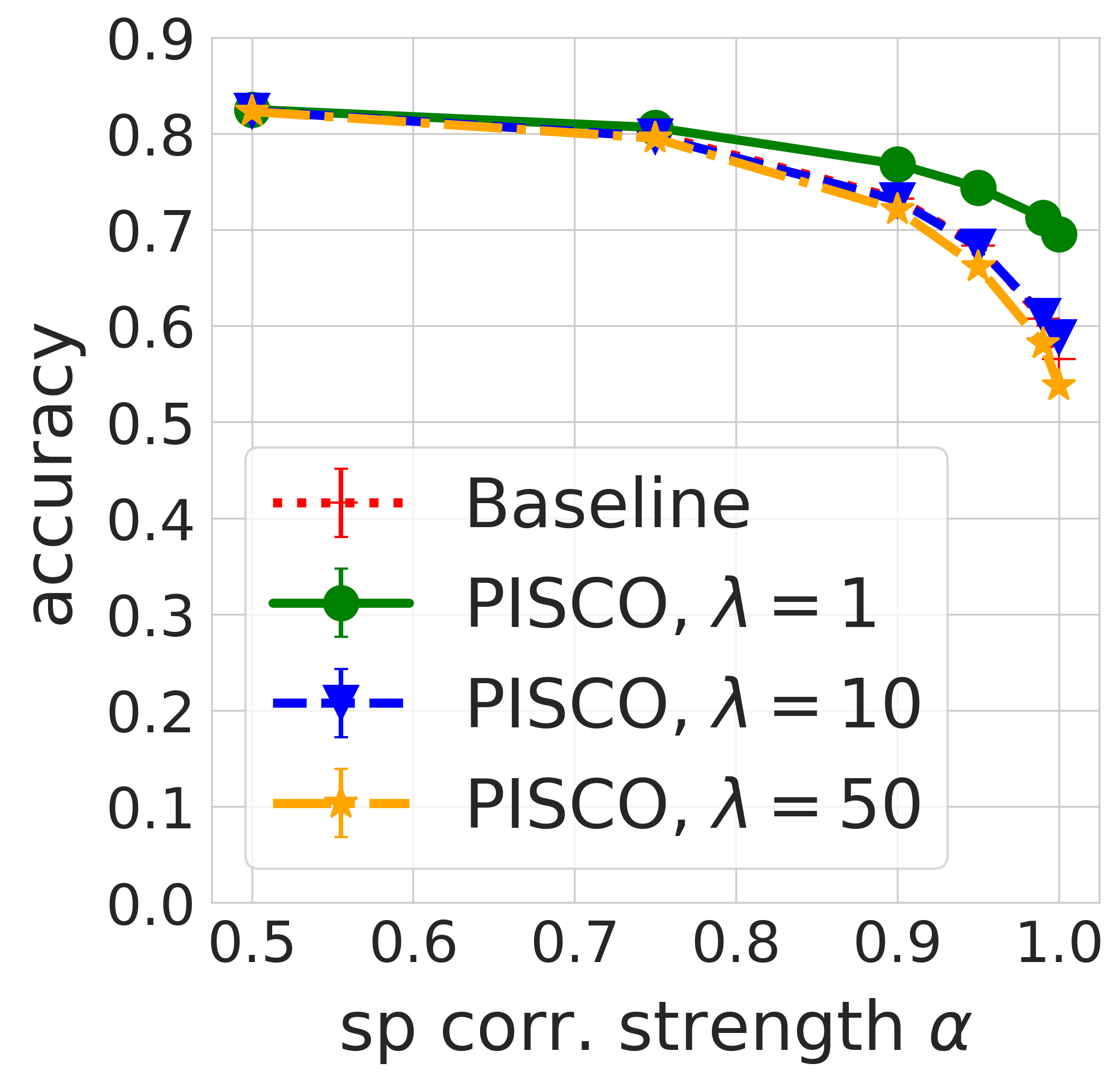}
         \caption{Contrast - \texttt{SimCLR}}
         \label{fig:contr-simclr-supp4}
     \end{subfigure}
     \hfill
     \begin{subfigure}[b]{0.22\textwidth}
         \centering
         \includegraphics[width=\textwidth]{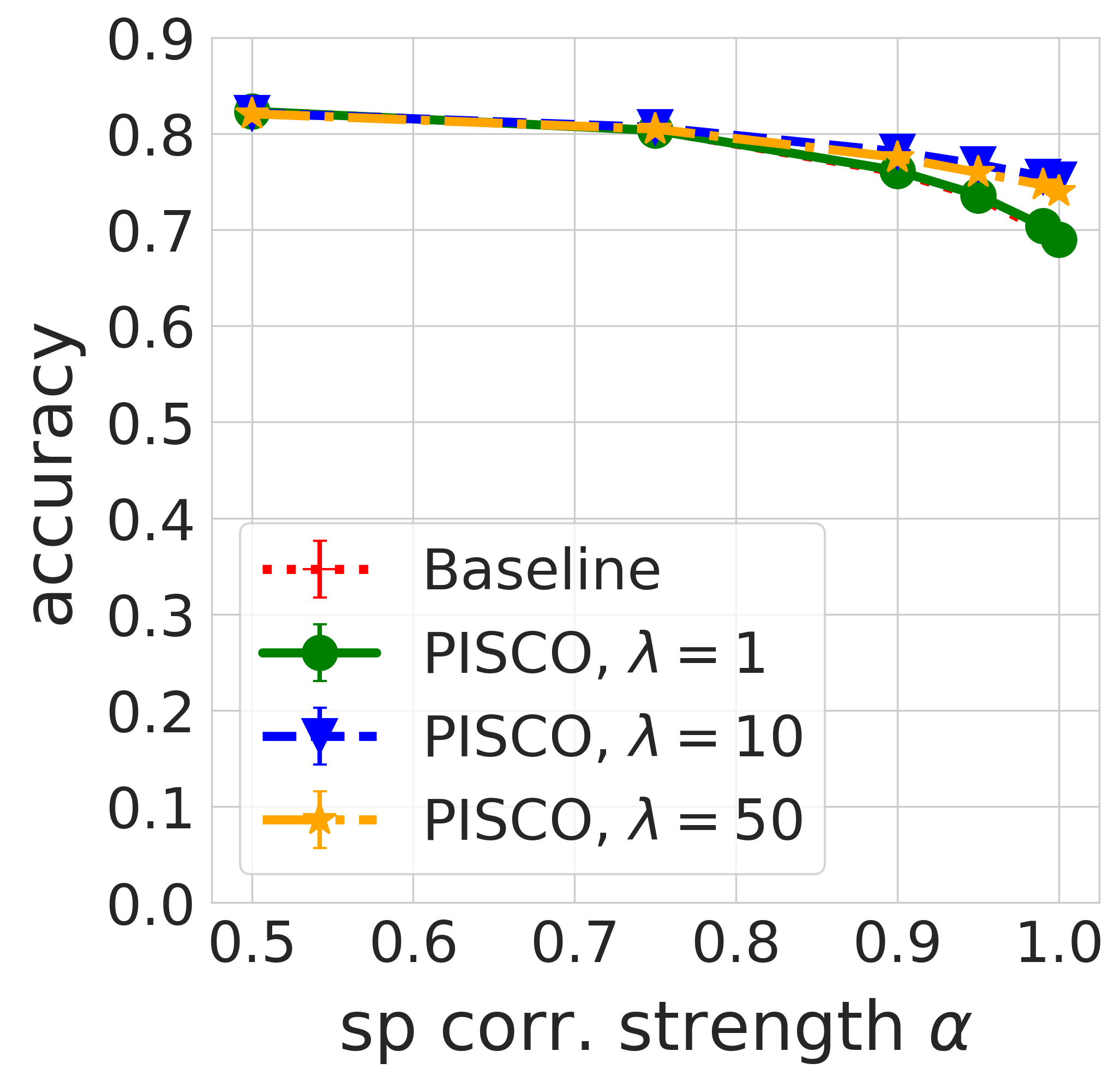}
         \caption{Blur - \texttt{SimCLR}}
         \label{fig:blur-simclr-supp4}
     \end{subfigure}
     \hfill
     \begin{subfigure}[b]{0.22\textwidth}
         \centering
         \includegraphics[width=\textwidth]{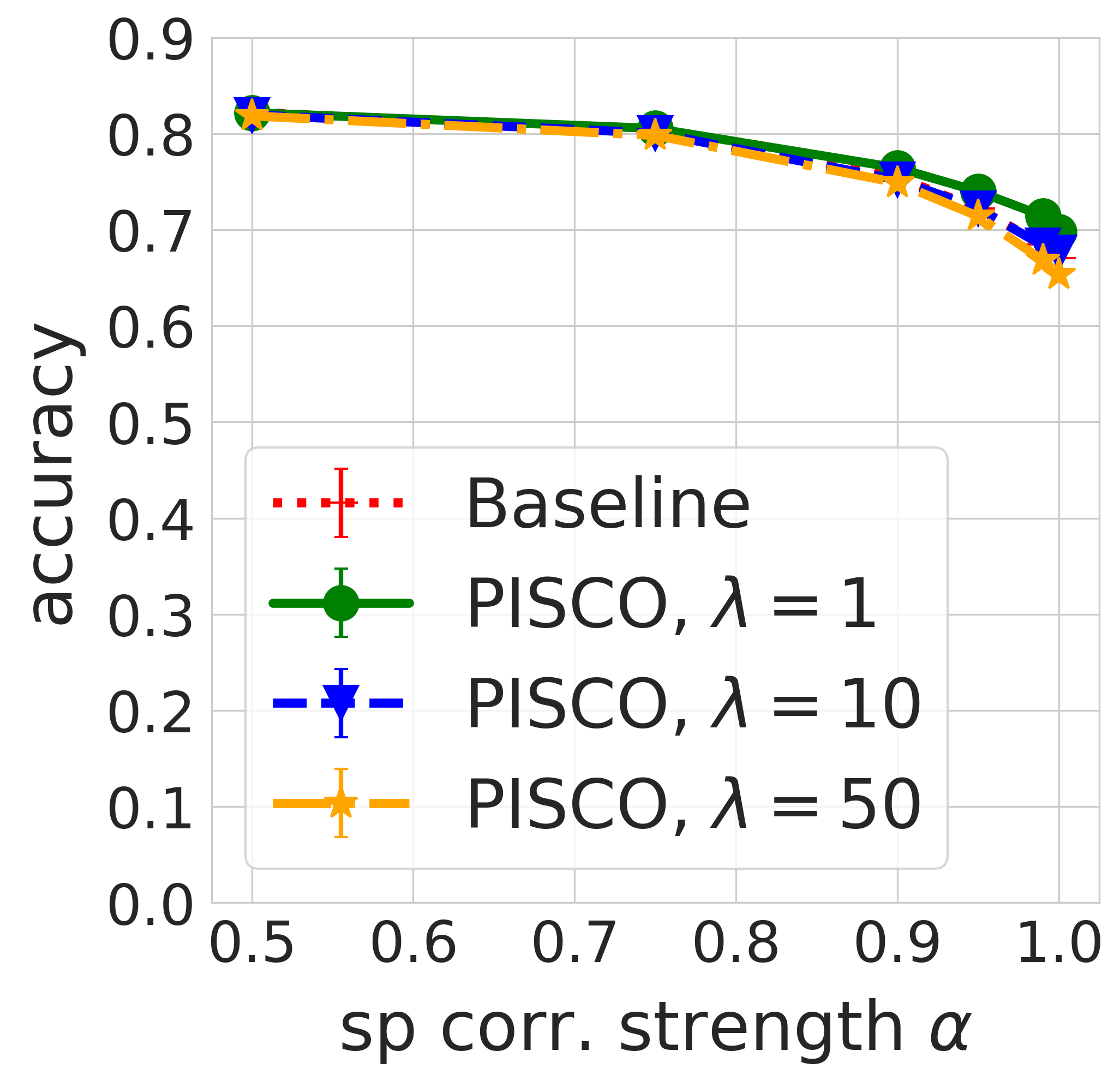}
         \caption{Saturation - \texttt{SimCLR}}
         \label{fig:sat-simclr-supp4}
     \end{subfigure}
         \centering
         \caption{$\eta = 1.0$, OOD performance of \texttt{SimCLR} representations on CIFAR-10 where the label is spuriously correlated with the corresponding transformation. Results are analogous to Figure \ref{fig:resnet_results_1}. The \texttt{SimCLR} baseline representations are less sensitive to contrast and saturation but remain sensitive to rotation.
         }
         \label{fig:simclr_results_1}
        
\end{figure}

\begin{table}
 \caption{$\eta = 1.0$, Performance of \texttt{Supervised} representations on CIFAR-10 test set in-distribution, i.e., no transformation (referred to as ``none''; last row), and OOD when modified with the corresponding transformation. \method\ with $\lambda=1$ provides significant improvements for rotation, contrast, and saturation, while preserving in-distribution accuracy.
 }
 \label{tb:ood_gen_resnet_1}
 \centering

\begin{tabular}{lccccc}
\toprule
Style &  \makecell{Baseline\\\scriptsize{(\texttt{Supervised})}} &  \makecell{\method\\(\small{$\lambda = 1$})} &  \makecell{\method\\(\small{$\lambda = 10$})} &  \makecell{\method\\(\small{$\lambda = 50$})} \\
\midrule
      rotation &  0.678 &               0.736 &                \textbf{0.748} &                0.747                 \\
    contrast & 0.625&               0.669 &                \textbf{0.724} &                0.721  \\
      saturation & 0.699 &               \textbf{0.744} &                0.736 &                0.729   \\
      blur & 0.817 &               \textbf{0.820} &                0.819 &                0.819    \\
     
     none &  \textbf{0.873} & 0.872 & 0.872 &  0.872    \\
\bottomrule
\end{tabular}
\end{table}

\begin{table}
 \caption{
 $\eta = 1.0$, Performance of \texttt{SimCLR} representations on CIFAR-10 test set in-distribution, i.e., no transformation (referred to as ``none''; last row), and OOD when modified with the corresponding transformation. \texttt{SimCLR} features are robust to these transformations and perform similarly to \method\ with $\lambda=1$.
 }
 \label{tb:ood_gen_simclr_1}
 \centering

\begin{tabular}{lccccc}
\toprule
 Style &  \makecell{Baseline\\\scriptsize{(\texttt{SimCLR})}} &  \makecell{\method\\(\small{$\lambda = 1$})} &  \makecell{\method\\(\small{$\lambda = 10$})} &  \makecell{\method\\(\small{$\lambda = 50$})} \\
\midrule
       rotation &             0.620 &               0.629 &                 \textbf{0.647} &                0.624\\
    contrast &             \textbf{0.816} &               \textbf{0.816} &                 0.814 &                0.811 \\
      saturation &             \textbf{0.810} &               0.808 &                 0.806 &                0.805 \\
      blur &             \textbf{0.808} &               0.805 &                 0.806 &                0.802 \\
      
      none &    \textbf{0.828}    & 0.827	  & 0.826    & 0.826    \\
      
\bottomrule
\end{tabular}
\end{table}


\subsection{Additional stylized ImageNet experiment results}
\label{supp:add_imagenet_results}
In this section for the ResNet-50 baseline, for each value of $\eta$, we report results for $\lambda$ values 1, 10, and 50. For the MAE-ViT-Base \citep{he2022masked} baseline, we report additional results for $\lambda$ values 1, 10, and 50.

\paragraph{ResNet-50 results for $\eta = 0.90$:} Results for when $\eta = 0.90$ can be found in Table \ref{tb:ood_gen_imagenet_090}.

\paragraph{ResNet-50 results for $\eta = 0.93$:} Results for when $\eta = 0.93$ can be found in Table \ref{tb:ood_gen_imagenet_093}. 

\paragraph{ResNet-50 results for $\eta = 0.95$:} Results for when $\eta = 0.95$ can be found in Table \ref{tb:ood_gen_imagenet_095}. In the main paper we reported results for $\eta = 0.95$ when $\lambda=1$. Table \ref{tb:ood_gen_imagenet_095} contains results for $\lambda$ value 1, and additional $\lambda$ values 10 and 50.

\paragraph{ResNet-50 results for $\eta = 0.98$:} Results for when $\eta = 0.98$ can be found in Table \ref{tb:ood_gen_imagenet_098}. 

\paragraph{ResNet-50 results for $\eta = 1.0$:} Results for when $\eta = 1.0$ can be found in Table \ref{tb:ood_gen_imagenet_1}. When $\eta = 1.0$, it means the number of features in the baseline is the same as the number of features learned using \method\ and as a result, similar for CIFAR-10 results in \S\ref{supp:add_cifar_results}, we observe the in-distribution performance of \method\ in this case being almost the same as that of baseline methods even for higher values of $\lambda$.

\paragraph{MAE-ViT-Base results for $\lambda$ values 1, 10, and 50:} Results for additional values of $\lambda$ when MAE-ViT-Base is the baseline are in Table \ref{tb:ood_mae_vit_2}.

\paragraph{Overall ImageNet results discussion.} When we vary $\eta$ in ImageNet experiments, we observe behavior similar to what we observed in the CIFAR-10 experiments in \S\ref{supp:add_cifar_results}: \method\ outperforms the baseline across all values of $\eta$ and $\lambda=1$ provides the best \method\ results in all $\eta$ values. An expected observation from the results is as $\eta$ increases, the in-distribution performance of \method\ goes up even for high values of $\lambda$ and its OOD performance slightly goes down.

\begin{table*}
\large
 \caption{$\eta = 0.90$, Top-1 and top-5 accuracies on 5 variations of the ImageNet test set for Baseline pre-trained ResNet-50 features and the corresponding post-processed \method\ features on different values of $\lambda$.
 }
 \label{tb:ood_gen_imagenet_090}
 \centering

\begin{tabular}{lccccccccc} 
\toprule
              Style & \multicolumn{2}{l}{\makecell{Baseline\\\large{(ResNet-50})}} & \multicolumn{2}{l}{\makecell{\method\\\large{($\lambda = 1$})}} & \multicolumn{2}{l}{\makecell{\method\\\large{($\lambda = 10$})}} & \multicolumn{2}{l}{\makecell{\method\\\large{($\lambda = 50$})}} \\
{} & Top-1 &  Top-5 & Top-1 & Top-5 & Top-1 &  Top-5 & Top-1 &  Top-5 \\
\midrule
dog sketch &0.514&0.752&\textbf{0.542} &\textbf{0.769}&0.522 &0.723 & 0.520 &0.717 \\
woman sketch &0.477&0.711&\textbf{0.511}&\textbf{0.743}&0.490&0.693&0.488&0.686 \\
Picasso dog &0.445&0.686&\textbf{0.495}&\textbf{0.730}&0.472&0.672&0.467& 0.665 \\
Picasso s.-p.& 0.474&0.706 & \textbf{0.508}&\textbf{0.737}& 0.490& 0.691& 0.490 &  0.685 \\
none &\textbf{0.758} &\textbf{0.927} &0.743& 0.916 & 0.740 & 0.910 &0.740&0.910 \\

\bottomrule
\end{tabular}
\vspace{-0.2cm}
\end{table*}

\begin{table*}
\large
 \caption{$\eta = 0.93$, Top-1 and top-5 accuracies on 5 variations of the ImageNet test set for Baseline pre-trained ResNet-50 features and the corresponding post-processed \method\ features on different values of $\lambda$.
 }
 \label{tb:ood_gen_imagenet_093}
 \centering

\begin{tabular}{lccccccccc} 
\toprule
              Style & \multicolumn{2}{l}{\makecell{Baseline\\\large{(ResNet-50})}} & \multicolumn{2}{l}{\makecell{\method\\\large{($\lambda = 1$})}} & \multicolumn{2}{l}{\makecell{\method\\\large{($\lambda = 10$})}} & \multicolumn{2}{l}{\makecell{\method\\\large{($\lambda = 50$})}} \\
{} & Top-1 &  Top-5 & Top-1 & Top-5 & Top-1 &  Top-5 & Top-1 &  Top-5 \\
\midrule
dog sketch        &                0.515 &  0.753 &                 \textbf{0.545} &   \textbf{0.775} &                 0.530 &  0.741 &                 0.528 &  0.737 \\
woman sketch  &                0.478 &  0.712 &                 \textbf{0.514} &   \textbf{0.748 }&                 0.500 &  0.710 &                 0.498 &  0.705 \\
Picasso dog      &                0.446 &  0.686 &                 \textbf{0.495} &   \textbf{0.735} &                 0.480 &  0.692 &                 0.479 &  0.688 \\
Picasso s.-p.     &                0.474 &  0.706 &                 \textbf{0.511} &   \textbf{0.742} &                 0.501 &  0.709 &                 0.499 &  0.706 \\
 none                  &                 \textbf{0.757} &   \textbf{0.927} &                0.746 &  0.918 &                 0.742 &  0.914 &                 0.743 &  0.913 \\

\bottomrule
\end{tabular}
\vspace{-0.2cm}
\end{table*}

\begin{table*}
\large
 \caption{$\eta = 0.95$, Top-1 and top-5 accuracies on 5 variations of the ImageNet test set for Baseline pre-trained ResNet-50 features and the corresponding post-processed \method\ features on different values of $\lambda$.
 }
 \label{tb:ood_gen_imagenet_095}
 \centering

\begin{tabular}{lccccccccc} 
\toprule
              Style & \multicolumn{2}{l}{\makecell{Baseline\\\large{(ResNet-50})}} & \multicolumn{2}{l}{\makecell{\method\\\large{($\lambda = 1$})}} & \multicolumn{2}{l}{\makecell{\method\\\large{($\lambda = 10$})}} & \multicolumn{2}{l}{\makecell{\method\\\large{($\lambda = 50$})}} \\
{} & Top-1 &  Top-5 & Top-1 & Top-5 & Top-1 &  Top-5 & Top-1 &  Top-5 \\
\midrule
dog sketch                    &                0.516 &  0.752 &                \textbf{0.546} &  \textbf{0.777} &                 0.534 &  0.751 &                 0.532 &  0.750 \\
woman sketch               &                0.478 &  0.712 &                \textbf{0.518} &  \textbf{0.752} &                 0.506 &  0.723 &                 0.504 &  0.719 \\
Picasso dog                   &                0.445 &  0.686 &                \textbf{0.500} &  \textbf{0.738} &                 0.486 &  0.705 &                 0.485 &  0.702 \\
Picasso s.-p.                  &                0.474 &  0.706 &                \textbf{0.514} &  \textbf{0.747} &                 0.505 &  0.721 &                 0.504 &  0.718 \\
none                                &                \textbf{0.757} &  \textbf{0.927} &                0.749 &  0.921 &                 0.745 &  0.917 &                 0.745 &  0.916 \\
\bottomrule
\end{tabular}
\vspace{-0.2cm}
\end{table*}

\begin{table*}
\large
 \caption{$\eta = 0.98$, Top-1 and top-5 accuracies on 5 variations of the ImageNet test set for Baseline pre-trained ResNet-50 features and the corresponding post-processed \method\ features on different values of $\lambda$.
 }
 \label{tb:ood_gen_imagenet_098}
 \centering

\begin{tabular}{lccccccccc} 
\toprule
              Style & \multicolumn{2}{l}{\makecell{Baseline\\\large{(ResNet-50})}} & \multicolumn{2}{l}{\makecell{\method\\\large{($\lambda = 1$})}} & \multicolumn{2}{l}{\makecell{\method\\\large{($\lambda = 10$})}} & \multicolumn{2}{l}{\makecell{\method\\\large{($\lambda = 50$})}} \\
{} & Top-1 &  Top-5 & Top-1 & Top-5 & Top-1 &  Top-5 & Top-1 &  Top-5 \\
\midrule
dog sketch        &                0.515 &  0.752 &                \textbf{0.553} &  \textbf{0.785} &                 0.541 &  0.769 &                 0.540 &  0.768 \\
woman sketch  &                0.478 &  0.711 &                \textbf{0.520} &  \textbf{0.757} &                 0.512 &  0.740 &                 0.511 &  0.738 \\
Picasso dog      &                0.446 &  0.686 &                \textbf{0.504} &  \textbf{0.744} &                 0.492 &  0.725 &                 0.492 &  0.722 \\
Picasso s.-p.    &                0.474 &  0.706 &                \textbf{0.518} &  \textbf{0.752} &                 0.512 &  0.737 &                 0.512 &  0.737 \\
none                 &                \textbf{0.757} &  \textbf{0.928} &                0.754 &  0.926 &                 0.751 &  0.922 &                 0.751 &  0.921 \\
\bottomrule
\end{tabular}
\vspace{-0.2cm}
\end{table*}

\begin{table*}
\large
 \caption{$\eta = 1.0$, Top-1 and top-5 accuracies on 5 variations of the ImageNet test set for Baseline pre-trained ResNet-50 features and the corresponding post-processed \method\ features on different values of $\lambda$.
 }
 \label{tb:ood_gen_imagenet_1}
 \centering

\begin{tabular}{lccccccccc} 
\toprule
              Style & \multicolumn{2}{l}{\makecell{Baseline\\\large{(ResNet-50})}} & \multicolumn{2}{l}{\makecell{\method\\\large{($\lambda = 1$})}} & \multicolumn{2}{l}{\makecell{\method\\\large{($\lambda = 10$})}} & \multicolumn{2}{l}{\makecell{\method\\\large{($\lambda = 50$})}} \\
{} & Top-1 &  Top-5 & Top-1 & Top-5 & Top-1 &  Top-5 & Top-1 &  Top-5 \\
\midrule
dog sketch                &                0.516 &  0.752 &                \textbf{0.552} &  \textbf{0.787} &                 0.548 &  \textbf{0.787} &                 0.548 &  0.786 \\
woman sketch           &                0.478 &  0.712 &                0.517 &  \textbf{0.756} &                 0.517 &  0.755 &                 \textbf{0.519} &  \textbf{0.756}\\
Picasso dog               &                0.446 &  0.686 &            \textbf{0.501} &  0.742 &                 0.499 &  0.744 &                 \textbf{0.501} &  \textbf{0.745} \\
Picasso s.-p.              &                0.474 &  0.706 &                0.517 &  0.752 &                 \textbf{0.518} &  0.754 &                 0.517 &  \textbf{0.755} \\
none                            &                \textbf{0.758} &  0.928 &                \textbf{0.758} &  \textbf{0.929} &                 \textbf{0.758} &  \textbf{0.929} &                 0.757 &  \textbf{0.929} \\

\bottomrule
\end{tabular}
\vspace{-0.2cm}
\end{table*}


\begin{table*}
\large
 \caption{$\eta = 0.95$, Top-1 and top-5 accuracies on 5 variations of the ImageNet test set for when MAE-ViT-Base features are the Baseline, and the corresponding accuracies for post-processed PISCO features on different values of $\lambda$.
 }
 \label{tb:ood_mae_vit_2}
 \centering

\begin{tabular}{lccccccccc} 
\toprule
              Style & \multicolumn{2}{l}{\makecell{Baseline\\\large{(MAE-ViT-Base})}} & \multicolumn{2}{l}{\makecell{\method\\\large{($\lambda = 1$})}} & \multicolumn{2}{l}{\makecell{\method\\\large{($\lambda = 10$})}} & \multicolumn{2}{l}{\makecell{\method\\\large{($\lambda = 50$})}} \\
{} & Top-1 &  Top-5 & Top-1 & Top-5 & Top-1 &  Top-5 & Top-1 &  Top-5 \\
\midrule
dog sketch         &                   0.530    &  0.749   &                0.575 &  \textbf{0.773} &                 \textbf{0.576} &  0.770 &                 0.576 &  0.770 \\
Picasso dog       &                   0.472     &  0.686  &                0.519 &  \textbf{0.716} &                 \textbf{0.520} &  0.714 &                 \textbf{0.520} &  0.714 \\
Picasso s.-p.      &                   0.512     &  0.727  &                \textbf{0.558} &  \textbf{0.752} &                 \textbf{0.558} &  0.748 &                 \textbf{0.558} &  0.748 \\
woman sketch    &                   0.504    &  0.719   &                \textbf{0.550 }&  \textbf{0.746} &                 0.549 &  0.744 &                 0.549 &  0.744 \\
none                    &                   0.811      &  0.952  &                \textbf{0.818} &  \textbf{0.953} &                 0.817 &  \textbf{0.953} &                 0.817 &  \textbf{0.953} \\

\bottomrule
\end{tabular}
\vspace{-0.2cm}
\end{table*}

\end{document}